\documentclass[11pt,english]{article}
\usepackage[utf8]{inputenc}
\usepackage{amsmath,amsthm,amssymb,amsfonts}
\usepackage{comment}
\usepackage{algorithm}
\usepackage[noend]{algcompatible}
\algnewcommand\algorithmicreturn{\textbf{Return}}
\algnewcommand\RETURN{\State \algorithmicreturn}%

\usepackage{graphicx}
\usepackage[font=footnotesize]{caption}
\usepackage{xcolor}
\usepackage{nicefrac}
\usepackage[shortlabels]{enumitem}
\DeclareMathOperator*{\argmin}{arg\,min}

\usepackage{a4wide}
\usepackage{chngpage}
\usepackage{authblk}

\newtheorem{assumption}{Assumption}
\newtheorem{remark}{Remark}
\newtheorem{theorem}{Theorem}

\newtheorem{lemma}{Lemma}
\newtheorem{definition}{Definition}

\usepackage[T1]{fontenc}
\usepackage{babel}
\usepackage{threeparttable}
\usepackage{booktabs}
\usepackage{etoolbox}
\appto\TPTnoteSettings{\footnotesize}

\usepackage[most]{tcolorbox}

\tcbset{colback=blue!10!white, colframe=blue!50!black, 
        highlight math style= {enhanced, 
            colframe=blue,colback=blue!10!white,boxsep=0pt}
}

\usepackage{mathtools,nccmath,textcomp}
\usepackage[round]{natbib}

\usepackage[backref=page]{hyperref}
\renewcommand*{\backref}[1]{}
\renewcommand*{\backrefalt}[4]{%
    \ifcase #1 %
    \or        (Cited on page~#2.)%
    \else      (Cited on pages~#2.)%
    \fi}
\bibliographystyle{apalike}

\usepackage{bbding}
\usepackage{pifont}

\usepackage{tikz}
\usepackage{pgfplots}
\pgfplotsset{compat=1.18}
\usepackage{subcaption}

\newcommand{\bx}{\mathbf{x}}
\newcommand{\bz}{\mathbf{z}}
\newcommand{\by}{\mathbf{y}}

\newcommand{\R}{\mathbb{R}}
\newcommand{\D}{\mathcal{D}}
\newcommand{\C}{\mathcal{C}}
\newcommand{\N}{\mathbb{N}}
\newcommand{\calN}{\mathcal{N}}
\newcommand{\bL}{\mathbf{L}}

\title{A Unified Framework for Center-based Clustering of Distributed Data}

\author[1]{Aleksandar Armacki
}
\author[2]{Dragana Bajovi\'{c}}
\author[3]{Du\v{s}an Jakoveti\'{c}}
\author[1]{Soummya Kar}
\affil[1]{Carnegie Mellon University, Pittsburgh, PA, USA\\ \texttt{\{aarmacki,soummyak\}@andrew.cmu.edu }}
\affil[2]{Faculty of Technical Sciences, University of Novi Sad, Novi Sad, Serbia\\ \texttt{dbajovic@uns.ac.rs}}
\affil[3]{Faculty of Sciences, University of Novi Sad, Novi Sad, Serbia\\ \texttt{dusan.jakovetic@dmi.uns.ac.rs}}

\date{}

\begin{document}

\maketitle

\begin{abstract}
     We develop a family of distributed center-based clustering algorithms that work over networks of users. In the proposed scenario, users contain a local dataset and communicate only with their immediate neighbours, with the aim of finding a clustering of the full, joint data. The proposed family, termed Distributed Gradient Clustering (DGC-$\mathcal{F}_\rho$), is parametrized by $\rho \geq 1$, controling the proximity of users' center estimates, with $\mathcal{F}$ determining the clustering loss. Our framework allows for a broad class of smooth convex loss functions, including popular clustering losses like $K$-means and Huber loss. Specialized to popular clustering losses like $K$-means and Huber loss, DGC-$\mathcal{F}_\rho$ gives rise to novel distributed clustering algorithms DGC-KM$_\rho$ and DGC-HL$_\rho$, while novel clustering losses based on Logistic and Fair functions lead to DGC-LL$_\rho$ and DGC-FL$_\rho$. We provide a unified analysis and establish several strong results, under mild assumptions. First, we show that the sequence of centers generated by the methods converges to a well-defined notion of fixed point, under any center initialization and value of $\rho$. Second, we prove that, as $\rho$ increases, the family of fixed points produced by DGC-$\mathcal{F}_\rho$ converges to a notion of consensus fixed points. We show that consensus fixed points of DGC-$\mathcal{F}_{\rho}$ are equivalent to fixed points of gradient clustering over the full data, guaranteeing a clustering of the full data is produced. For the special case of Bregman losses, we show that our fixed points converge to the set of Lloyd points. Extensive numerical experiments on synthetic and real data confirm our theoretical findings, show strong performance of our methods and demonstrate the usefulness and wide range of potential applications of our general framework, such as outlier detection.
\end{abstract}

\section{Introduction}

Clustering is an unsupervized learning problem, where the goal is to group the data based on a similarity criteria, without having any prior knowledge of the underlying distribution or the true number of groups, e.g., \cite{clustering-survey,JAIN2010651}. Applications of clustering have a wide range, including domains such as marketing research, text classification, anomaly detection, and biomedical applications, e.g., \cite{clustering-marketing,dhillon2003adivisive,banerjee_anomaly,clustering-biomedical,Huber_clust}. There are many different approaches to clustering, such as center-based \cite{kmeans_awasthi}, density \cite{density-clust} and spectral clustering \cite{kumar_kmeans,kmeans_awasthi}, to name a few. From the point of assignment, clustering  can be hard or soft, with hard clustering assigning each sample to only one cluster, while soft clustering outputs the probability of a sample belonging to each cluster. In this paper we are interested in the center-based hard clustering problem. Formally, for a given dataset $\D = \{y_1,\ldots,y_N \} \subset \R^d$, the problem of partitioning $\D$ into $K$ disjoint clusters is given by
\begin{equation}\label{eq:general-clust}
    \min_{\bx \in \R^{Kd},C \in \mathcal{C}_{K,\D}}\hspace{-0.8em}H(\bx,C) = \sum_{k \in [K]}\sum_{r \in C(k)}w_{r}f(x(k), y_r),
\end{equation} where $\bx = \begin{bmatrix}x(1)^\top & \ldots & x(K)^\top  \end{bmatrix}^\top$ is the vector stacking the $K$ centers $x(k) \in \R^d$, $\C_{K,\D}$ is the set of all $K$-partitions of $\D$, i.e., $C \in \C_{K,\D}$ is a $K$-tuple $C = \begin{pmatrix}C(1), \ldots, C(K) \end{pmatrix}$, with $C(k) \subseteq \D$\footnote{In a slight abuse of notation, we will also use $\D$ to denote the set of indices of the data, i.e., $\D = [N]$.}, $C(k) \cap C(l) = \emptyset$ and $\cup_{k \in [K]}C(k) = \D$, $w_r \in (0,1)$ is a fixed weight associated with sample $r$, such that $\sum_{r \in [N]}w_r = 1$, and $f: \R^d \times \R^d \mapsto [0,\infty)$ a loss function. For example, setting $f$ to be the squared Euclidean norm, one recovers the renowned $K$-means clustering problem \cite{Lloyd,awasthi2016center}. In general, the problem \eqref{eq:general-clust} is highly non-convex and NP-hard, see \cite{kmeans-stability,Kmedians-NP,Vattani2010TheHO,awasthi2015hardness}. As such, the best one can hope for is finding a stationary point of \eqref{eq:general-clust}, with various schemes guaranteeing convergence to stationary points, e.g., \cite{Macqueen67somemethods,Lloyd,JMLR:v6:banerjee05b,Huber_clust,pmlr-v162-armacki22a}. 

The amount of available data has seen a rapid increase recently. For example, it is estimated that $147$ zetabytes of data will be produced worldwide in 2024, a growth of over a $100\%$ compared to the amount in 2020 \cite{data-generated}. Such trends often render storing and processing the data at a single location impossible, making \emph{distributed algorithms instrumental}. Distributed learning is a paradigm where multiple users collaborate to train a joint model. It has been attracting great interest recently, as it offers benefits like privacy, with users' data stored locally, only exchanging model parameters \cite{pmlr-v54-mcmahan17a}. Another benefit is decreased computation and storage burden, as data is handled locally, with smaller chunks of data to be parsed, e.g., \cite{yang-trading,jakovetic-pd}. Communication-wise, distributed algorithms are client-server (CS) \cite{pmlr-v54-mcmahan17a,sahu-fl,kairouz-fl}, or peer-to-peer (P2P) \cite{consensus+innovation1,sayed-networks,vlaski_et_al}, with some important differences. In particular, the CS setup assumes that each user communicates directly with a central server, which coordinates the learning process and aggregates the local updates sent by the users. As all the users are connected to the server, this approach in effect behaves like a centralized learning algorithm, while providing the benefits of privacy, by maintaining the data locally at each user. On the other hand, the P2P setup assumes that there is no central user (i.e., server) that can communicate with all the other users and coordinate the learning process. Instead, users communicate directly with other users from their immediate neighbourhood, defined by an underlying communication graph. As such, P2P methods rely on a blend of local model updates and consensus-style communication, to ensure that information is diffused across the network, e.g., \cite{consensus+innovation1,consensus+innovation2,sayed-networks}. Since no single point of failure (i.e., server) exists in P2P methods, they are inherently more robust to user failure, dropout, as well as unreliable and adversarial users, \cite{tsianos-practical,dist-adverserial,yu-secure}. Figure \ref{fig:network-topologies} visualizes the client-server and peer-to-peer communication setups. We study the peer-to-peer setup and will use the term ``distributed'' to refer to it. We study the P2P setup and will use the term ``distributed'' to refer to this setup.

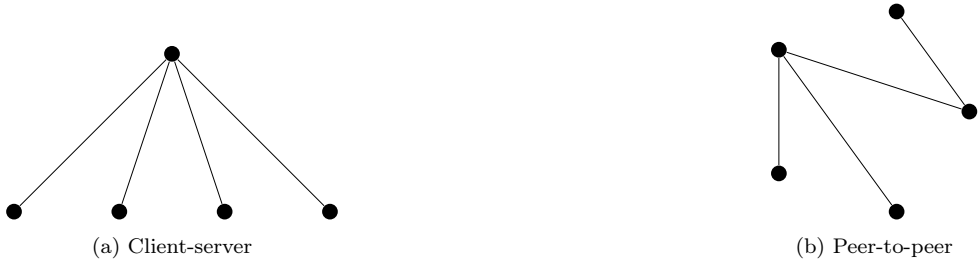
\begin{figure}
  \centering
  \begin{subfigure}{0.4\textwidth}
    \centering
    \scalebox{0.7}{
    \begin{tikzpicture}
      \node[circle, fill, inner sep=3pt] (center) at (5,3) {};
      \foreach \i in {1,...,4} {
        \node[circle, fill, inner sep=3pt] (node\i) at (2*\i,0) {};
        \draw (center) -- (node\i);
      }
    \end{tikzpicture}
    }
    \caption{Client-server}
    \label{fig:star-topology}
  \end{subfigure}
  \hfill
  \begin{subfigure}{0.4\textwidth}
    \centering
    \scalebox{0.7}{
    \begin{tikzpicture}
      \foreach \i in {1,...,5} {
        \node[circle, fill, inner sep=3pt] (node\i) at ({72 * (\i - 1)}:2) {};
      }
      \draw (node1) -- (node2);
      \draw (node1) -- (node3);
      \draw (node3) -- (node5);
      \draw (node4) -- (node3);
    \end{tikzpicture}
    }
    \caption{Peer-to-peer}
    \label{fig:connected-graph}
  \end{subfigure}
  \caption{An example of client-server and peer-to-peer setups in distributed learning. Vertices represents users and edges represent bidirectional communication links.}
  \label{fig:network-topologies}
\end{figure}

The distributed setting presents a unique challenge for clustering, as users store their data locally, making it difficult to produce a clustering of the full, joint dataset. Many recent works focus on the CS setting, e.g., \cite{fedclust3,fedclust4,fedclust5,one_shot_clust,fedclust1,fedclust2}. In comparison, P2P clustering has been studied in proportionally much smaller body of work. \emph{In this paper, we develop a unified approach for P2P (hard) clustering}. We do so by first proposing a general clustering formulation specifically designed for distributed setups, giving rise to distributed versions of popular (centralized) clustering formulations, e.g., $K$-means \cite{Lloyd} and Huber clustering \cite{Huber_clust}. Next, we develop a method that solves the general problem, provably converges, results in novel clustering algorithms when applied to specific losses and produces a clustering of the full data in P2P networks. Our method is general, easy to implement and exhibits strong theoretical and practical performance (see Sections \ref{sec:methods}-\ref{sec:num}).

\textbf{Literature review}. P2P clustering methods have been proposed in \cite{kmean-dynamic,dist-clust-wsn,dist-coresets,oliva2013distributed,dist-fuzzy,kar2019clustering,dist-soft}. The paper \cite{kmean-dynamic} proposes approximate $K$-means algorithms for both P2P and CS networks, with a theoretical study of asymptotic performance guarantees provided for the CS version of the algorithm. The performance guarantees are measured in terms of the deviation of the resulting clustering from a clustering produced by running the $K$-means algorithm on the full, centralized data. Works \cite{dist-clust-wsn,dist-fuzzy,dist-soft} study soft and hard $K$-means. Only the method in \cite{dist-clust-wsn}, built on the Alternating Directions Method of Multipliers (ADMM) framework, shows provable convergence, establishing asymptotic convergence of the sequence of centers to a Karush-Kuhn-Tucker (KKT) point, guaranteeing asymptotic consensus and convergence to a local minima of the hard $K$-means clustering problem. In \cite{dist-coresets}, the authors study $K$-means and $K$-medians problems and propose methods with provable guarantees. In particular, the authors provide guarantees for coreset construction over P2P networks for both $K$-means and $K$-median problems. Using these coresets, the authors then show it is possible to design methods with provable constant approximation guarantees for distributed $K$-means and $K$-medians. Work \cite{oliva2013distributed} studies $K$-means in the special case where users have a single sample. The authors in \cite{kar2019clustering} propose a parametric family of $K$-means methods, establishing asymptotic convergence of centers to Lloyd points, i.e., local minima of the centralized $K$-means problem.Finally, it is worth mentioning \cite{pmlr-v162-armacki22a}, who propose a general framework for gradient-based\footnote{The term ``gradient-based'' is motivated by the center update rule, as is typical in naming center-based methods, like $K$-means \cite{Lloyd}, $K$-medians \cite{arora_kmedians} or Huber clustering \cite{Huber_clust}. Centralized gradient clustering \cite{pmlr-v162-armacki22a} and our distributed gradient clustering are not pure gradient descent methods, as the center-based clustering problem is not jointly differentiable, see \eqref{eq:general-clust}-\eqref{eq:distributed-gen} ahead.} clustering in centralized settings. Our work can be seen as its distributed counterpart, with important differences discussed in Section \ref{sec:methods}. Note that the literature on distributed hard clustering is lacking in methods beyond $K$-means, as the efforts almost exclusively focus on designing variants of $K$-means. It is well known in the centralized setting that, in applications where different properties are desired, e.g., robustness to outliers, \emph{methods beyond $K$-means are required} \cite{Huber_clust,arora_kmedians,JMLR:v6:banerjee05b} (see also a discussion on the usefulness and intuition behind using method beyond $K$-means in Appendix \ref{app:intuition}). While \cite{dist-coresets} provides a distributed $K$-median method, their approach is built on the idea of designing a coreset \cite{coresets}, i.e., creating a set that approximates the full data and using it for training. The construction of the coreset is costly and requires running involved approximation algorithms and users communicating subsets of data. As such, the methods in \cite{dist-coresets} can incur high communication and storage costs, as the size of the coreset scales with the number of users $m$ and desired approximation quality\footnote{The size of coreset is $\mathcal{O}\big(\frac{1}{\epsilon^4}(Kd + \log\frac{1}{\gamma}) + mK\log\frac{mK}{\gamma}\big)$, where $\gamma \in (0,1)$ and $\epsilon > 0$ are the success probability and approximation quality.}. Moreover, their approach requires users to share the local data, which can make them reluctant to participate, due to privacy concerns. Works \cite{dist-clust-wsn,kar2019clustering,pmlr-v162-armacki22a} are closest to ours, as they provide iterative methods which only exchange local center estimates, with asymptotic convergence guarantees. Compared to them, ours is the only work that simultaneously works in P2P networks and supports costs beyond $K$-means.\footnote{The work \cite{dist-clust-wsn} provides methods for soft clustering, however, only a $K$-means metod is designed for distributed hard clustering.} We provide detailed comparisons with these methods in Section \ref{sec:methods} and in numerical simulations in Section \ref{sec:num}.

Another line of work related to ours is that of first-order methods for distributed optimization, e.g., \cite{nedic-subgrad,nedic-geometric,shi-extra,scutari-next,kun_dgd,jakovetic-fast,jakovetic-rates,xin-general,xin-fast,xin-incr,swenson-dsgd}. As the framework proposed in this paper is gradient-based and used to solve a distributed optimization problem (see Sections \ref{sec:problem}-\ref{sec:methods}), it is related to first-order distributed optimization methods, with some key differences. First, we consider the specific problem of clustering, which, apart from \cite{kar2019clustering}\footnote{The analysis in \cite{kar2019clustering} is tailored to the method proposed therein, oblivious to the fact that it is a first-order method. As we discuss in Section \ref{sec:methods}, their method is a special case of ours, applied to the $K$-means cost.}, has not been studied in the context of first-order methods. Second, due to optimizing over both centers and clusters (see Sections \ref{sec:problem}-\ref{sec:methods}), the problem considered in this paper is a combined continuous (centers) and discrete (clusters) problem and the analysis techniques typically used in distributed optimization are not applicable, requiring novel approaches for convergence guarantees.

\textbf{Contributions}. Our contributions are as follows.
\begin{itemize}[leftmargin=*]
\item We propose a general approach for clustering data over P2P networks, dubbed DGC-$\mathcal{F}_\rho$. Our approach is applicable for smooth, convex loss functions, e.g., $K$-means, Huber, Logistic and Fair functions, and general distance metrics, e.g., Euclidean and Mahalanobis distance. DGC-$\mathcal{F}_\rho$ works over any connected communication graph and users only exchange center estimates, with data remaining private. 
    
\item We establish convergence guarantees in the following regimes. For fixed $\rho$, we show that DGC-$\mathcal{F}_\rho$ converges to aptly defined fixed points under any center initialization, making it amenable to initialization schemes like $K$-means$++$, while the clusters \emph{converge in finite time}. As $\rho$ grows, we show that fixed points of DGC-$\mathcal{F}_\rho$ converge to the set of aptly defined consensus fixed points.

\item We show that as $\rho\rightarrow\infty$, the cluster center estimates attain consensus, thus \emph{guaranteeing that clusters converge to a clustering of the full data, for sufficiently large $\rho$}. In the case of Bregman losses (e.g., $K$-means), we show that these limiting consensus fixed points reduce to the classical notion of Lloyd fixed points associated with hard clustering. No assumptions are made on users' data similarity, with data across users possibly highly heterogeneous.
    
\item We extensively verify the performance of our methods on both synthetic and real datasets, showing strong performance across a myriad of scenarios. Moreover, we demonstrate that our method DGC-HL with Huber loss can be used for outlier detection, further underlining the importance of considering methods beyond $K$-means, as well as the usefulness and wide range of potential applications of our general framework for distributed center-based clustering.
\end{itemize}

\textbf{Paper organization}. The rest of the paper is organized as follows. Section \ref{sec:problem} introduces the problem consider in the paper. Section \ref{sec:methods} outlines the proposed family of methods. Section \ref{sec:main} presents the main results. Section \ref{sec:num} provides numerical results. Section \ref{sec:conclusion} concludes the paper. Appendix contains additional details and proofs. We introduce the notation in the remainder of this section.

\textbf{Notation}. The spaces of real numbers and $d$-dimensional vectors are denoted by $\R$ and $\R^d$. The set of non-negative integers is denoted by $\N$. The set of integers up to and including $M$ is denoted by $[M] = \{1,\ldots,M\}$. For a set $A$, $\overline{A}$ denotes the closure of $A$, while $|A|$ denotes the number of elements of $A$. We use $\langle \cdot, \cdot \rangle$ and $\| \cdot \|$ to denote the Euclidean inner product and the induced vector/matrix norm. $\nabla_x f(x,y)$ denotes the gradient of $f$ with respect to $x$. We use $1_d$ and $I_d$ to denote the $d$-dimensional vector of ones and $d \times d$ identity matrix. $\otimes$ denotes the Kronecker product, $A^\top$ denotes transposition and $\overline{\lambda}(A)$ denotes the largest eigenvalue of $A$. $\mathcal{O}(\cdot)$ is the ``big O'', i.e., $a_n = \mathcal{O}(b_n)$ implies there exist $C > 0$ and $n_0 \in \N$, such that $a_n \leq Cb_n$, for all $n \geq n_0$, for $a_n, b_n \geq 0$. Superscripts denote the iteration counter, subscripts denote the user, while the value in the brackets corresponds to the particular center/cluster, e.g., $x^t_i(k)$ stands for the $k$-th center of user $i$, at iteration $t$.

\section{Problem formulation}\label{sec:problem}

Consider a network of $m > 1$ users connected over a graph $G = (V,E)$, where $V = [m]$ is the set of vertices (i.e., users), $E$ is the set of (undirected) edges connecting them, such that $\{i,j\} \in E$ if and only if users $i$, $j$ can communicate. Let $\D_i = \{y_{i,1},\ldots,y_{i,N_i}\}$, $w_{i,r} \in (0,1)$  be the local data and weight associated with the $r$-th point of user $i$, with $\sum_{i \in [m]}N_i = N$ and $\sum_{i,r }w_{i,r} = 1$. In this setup, \eqref{eq:general-clust} is equivalent to
\begin{equation}\label{eq:general-constr}
    \min_{\substack{\bx_i \in \R^{Kd}, \: C_i \in \C_{K,\D_i} \\ \text{subject to }\bx_1 = \ldots = \bx_m}}\sum_{i \in [m]}\sum_{k \in [K]}\sum_{r \in C_i(k)}\hspace{-0.75em}w_{i,r}f(x_i(k),y_{i,r}),
\end{equation} where $\mathcal{C}_{K,\D_i}$ is the set of $K$-partitions of $\D_i$. Formulation \eqref{eq:general-constr} ensures that a clustering of the joint dataset is produced, by synchronizing the center estimates across users via the constraint $\bx_1 = \ldots = \bx_m$, i.e., enforcing that the centers are the same across all users. For \eqref{eq:general-constr} to be well defined and solvable in distributed fashion, we assume the following.

\begin{assumption}\label{asmpt:data}
    The full dataset $\D = \cup_{i \in [m]}\D_i$ has at least $K$ distinct samples.
\end{assumption}

\begin{assumption}\label{asmpt:graph}
    The graph $G = (V,E)$ is connected.
\end{assumption}

\begin{remark}
    Assumption \ref{asmpt:data} is natural, as we aim to find $K$ clusters. No assumptions are made on the local datasets $\D_i$, which can be highly heterogeneous across users, e.g., having different sizes of datasets or containing different classes. Assumption \ref{asmpt:graph} ensures that \eqref{eq:general-constr} can be solved in a distributed fashion, by P2P communication only. It is standard in distributed literature, see \cite{vlaski_et_al} and references therein.
\end{remark}

Note that \eqref{eq:general-constr} is a constrained problem, requiring either global synchronization of users' center estimates, or an involved primal-dual scheme, e.g., as in \cite{dist-clust-wsn}, to be solved. To make it amenable to a simple first-order approach and local communication only, we consider the relaxation
\begin{align}
    \min_{\substack{\bx \in \R^{Kmd}, \\ C \in \C_{m,K,\D}}} J_\rho(\bx,C) &= \sum_{i \in [m]}\sum_{k \in [K]}\Big[ \frac{1}{\rho}\sum_{r \in C_i(k)}w_{i,r}f(x_i(k),y_{i,r}) + \frac{1}{2}\sum_{j \in \calN_i}\|x_i(k) - x_j(k)\|^2 \Big], \label{eq:general-decentr}
\end{align} where $\mathcal{C}_{m,K,\D}$ is the set of all clusterings of the data, i.e., for $C \in \mathcal{C}_{m,K,\D}$, we have $C = (C_1,\ldots,C_m)$, with $C_i \in \mathcal{C}_{K,\D_i}$, $\calN_i = \left\{j \in V: \{i,j \} \in E\right\}$ is the set of neighbours of user $i$ (not including $i$), while $\rho \geq 1$ is a tunable parameter. The formulation \eqref{eq:general-decentr} relaxes \eqref{eq:general-constr}, by considering an unconstrained problem that penalizes the difference of centers among neighbouring users and controls the trade-off between center estimation and proximity, via $\rho$. A similar relaxation was considered in \cite{kar2019clustering}, for the case $f(x,y) = \|x - y\|^2$.

\begin{remark}
    The primary motivation for considering relaxation \eqref{eq:general-decentr} is the ability to solve the relaxed problem in a distributed manner, while enforcing consensus among users via the graph Laplacian term. The idea behind multiplying the clustering part in \eqref{eq:general-decentr} by a factor $1/\rho$, is to slowly strengthen the effect of consensus as $\rho \rightarrow \infty$, ensuring consensus is achieved. An further benefit of \eqref{eq:general-decentr} is the fact that it is much easier to find stationary points of unconstrained problems. As such, it is often faster to obtain a sequence of stationary points of a succession of relaxations, rather than solving the original constrained problem, which is the idea behind a class of methods known as \emph{penalty methods}, see, e.g., \cite{bertsekas2014constrained}.
\end{remark}

\begin{remark}
    The choice of penalty factor $1/2$ for the consensus part in \eqref{eq:general-decentr} is arbitrary and can be replaced by any other constant or adaptive factor, such as $(1 - 1/\rho)$, with the asymptotic behaviour of stationary points remaining the same. As we are interested in the asymptotic behaviour of stationary points when studying consensus guarantees, this implies that we can equivalently consider such a relaxation of problem \eqref{eq:general-constr}. 
\end{remark}

The formulation \eqref{eq:general-decentr} is very general and includes a myriad of clustering loss functions, with some examples given next.\\

\noindent \textbf{Example 1}. \emph{Distributed Bregman clustering}: if the loss in \eqref{eq:general-decentr} is a Bregman distance \cite{BREGMAN1967200}, $f(x,y) = \psi(y) - \psi(x) - \langle \nabla \psi(x),y-x\rangle$, with $\psi: \R^d \mapsto \R$ strictly convex and differentiable, we get a novel problem formulation of distributed Bregman clustering. For the special case $f(x,y) = \|x - y\|^2$, we recover the distributed $K$-means formulation from \cite{kar2019clustering}.\\

\noindent \textbf{Example 2}. \emph{Distributed Huber clustering}: if the loss in \eqref{eq:general-decentr} is the Huber loss \cite{huber_loss}, $f(x,y) = \phi_\delta(\|x - y\|)$, where $\phi_\delta: \R \mapsto [0,\infty)$, for some $\delta > 0$, is given by 
\begin{equation}\label{eq:Huber}
    \phi_\delta(x) = \begin{cases}
        \frac{x^2}{2}, & |x| \leq \delta \\
        \delta |x| - \frac{\delta^2}{2}, & |x| > \delta
    \end{cases},
\end{equation} we get a novel formulation of distributed Huber clustering.\\

\noindent \textbf{Example 3}. \emph{Distributed Logistic clustering}: if the loss in \eqref{eq:general-decentr} is the Logistic loss, i.e., $f(x,y) = \log(1 + \exp(\|x - y\|^2))$, we get a novel problem of distributed Logistic clustering.\\

\noindent \textbf{Example 4}. \emph{Distributed Fair clustering}: if the loss in \eqref{eq:general-decentr} is the ``Fair'' loss $f(x,y) = h_\eta(\|x - y\|)$, where $h_\eta(x) = 2\eta^2(\nicefrac{x^2}{\eta} - \log(1 + \nicefrac{x^2}{\eta})$, for $\eta > 0$, e.g., \cite{Fair-loss}, we get a novel problem of distributed Fair clustering.\\

Note that all formulations are novel, from the perspective of distributed clustering. While Bregman and Huber clustering are popular clustering formulations in the centralized setting, e.g., \cite{JMLR:v6:banerjee05b,Huber_clust,pmlr-v162-armacki22a}, \emph{to the best of our knowledge, losses like Logistic and Fair have not been considered previously, even in the centralized setting}. Using the graph Laplacian matrix $L = D - A$, where $D, A \in \R^{m \times m}$ are the degree and adjacency matrices, see \cite{chung1997spectral,cvetkovic_rowlinson_simic_1997}, and letting $\bx = \begin{bmatrix}\bx_1^\top & \ldots & \bx_m^\top\end{bmatrix}^\top \in \R^{Kmd}$ be the vector stacking users' center estimates $\bx_i \in \R^{Kd}$, we can represent \eqref{eq:general-decentr} as
\begin{equation}\label{eq:distributed-gen}
        \min_{\bx \in \R^{Kmd},  C \in \C_{m,K,\D}}\hspace{-1em}J_\rho(\bx,C) = \frac{1}{\rho}J(\bx,C) + \frac{1}{2}\langle \bx,\bL\bx\rangle,
\end{equation} where $J(\bx,C) = \sum_{i \in [m]}H(\bx_i,C_i)$, with $\bL = L \otimes  I_{Kd}$. To solve \eqref{eq:distributed-gen}, we make the following assumptions.

\begin{assumption}\label{asmpt:coerc} 
The loss $f$ is coercive, convex and $\beta$-smooth with respect to the first argument, i.e., for all $x,y,z\in \R^d$, we have $\lim_{\|x\| \rightarrow \infty}f(x,y) = \infty$, and 
\begin{equation*}
     0 \leq f(x,y) - f(z,y) - \langle \nabla f(z,y), x - z\rangle \leq \frac{\beta}{2}\|x - z\|^2.
\end{equation*}
\end{assumption}    

\begin{remark}
    Assumption \ref{asmpt:coerc} ensures the loss function is well-behaved, with coercivity ensuring center estimates stay close to the dataset $\D$ by not allowing them to grow arbitrarily large, while convexity and smoothness are standard assumptions for gradient-based methods, e.g., \cite{lectures_on_cvxopt}. In the context of clustering, apart from our work and \cite{pmlr-v162-armacki22a}, $\beta$-smoothness has been used in \cite{paul2021uniform,ieee_ghosh}, however, these works differ in that they study statistical guarantees of centralized \cite{paul2021uniform} and client-server (CS) \cite{ieee_ghosh} clustering algorithms, whereas the current work studies convergence guarantees of a clustering algorithm over a finite dataset, distributed across a P2P network.
\end{remark}

\begin{remark}
    Note that Assumption 3 is a property of the \emph{loss function $f$, rather than the dataset we wish to cluster}. In other words, whether Assumption 3 is satisfied depends only on the choice of clustering loss $f$ and is completely independent of the specific dataset used for clustering. All the loss functions used in Examples 1-4 and throughout our work, namely $K$-means, Huber, logistic and fair loss, satisfy Assumption 3. For a formal proof of this claim, see Lemma \ref{lm:asmpt3} in Appendix \ref{app:asmpt3}.
\end{remark}

\section{Proposed family of methods}\label{sec:methods}

In this section we describe the DGC-$\mathcal{F}_\rho$ family of methods proposed to solve the general distributed clustering problem \eqref{eq:distributed-gen}. During training users maintain their current center and cluster estimates. The algorithm starts with users choosing their initial center estimates $\bx_i^0 \in \R^{Kd}$, $i \in [m]$. At iteration $t \geq 0$, each user $i\in [m]$ first forms the clusters, by finding a $k \in [K]$ for each data point $r \in \D_i$, such that the $k$-th center is the closest to the point $r$. For this purpose, we introduce a \emph{novel distance function} $g: \R^d \times \R^d \mapsto [0,\infty)$ (not necessarily the Euclidean distance), seeking $k \in [K]$ such that 
\begin{equation}\label{eq:reassign}
    g(x_i^t(k),y_{i,r}) \leq g(x_i^t(l),y_{i,r}), \: \text{for all } l \ne k,
\end{equation} and assigns $y_{i,r}$ to $C_i^{t+1}(k)$. Here, $g: \R^d \times \R^d \mapsto [0,\infty)$ is a distance function, related to the loss $f$ (see Assumptions \ref{asmpt:met}, \ref{asmpt:g&f} ahead). Next, the centers are updated, by performing $B \geq 1$ updates, i.e., for $b = 0,\ldots,B-1$
\begin{align}
    x_i^{t,b+1}&(k) = x_i^{t,b}(k) - \alpha\bigg( \underbrace{\sum_{j \in \calN_i} \left[x^{t,b}_i(k) - x^{t,b}_j(k) \right]}_{\text{consensus}} + \underbrace{\frac{1}{\rho}\sum_{r \in C_i^{t+1}(k)}w_{i,r}\nabla_x f\left(x_i^{t,b}(k),y_{i,r}\right)}_{\text{innovation}} \bigg), \label{eq:grad_local}
\end{align} where $x_i^{t,0}(k) = x_i^{t}(k)$ and $\alpha > 0$ is a sufficiently small fixed step-size. Finally, the new center is $x_i^{t+1}(k) = x_i^{t,B}(k)$, and the steps are repeated. The procedure is summarized in Algorithm \ref{alg:dist-grad-cl}. Center initialization performed at the outset of training can be done in an arbitrary manner, with each user allowed to initialize their own centers, requiring no synchronization. This allows for significant flexibility and implementing initialization algorithms, like distributed $K$-means$++$ \cite{dist-soft}. Steps 2-5 in Algorithm \ref{alg:dist-grad-cl} outline the cluster update steps, while Steps 7-10 outline the center update steps, using the consensus + innovation framework \cite{consensus+innovation1,consensus+innovation2}.

\begin{remark}\label{rmk:gradient-intuition}
    The main idea behind gradient-based clustering is that it allows us to unify several clustering methods using a general and simple update rule, namely (local) gradient descent. While methods like $K$-means \cite{Lloyd} or Huber clustering \cite{Huber_clust} are usually treated separately and require designing specialized algorithms and update rules, using gradient clustering allows us to unify these seemingly unrelated methods in a simple and elegant framework. Moreover, by including two novel losses, \emph{fair} and \emph{logistic} loss, which, to the best of our knowledge, \emph{have not been considered in the context of clustering}, we show that gradient-based clustering leads to rise of novel clustering methods. Another significant benefit of this approach stems from the fact that while it is not always possible to design closed-form updates for clustering methods, it is always possible to perform updates via gradient descent. For a detailed discussion along these lines, as well as importance and intuition behind methods beyond $K$-means, see Appendix \ref{app:intuition}.
\end{remark}

Note that DGC-$\mathcal{F}_\rho$ uses different functions for cluster assignment and center updates. We assume the following on the relationship between $g$ and $f$.

\begin{algorithm}[!tb]
\caption{DGC-$\mathcal{F}_\rho$}
\label{alg:dist-grad-cl}
\begin{algorithmic}[1]
   \REQUIRE{Step-size $\alpha > 0$, penalty parameter $\rho \geq 1$, number of rounds $T \geq 1$, number of center updates $B\geq 1$, initial centers $\bx_i^0 = \begin{bmatrix}x_i^0(1)^\top & \ldots & x_i^0(K)^\top \end{bmatrix}^\top \in \R^{Kd}$, $i \in [m]$.}
   \FOR{all users $i \in [m]$ in parallel, in round t = 0,1,\ldots,T-1}:
        \STATE Set $C_i^{t+1}(k) \leftarrow \emptyset$, for all $k \in [K]$; 
        \FOR{each $r \in [N_i]$}:
            \STATE Find $k \in [K]$ such that, for all $l \in [K]$: $g(x_i^t(k),y_{i,r}) \leq g(x_i^t(l),y_{i,r})$;
            \STATE Update the cluster: $C_i^{t+1}(k) \leftarrow C_i^{t+1}(k) \cup \{r\}$;
        \ENDFOR
        \STATE Set $x_i^{t,0}(k) \leftarrow x_i^t(k)$;
        \FOR{all clusters $k \in [K]$ in parallel and center update rounds $b = 0,\ldots,B-1$}:
            \STATE Exchange the current center estimates $x_i^{t,b}(k)$ and $x_j^{t,b}(k)$ with neighbours $j \in \calN_i$;
            \STATE $x_i^{t,b+1}(k) \leftarrow x^{t,b}_i(k) - \alpha\left(\frac{1}{\rho}\sum_{r \in C_i^{t+1}(k)}\nabla_x f(x_i^{t,b}(k),y_{i,r}) + \sum_{j \in \calN_i}[x_i^{t,b}(k) - x_j^{t,b}(k)] \right)$;
        \ENDFOR
        \STATE Set $x_i^{t+1}(k) \leftarrow x_i^{t,B}(k)$, for all $k \in [K]$;
   \ENDFOR
   \RETURN{} $(\bx_i^T,C_i^T)$, $i \in [m]$.
    \end{algorithmic}
\end{algorithm}

\begin{assumption}\label{asmpt:met}
The distance function $g$ is a metric, i.e., for all $x,y,z \in \R^d$:
\begin{enumerate}
    \item $g(x,y) \geq 0$ and $g(x,y) = 0$ if and only if $x = y$;
    \item $g(x,y) = g(y,x);$
    \item $g(x,y) \leq g(x,z) + g(z,y)$.
\end{enumerate}  
\end{assumption}

\begin{assumption}\label{asmpt:g&f}
The loss $f$ preserves the ordering with respect to $g$, i.e., for all $x,y,z \in \R^d$
\begin{equation*}
    f(x,y) < f(z,y) \text{ if } g(x,y) < g(z,y) \text{ and } f(x,y) = f(z,y) \text{ if } g(x,y) = g(z,y).
\end{equation*}
\end{assumption}

\begin{remark}
    Assumption \ref{asmpt:met} requires $g$ to be a well-behaved distance function, while Assumption \ref{asmpt:g&f} ensures that the cluster update step \eqref{eq:reassign} does not increase the cost $J_\rho$.
\end{remark}

\noindent \textbf{Example 5}. For $g(x,y) = \|x - y\|$ being the Euclidean distance, losses $f_1(x,y) = \nicefrac{1}{2}g(x,y)^2$, $f_2(x,y) = \phi_\delta(g(x,y))$, $f_3(x,y) = \log[1 + \exp(g(x,y)^2)]$, $f_4(x,y) = h_\eta(g(x,y))$ satisfy Assumptions \ref{asmpt:met}, \ref{asmpt:g&f} and recover Examples 1-4. For $g(x,y) = \sqrt{\langle x - y,A(x - y)}\rangle$ being a Mahalanobis distance, with $A$ positive definite, the losses $f_1$-$f_4$ again satisfy Assumptions~\ref{asmpt:met},~\ref{asmpt:g&f} and give rise to novel Mahalanobis distance distributed clustering methods.

By specializing $f$ and $g$, we get instances of DGC-$\mathcal{F}_\rho$. We now give some examples.

\textbf{DGC-KM$_\rho$}: for $g(x,y) = \|x - y\|$, $f(x,y) = \frac{1}{2}\|x - y\|^2$, we get the DGC-KM$_\rho$ algorithm, with center update equation
\begin{tcolorbox}[ams equation]
    x_i^{t,b+1}(k) =\bigg(1 - \alpha\Big[\nicefrac{1}{\rho}\sum_{r \in C^{t+1}_i(k)}w_{i,r} + |\calN_i| \Big] \bigg)x^{t,b}_i(k) + \frac{\alpha}{\rho}\sum_{r \in C_i^{t+1}(k)}w_{i,r}y_{i,r} + \alpha\sum_{j \in \calN_i} x^{t,b}_j(k).\label{eq:dgc-km}
\end{tcolorbox}

\textbf{DGC-HL$_\rho$}: for $g(x,y) = \|x - y\|$, with $f(x,y) = \phi_\delta(\|x - y\|)$, we get the DGC-HL$_\rho$ algorithm, with center update equation given by 
\begin{tcolorbox}[enhanced, colback=blue!10!white, colframe=blue!50!black]
\begin{equation}
\begin{aligned}
    x_i^{t,b+1}(k) &= \alpha\sum_{j \in \calN_i} x^{t,b}_j(k) + \frac{\alpha}{\rho}\bigg(\sum_{r \in C_{i,n}^{t+1}(k)}w_{i,r}y_{i,r} + \sum_{r \in C_{i,f}^{t+1}(k)}\frac{\delta w_{i,r}y_{i,r}}{\|x_i^{t,b}(k) - y_{i,r}\|}\bigg) \\ 
    &+ \bigg(1 - \alpha\Big[\nicefrac{1}{\rho}\sum_{r \in C_{i,n}^{t+1}(k)}w_{i,r} + \nicefrac{1}{\rho}\sum_{r \in C_{i,f}^{t+1}(k)}\frac{\delta w_{i,r}}{\|x_i^{t,b}(k) - y_{i,r}\|} + |\calN_i| \Big]\bigg) x^{t,b}_i(k),
\end{aligned}
\label{eq:dgc-hl}
\end{equation}
\end{tcolorbox}\noindent where $C_{i,n}^{t+1}(k) = \{r \in C_i^{t+1}(k): \: \|x_i^{t,b}(k) - y_{i,r}\| \leq \delta\}$ the set of points \emph{near} the current center, with $C_{i,f}^{t+1}(k) = \{r \in C_i^{t+1}(k): \: \|x_i^{t,b}(k) - y_{i,r}\| > \delta\}$ the set of points \emph{far} from the current center. 

\textbf{DGC-LL$_\rho$}: for $g(x,y) = \|x - y\|$, with $f(x,y) = \log(1 + \exp(\|x - y\|^2))$, we get the DGC-LL$_\rho$ algorithm, with the center update equation given by
\begin{tcolorbox}[enhanced, colback=blue!10!white, colframe=blue!50!black]
\begin{equation}
\begin{aligned}
    x_i^{t,b+1}(k) &= \frac{\alpha}{\rho}\sum_{r \in C_i^{t+1}(k)}\frac{2w_{i,r}y_{i,r}}{1 + \exp(-\|x_i^{t,b}(k) - y_{i,r}\|^2)} + \alpha\sum_{j \in \calN_i}\Big( x^{t,b}_j(k) - x^{t,b}_i(k)\Big) \\ &+ \bigg(1 - \nicefrac{\alpha}{\rho}\hspace{-1.2em}\sum_{r \in C^{t+1}_i(k)}\hspace{-0.25em}\frac{2w_{i,r}}{1 + \exp(-\|x_i^{t,b}(k) - y_{i,r}\|^2)} \bigg)x^{t,b}_i(k).
\end{aligned}
\label{eq:dgc-ll}
\end{equation}
\end{tcolorbox}

\textbf{DGC-FL$_\rho$}: for $g(x,y) = \|x - y\|$, with $f(x,y) = h_{\eta}(\|x-y\|)$, where we recall that the fair loss is given by $h_\eta(x) = 2\eta^2(\nicefrac{x^2}{\eta} - \log(1 + \nicefrac{x^2}{\eta}))$, where $\eta > 0$, we get the DGC-FL$_\rho$ algorithm, with the center update equation given by
\begin{tcolorbox}[enhanced, colback=blue!10!white, colframe=blue!50!black]
\begin{equation}
\begin{aligned}
    x_i^{t,b+1}(k) &= \frac{\alpha}{\rho}\sum_{r \in C_i^{t+1}(k)}\frac{4w_{i,r}\|x_i^{t,b}(k)-y_{i,r}\|^2}{1 + \nicefrac{\|x_i^{t,b}(k)-y_{i,r}\|^2}{\eta}}y_{i,r} + \alpha\sum_{j \in \calN_i} \Big(x^{t,b}_j(k) - x^{t,b}_i(k) \Big) \\ &+ \bigg(1 - \nicefrac{\alpha}{\rho}\hspace{-1em}\sum_{r \in C^{t+1}_i(k)}\hspace{-1em}\frac{4w_{i,r}\|x_i^{t,b}(k)-y_{i,r}\|^2}{1 + \nicefrac{\|x_i^{t,b}(k)-y_{i,r}\|^2}{\eta}} \bigg)x^{t,b}_i(k).
\end{aligned}
\label{eq:dgc-fl}
\end{equation}
\end{tcolorbox} We now provide some intuition behind our methods and potential applications where different methods can be deployed. If $w_{i,r} = \frac{1}{N}$ for all $i \in [m]$, $r \in [N_i]$, we can see from \eqref{eq:dgc-km} that DGC-KM$_\rho$ assigns uniform weight to each sample, while DGC-HL$_\rho$, DGC-LL$_\rho$ and DGC-FL$_\rho$ assign non-uniform weights. As such, DGC-KM$_\rho$ is well suited to applications where each sample is important and carries equal weight. For example, this is the case with data distributions for which no (or very low) presence of outliers and noisy samples is to be expected, such as light-tailed distributions, like Gaussians. On the other hand, we can see from \eqref{eq:dgc-hl} that DGC-HL$_\rho$ separates the data from cluster $C_i^{t+1}(k)$ into two groups: nearby points in $C_{i,n}^{t+1}(k)$, and faraway points in $C_{i,f}^{t+1}(k)$. Nearby points are again assigned uniform weight, with faraway points assigned the weight $\nicefrac{\delta}{\|x_i^{t,b}(k) - y_{i,r}\|} \in (0,1)$, that decays as $y_{i,r}$ gets farther away from the center. As such, DGC-HL$_\rho$ is well-suited to applications where robustness to outliers is desired, as it assigns a decreasing weight to points farther away from the center, i.e., the perceived outliers. For example, this is the case with data distributions for which moderate (or significant) presence of outliers and noisy samples is to be expected, such as heavy-tailed distributions. We further test this numerically in Section \ref{sec:num}, where we demonstrate that DGC-HL shows robustness to outliers and is well suited to applications such as outlier detection. Next, we can see from \eqref{eq:dgc-ll} that DGC-LL$_\rho$ assigns non-uniform weights to all points, given by $\nicefrac{2}{(1 + \exp(-\|x_i^{t,b}(k) - y_{i,r}\|^2))} \in [1,2)$, which increases as $y_{i,r}$ gets farther away from the center. As such, DGC-LL$_\rho$ can be seen as a fairness promoting algorithm, that aims to exploit information from the whole system by giving higher weight to faraway points, and is well-suited to applications where faraway points, i.e., outliers, carry rare and important information. Finally, we can see from \eqref{eq:dgc-fl} that DGC-FL$_\rho$ also assigns non-uniform weights to all points, given by $\nicefrac{4\|x_i^{t,b}(k)-y_{i,r}\|^2}{(1 + \nicefrac{\|x_i^{t,b}(k)-y_{i,r}\|^2}{\gamma})} \in [0,4\gamma)$, which again increases as $y_{i,r}$ gets farther away from the center. As such, DGC-FL$_\rho$ can again be seen as a fairness promoting algorithm, assigning higher weight to points further from the center, albeit in a slightly more aggressive manner than DGC-LL$_\rho$, as it can potentially disregard the points that perfectly match with the current center, by assigning them weight zero. Therefore, both DGC-LL$_\rho$ and DGC-FL$_\rho$ should be used in applications in which it is reasonable to assume that the data distribution is such that outlier points carry at least as much (or more) information as the ones concentrated around the mean. This can again be the case with heavy-tailed distributions, where we are now more interested in the outlier points.  

\begin{remark}\label{rmk:comm1}
     Communication takes place in step 8 of Algorithm \ref{alg:dist-grad-cl} and is performed $B \geq 1$ times per iteration. On the other hand, the methods in \cite{dist-clust-wsn,kar2019clustering} perform a single round of communication per iteration. As the users in all three methods exchange the same messages (namely cluster centers), it follows that our algorithm achieves the same ($B = 1$), or higher ($B > 1$) communication cost compared to \cite{dist-clust-wsn,kar2019clustering}.
\end{remark}

\begin{remark}
    Computation takes place during steps 2-5 and 7-10 of Algorithm \ref{alg:dist-grad-cl}. Cluster update (steps 2-5) requires finding closest centers for each sample, incurring the same cost as in any other center-based iterative clustering algorithm, e.g., \cite{dist-clust-wsn,pmlr-v162-armacki22a,kar2019clustering}. Center update (steps 7-10) is performed $B \geq 1$ times per iteration, with the main cost coming from evaluating the gradient of $f$ in \eqref{eq:grad_local}. On the other hand, the methods in \cite{dist-clust-wsn,kar2019clustering}, which are desinged specifically for distributed $K$-means clustering, perform a single center update per iteration. In the special case of $K$-means clustering (i.e., using squared Euclidean loss), the cost of evaluating the gradient is equal to the cost of computing cluster means, therefore, our method again either achieves the same ($B = 1$), or higher ($B > 1$) computation cost compared to \cite{dist-clust-wsn,kar2019clustering}.\footnote{For clustering beyond $K$-means, the method in \cite{kar2019clustering} is not applicable, while, by design of ADMM, it might be possible to extend the method from \cite{dist-clust-wsn} to losses beyond $K$-means, however, it would possibly involve performing an $\argmin$ step when computing the centers. This can be significantly more expensive than evaluating $B$ gradients, even with $B \gg 1$, see the discussion in Appendix \ref{app:intuition}. However, the authors in \cite{dist-clust-wsn} only provide a method for $K$-means hard clustering and do not discuss extensions, while we pursue a different, primal only gradient-based approach in our work.}
\end{remark}

\begin{remark}\label{rmk:comm2}
    Increasing $B$ results in the center update more closely approximating the $\argmin$ step, i.e., $\bx^{t+1} \approx \argmin_{\bx \in \R^{Kmd}}J_\rho(\bx,C^{t+1})$. As we show in the Appendix (see Lemmas \ref{lm:decr} and \ref{lm:final-step}), this results in faster decrease in the cost $J_\rho$ and faster convergence of centers. However, increasing $B$ incurs higher computation and communication cost. This trade-off is explored numerically in Section \ref{sec:num}. Additionally, we highlight the advantage of the gradient-based approach to naively using the $\argmin$ center update in Appendix \ref{app:intuition}.
\end{remark}

There are three algorithmic differences between DGC-$\mathcal{F}_\rho$ and centralized gradient clustering in \cite{pmlr-v162-armacki22a}. First, the center update \eqref{eq:grad_local} incorporates the consensus part, as otherwise, users would be performing the centralized gradient clustering on their local data, without collaboration. Second, users can perform multiple center updates per iteration, giving DGC-$\mathcal{F}_\rho$ more flexibility and allowing it to mimic the argmin step, akin to, e.g., Lloyd's algorithm \cite{Lloyd}. Finally, due to the consensus term, centers are updated even when the corresponding clusters are empty, which is not the case for the centralized method. These differences result in additional challenges in the convergence analysis in Section \ref{subsec:convergence}. More importantly, while the convergence guarantee in \cite{pmlr-v162-armacki22a} represents the main result, this is not the case in our work. The convergence guarantee of DGC-$\mathcal{F}_\rho$ in Section \ref{subsec:convergence} ahead shows that a stationary point of the relaxed problem \eqref{eq:general-decentr} is reached, providing no consensus guarantees, meaning that the cluster centers at different users might differ significantly. As the original distributed problem \eqref{eq:general-constr} requires centers across users to be the same, the main challenge and novelty of the paper is the consensus analysis developed in Section \ref{subsec:consensus}, where we show that our algorithm is indeed guaranteed to reach consensus and recover a solution of the original problem (2), as $\rho \rightarrow \infty$.\footnote{There are many results on clustering in federated learning, e.g., \cite{fedclust3,fedclust4,fedclust5,one_shot_clust,fedclust1,fedclust2}, however, federated algorithms in effect behave like centralized ones, keeping track of global center estimates by periodically averaging users' local centers. This significantly differs from the consensus dynamic of P2P methods, ensuring a clustering of the global data is produced by design. That is not the case in the P2P setup in our work, requiring a separate consensus analysis.} The main technical challenge of analyzing consensus in our work compared to other distributed problems, like optimization and estimation \cite{nedic-subgrad,consensus+innovation1,consensus+innovation2,kun_dgd,swenson-dsgd}, stems from the fact that center-based clustering depends on the dynamic of both a continuous (centers) and a discrete (clusters) variable, whereas classical works are typically only concerned with the dynamic of a single type of variable.\footnote{Although \cite{dist-clust-wsn,kar2019clustering} design distributed algorithms for $K$-means clustering, the algorithm in \cite{dist-clust-wsn} is built on the ADMM dynamic and does not require a separate consensus analysis, while the convergence and consensus analyses of the algorithm in \cite{kar2019clustering} rely heavily on the closed forms of center updates and fixed points. Our analysis differs in that we utilize a generic update rule and general properties of the loss $f$ to show that fixed points are well-behaved and consensus is achieved, making it applicable to a broad class of functions.} As such, we provide a novel analysis in Section \ref{subsec:consensus}, carefully exploiting the general properties of the clustering loss (via Assumption \ref{asmpt:coerc}), to show that consensus is achieved asymptotically. 

The method in \cite{kar2019clustering} is a special case of DGC-$\mathcal{F}_\rho$, designed for the squared Euclidean loss and a user, cluster and time varying step-size\footnote{The step-size used in \cite{kar2019clustering} is given by $\alpha^t_i(k) =  \alpha/(\nicefrac{|C_i^{t+1}(k)|}{\rho} + |\calN_i|)$, with $\alpha < \min_{i \in [m]}|\calN_i|/(\max_{i \in [m]}N_i/\rho + \overline{\lambda}(L))$.}. DGC-$\mathcal{F}_\rho$ is much more general, encompassing a variety of clustering algorithms outlined in this section and  allowing multiple center updates per round. Additionally, DGC-$\mathcal{F}_\rho$ is easier to implement, as it uses the same \emph{fixed step-size} for all users\footnote{Our step-size requires knowledge of $\rho$, $\beta$ and $\overline{\lambda}(L)$, see Section \ref{sec:main}. If the cost is $K$-means, with weights $w_{i,r} = 1$, it can be shown that $\beta = \max_{i \in [m]}N_i$, which is similar to $\alpha$ used in the step-size in \cite{kar2019clustering}. In that sense, the step-size in our algorithm requires shared knowledge of the same parameters as in \cite{kar2019clustering} and can be achieved by any gossip algorithm, e.g., \cite{gossip-dimakis}, at the expense of a few extra communication rounds.}. Compared to \cite{kar2019clustering}, whose analysis is tailored to the method proposed therein (i.e., a closed-form update specialized to quadratic costs), we provide a black-box approach, relying on the generic gradient-based update rule. This makes our analysis applicable to a myriad of loss functions, while also making it more challenging, and requires introducing novel concepts, such as consensus fixed points (see Definition \ref{def:consensus-fixed_pt}), as well as utilizing general properties of the loss function and its gradients, rather than the closed-form update expression.

\section{Main results}\label{sec:main}

In this section we present the main results. Section \ref{subsec:prelim} defines \emph{fixed points}, a key concept in our analysis. Section \ref{subsec:convergence} presents convergence guarantees of DGC-$\mathcal{F}_\rho$, establishing convergence of centers to a fixed point and convergence of clusters \emph{in finite time}. Section \ref{subsec:fixed-pt} presents a closed-form expressions for fixed points, when the loss $f$ is a Bregman distance. Section \ref{subsec:consensus} studies the behaviour of fixed points as $\rho \rightarrow \infty$, establishing consensus of centers and convergence of clusters to a clustering of the full data \emph{for finite $\rho$}. For ease of notation, we drop the subscript in gradients, e.g., $\nabla J(\bx,C) \equiv \nabla_\bx J(\bx,C)$. The proofs from this section can be found in Appendix \ref{app:proofs}.

\subsection{Setting up the analysis}\label{subsec:prelim}

In this section we define key concepts used in the analysis.

\begin{definition}\label{def:U}
Let $\bx \in \R^{Kmd}$ be cluster centers. We say that $U_\bx \subset \mathcal{C}_{m,K,\D}$ is the set of optimal clusterings with respect to $\bx$, if for all clusterings $C \in U_\bx$, \eqref{eq:reassign} is satisfied.
\end{definition}

\begin{definition}\label{def:fix-pt}
The pair $(\bx^\star,C^\star)$ is a fixed point of DGC-$\mathcal{F}_\rho$, if: 
\begin{enumerate}
    \item $C^\star \in U_{\bx}$; 
    \item $\nabla J_\rho(\bx^\star,C^\star) = 0$.
\end{enumerate}
\end{definition}

\begin{remark}\label{rmk:stat}
    Definition \ref{def:fix-pt} requires $(\bx^\star,C^\star)$ to be a stationary point of $J_\rho$, in the sense that clusters $C^\star$ are optimal with respect to centers $\bx^\star$ and centers $\bx^\star$ are optimal when clusters $C^\star$ are fixed. As such, it is not possible to further improve the clustering, nor the centers at a fixed point. Recalling that $J_\rho$ is non-convex, reaching a fixed point is the best we can do.   
\end{remark}

\begin{remark}
    Definition \ref{def:fix-pt} provides a general definition of a fixed point with respect to our algorithm DGC-$\mathcal{F}_\rho$. While it includes many possible fixed points, we show that in practice, DGC-$\mathcal{F}_\rho$ only reaches a well-behaved subset of the set of fixed points satisfying Definition \ref{def:fix-pt} (see Lemma \ref{lm:bdd-fix-pt} in Appendix \ref{subsec:proofs3}).
\end{remark}

\begin{definition}\label{def:Ubar}
The set $\overline{U}_\bx \subset \mathcal{C}_{m,K,\D}$ is the set of clusterings, such that: 
\begin{enumerate}
    \item $\overline{U}_\bx \subseteq U_\bx$; 
    \item $\nabla J_\rho(\bx,C) = 0$, for all $ C \in \overline{U}_\bx$.
\end{enumerate} 
\end{definition}

By Definitions~\ref{def:fix-pt} and~\ref{def:Ubar}, $\bx$ can be a fixed point if and only if $\overline{U}_{\bx} \neq \emptyset$. As such, we will call $\bx$ a fixed point if $\overline{U}_{\bx} \neq \emptyset$.

\subsection{Convergence to fixed points}\label{subsec:convergence}

In this section we show that centers produced by DGC-$\mathcal{F}_\rho$ are guaranteed to converge to a fixed point. Recalling Remark~\ref{rmk:stat}, this is, in general, the best one can achieve. 

\begin{theorem}\label{thm:convergence}
    Let Assumptions~\ref{asmpt:data}-\ref{asmpt:g&f} hold. For the step-size $\alpha < 1/(\nicefrac{\beta}{\rho} + \lambda_{\max}(L))$, any initialization $\bx^0 \in \R^{Kmd}$ and $\rho \geq 1$, the sequence of centers $\{\bx^t\}_{t \in \N}$ generated by DGC-$\mathcal{F}_\rho$ converges to a fixed point, i.e., a $\bx^\star \in \R^{Kmd}$, such that $\overline{U}_{\bx^\star} \neq \emptyset$. Moreover, the clusters converge in finite time, i.e., there exists a $t_0 > 0$, such that $U_{\bx^t} \subset U_{\bx^\star}$, for all $t \geq t_0$. 
\end{theorem}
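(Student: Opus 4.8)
The plan is to run a two-block monotone-descent argument on the relaxed cost $J_\rho$, exploiting that one iteration of Algorithm~\ref{alg:dist-grad-cl} updates the discrete variable $C$ and the continuous variable $\bx$ in turn. First I would show each sub-step is non-increasing. For the cluster step, $C^{t+1}$ is obtained by assigning every $y_{i,r}$ to a center minimizing $g(x_i^t(k),y_{i,r})$; by Assumption~\ref{asmpt:g&f} (order preservation) this simultaneously minimizes the per-point loss $f(x_i^t(k),y_{i,r})$, so $C^{t+1}\in U_{\bx^t}$ and, since the Laplacian term does not depend on $C$, $J_\rho(\bx^t,C^{t+1})\le J_\rho(\bx^t,C^t)$. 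For the center step, $\bx\mapsto J_\rho(\bx,C)$ is convex (Assumption~\ref{asmpt:coerc} together with $\bL\succeq0$) and $L_\rho$-smooth with $L_\rho=\beta/\rho+\lambda_{\max}(L)$, because $\nabla^2 J_\rho(\cdot,C)=\tfrac1\rho\nabla^2 J(\cdot,C)+\bL\preceq(\tfrac\beta\rho+\lambda_{\max}(L))I$ (the weights along any block sum to at most $1$). Hence for $\alpha<1/L_\rho$ the descent lemma gives, at each inner step, $J_\rho(\bx^{t,b+1},C^{t+1})\le J_\rho(\bx^{t,b},C^{t+1})-\tfrac{\alpha}{2}\|\nabla J_\rho(\bx^{t,b},C^{t+1})\|^2$; this is the content I expect from Lemmas~\ref{lm:decr}--\ref{lm:final-step}.

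Chaining the two steps yields $J_\rho(\bx^{t+1},C^{t+1})\le J_\rho(\bx^t,C^t)$, so the non-negative sequence $J_\rho(\bx^t,C^t)$ decreases to a limit $J^\star$ and, telescoping the inner bounds, $\sum_{t,b}\|\nabla J_\rho(\bx^{t,b},C^{t+1})\|^2<\infty$. Two consequences follow: $\nabla J_\rho(\bx^t,C^{t+1})\to0$, and $\bx^{t+1}-\bx^t=-\alpha\sum_{b=0}^{B-1}\nabla J_\rho(\bx^{t,b},C^{t+1})\to0$. Next I would establish boundedness of $\{\bx^t\}$: the bound $\tfrac1\rho J(\bx^t,C^{t+1})\le J_\rho(\bx^0,C^0)$ and coercivity of $f$ keep every center attached to a non-empty cluster in a fixed compact set, while a center whose cluster is momentarily empty is updated by the consensus term only, i.e.\ by a convex combination of neighbouring centers (using $\alpha|\calN_i|<1$, which follows from $\alpha<1/\lambda_{\max}(L)$ and $\lambda_{\max}(L)\ge\max_i|\calN_i|$), hence stays in their convex hull. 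Boundedness, $\nabla J_\rho\to0$, and finiteness of $\mathcal{C}_{m,K,\D}$ then show every limit point is a fixed point: along a convergent subsequence $\bx^{t_n}\to\bar\bx$ pass to a further subsequence on which $C^{t_n+1}\equiv\hat C$ is constant; taking limits in the (non-strict) defining inequalities of $U_{\bx^{t_n}}$ gives $\hat C\in U_{\bar\bx}$, and continuity of $\nabla J_\rho(\cdot,\hat C)$ gives $\nabla J_\rho(\bar\bx,\hat C)=0$, so $\hat C\in\overline U_{\bar\bx}\ne\emptyset$.

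The main obstacle is upgrading ``every limit point is a fixed point'' to convergence of the whole sequence to a single $\bx^\star$. Since $\bx^{t+1}-\bx^t\to0$ the limit set is compact and connected, but the fixed-point set need not be discrete: for losses such as Huber the map $\bx\mapsto J_\rho(\bx,C)$ is convex but not strictly convex, and cluster indices empty at all users leave $J_\rho(\cdot,C)$ flat along the consensus direction, so a continuum of minimizers at level $J^\star$ is a priori possible and the iterates could drift along it. To close this gap I would combine the finite-time stabilization of the clustering with the block structure of the update. Using the subsequential argument together with finiteness, I would first show $\{U_{\bx^t}\}$ is eventually contained in a single $U_{\bx^\star}$, so that past some $t_0$ the centers are driven by a fixed convex objective; on each block the dynamics is then either gradient descent on a function whose block Hessian $\tfrac1\rho D_k+L$ is positive definite (non-empty cluster, by connectivity of $G$, as in the Bregman case of Section~\ref{subsec:fixed-pt}) or pure linear consensus (empty cluster), and in both regimes the block converges to a unique point. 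The delicate point I expect to spend the most care on is ruling out drift along flat directions before stabilization; I would handle it by showing the per-iteration cost decrease controls $\|\bx^{t+1}-\bx^t\|$ strongly enough to make the increments summable (a {\L}ojasiewicz/Kurdyka-type inequality for the piecewise-smooth convex objective, or the explicit quadratic structure in the Bregman case), whence $\{\bx^t\}$ is Cauchy and $\bx^t\to\bx^\star$.

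Finally, given $\bx^t\to\bx^\star$, the finite-time cluster claim $U_{\bx^t}\subset U_{\bx^\star}$ for $t\ge t_0$ drops out of the same limit-of-inequalities argument: if some $C\in U_{\bx^t}\setminus U_{\bx^\star}$ occurred for infinitely many $t$, finiteness of $\mathcal{C}_{m,K,\D}$ would force a fixed such $C$ along a subsequence converging to $\bx^\star$, and passing to the limit in its defining inequalities would place $C\in U_{\bx^\star}$, a contradiction.
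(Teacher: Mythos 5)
Your first half is sound and essentially matches the paper: the cluster step is non-increasing by Assumption~\ref{asmpt:g&f}, the center step is a descent step on the convex, $\beta_{L,\rho}$-smooth map $\bx\mapsto J_\rho(\bx,C^{t+1})$ (the paper's Lemmas~\ref{lm:J_rho-co-coerc}--\ref{lm:decr}), telescoping gives summable gradient norms, and your boundedness argument (coercivity pins down every center attached to a non-empty cluster, while empty-cluster centers move by pure consensus, i.e.\ convex combinations, since $\alpha<1/\lambda_{\max}(L)\le 1/\max_i|\calN_i|$) is a clean, valid alternative to the contradiction argument in the paper's Lemma~\ref{lm:bdd-seq}. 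Your direct subsequential argument (constant clustering along a sub-subsequence, then limits of the non-strict assignment inequalities and continuity of $\nabla J_\rho(\cdot,\hat C)$) correctly replaces the paper's Lemma~\ref{lm:fix_pt}, and your final paragraph on clusters is valid \emph{once} full convergence is known.

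The genuine gap is exactly where you flag it: upgrading subsequential to full-sequence convergence. Your plan is to (a) first show the selected clusterings stabilize, then (b) run fixed-objective convergence. But (a) is asserted, not proved: ``the subsequential argument together with finiteness'' does not deliver it, because the set of fixed points can be a continuum, and with $\bx^{t+1}-\bx^t\to 0$ the iterates could a priori drift along a connected set of fixed points with the selected clustering switching infinitely often; finiteness of $\mathcal{C}_{m,K,\D}$ alone does not exclude this, and cluster stabilization is essentially equivalent to the convergence you are trying to prove, so the plan is circular. Your fallback tools also fail under the stated hypotheses: a Kurdyka--{\L}ojasiewicz inequality is \emph{not} implied by Assumption~\ref{asmpt:coerc} (coercive, convex, smooth does not give KL), positive definiteness of the block Hessian $\tfrac1\rho D_k+L$ fails for the Huber loss (singular Hessian for far points) and the Fair loss (vanishing Hessian at $x=y$), and the ``explicit quadratic structure'' covers only the Bregman case, whereas the theorem is stated for general losses. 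The paper closes this loop differently, in one induction (Lemma~\ref{lm:final-step}): fix a limit point $\bx^\star$ and set $\epsilon_1=\min_{C\in U_{\bx^\star}\setminus\overline U_{\bx^\star}}\|\nabla J_\rho(\bx^\star,C)\|>0$ (finiteness); once $t$ is large enough that $J_\rho(\bx^t,C^t)\le J_\rho^\star+\tfrac{c(\alpha)}{2}(\epsilon_1-\beta_{L,\rho}\epsilon)^2$, where $J_\rho^\star$ is the limit of the monotone cost sequence and $c(\alpha)=\alpha(1-\alpha\beta_{L,\rho}/2)$, and $\|\bx^t-\bx^\star\|<\epsilon$, any selected clustering $C^{t+1}\in U_{\bx^\star}\setminus\overline U_{\bx^\star}$ would satisfy $\|\nabla J_\rho(\bx^t,C^{t+1})\|\ge\epsilon_1-\beta_{L,\rho}\epsilon$, and the descent inequality would push the cost strictly below $J_\rho^\star$, a contradiction; hence $C^{t+1}\in\overline U_{\bx^\star}$, so $\bx^\star$ is a stationary (hence, by convexity, minimizing) point of $J_\rho(\cdot,C^{t+1})$, and co-coercivity (Baillon--Haddad, from convexity plus smoothness) makes each inner gradient step non-expansive about $\bx^\star$, giving $\|\bx^{t+1}-\bx^\star\|\le\|\bx^t-\bx^\star\|<\epsilon$. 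This capture mechanism --- near the limit point and near the limit cost, only clusterings for which $\bx^\star$ is already stationary can be selected, and for those the step is non-expansive --- simultaneously yields full convergence and finite-time cluster stabilization without any strict convexity, nondegeneracy, or KL assumption; it is the one idea missing from your proposal.
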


In order to prove Theorem \ref{thm:convergence}, a series of lemmas are introduced, similarly to the approach in \cite{pmlr-v162-armacki22a}. However, due to the problem being distributed and the algorithmic differences discussed in Section \ref{sec:methods}, the majority of proofs from \cite{pmlr-v162-armacki22a} are not directly applicable and careful modifications of their arguments is needed. The lemmas can be found in Appendix \ref{app:proofs}. Some remarks are now in order.

\begin{remark}
    The condition on the step-size in Theorem \ref{thm:convergence} is typical of iterative optimization methods using a fixed step-size, expressed in terms of problem related constants, such as the smoothness and penalty parameters $\beta$ and $\rho$, as well as the largest eigenvalue of graph Laplacian matrix, $\lambda_{\max}(L)$, see \cite{kun_dgd,xin-fast,lectures_on_cvxopt} and references therein.
\end{remark}

\begin{remark}
    Theorem \ref{thm:convergence} states that the sequence of centers generated by DGC-$\mathcal{F}_\rho$ is guaranteed to converge to a fixed point, for any center initialization. This is a strong result, which provides great flexibility, in the sense that no synchronization of users' initial centers is needed beforehand, while also allowing for initialization schemes such as distributed $K$-means$++$ \cite{dist-soft} to be implemented.
\end{remark}

\begin{remark}
    While the convergence of centers in Theorem \ref{thm:convergence} is asymptotic in nature, it still guarantees that clusters converge in finite time, which is of great importance, as in practice, we are often interested only in the clustering of the data. In that sense, Theorem \ref{thm:convergence} guarantees that \emph{a solution will be provided in finite time}.   
\end{remark}

\subsection{Fixed point analysis - Bregman clustering}\label{subsec:fixed-pt}

In this section we provide closed-form expressions for fixed points of DGC-$\mathcal{F}_\rho$, when the loss is Bregman, i.e., $f(x,y) = \psi(y) - \psi(x) - \langle \nabla \psi(x), y - x \rangle$, with $\psi$ strictly convex. To apply DGC-$\mathcal{F}_\rho$, we require a distance $g$ satisfying Assumptions \ref{asmpt:met}, \ref{asmpt:g&f}. As noted in \cite{pmlr-v162-armacki22a}, it is possible to find such $g$ for many Bregman distances, via \cite{chen_bregman_metrics,bregman_triangle}, which state that a class of Bregman distances are squares of distance metris. Some examples of such Bregman distances are given in Appendix \ref{app:proofs}. We make the following assumption on the Bregman inducing function $\psi$.

\begin{assumption}\label{asmpt:psi}
    $\psi$ is strictly convex, twice differentiable and induces a Bregman distance that is a square of a metric. 
\end{assumption}

\begin{lemma}\label{lm:fix-pt}
    Let $f(x,y)$ be a Bregman distance, with $\psi$ satisfying Assumption \ref{asmpt:psi}. Then, any fixed point $(\bx^\star,C^\star)$ of DGC-$\mathcal{F}_\rho$ satisfies, for all $i\in[m], \: k\in[K]$
    \begin{equation*}
        x^\star_i(k) = P_{i,k}^{-1}\Big(\nicefrac{1}{\rho}\nabla^2\psi(x_i^\star(k))\hspace{-0.6em}\sum_{r \in C_i^\star(k)}\hspace{-0.6em}w_{i,r}y_{i,r} + \sum_{j \in \calN_i}x_j^\star(k)\Big),
    \end{equation*} where $P_{i,k} = \frac{1}{\rho}\nabla^2\psi(x_i^\star(k))\sum_{r \in C_i^\star(k)}w_{i,r} + |\calN_i|I_{d}$.
\end{lemma}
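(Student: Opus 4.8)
The plan is to unpack the two conditions in Definition \ref{def:fix-pt} directly, reducing the gradient stationarity condition to a per-user, per-center linear system whose unique solution is the claimed expression. The first condition $C^\star \in U_{\bx^\star}$ plays no role beyond fixing which clusters $C_i^\star(k)$ appear in the formula, so essentially all the work lies in the second condition $\nabla J_\rho(\bx^\star,C^\star) = 0$.

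First I would record the component form of the stationarity condition. Writing $J_\rho(\bx,C) = \frac{1}{\rho}J(\bx,C) + \frac{1}{2}\langle \bx, \bL\bx\rangle$ and differentiating with respect to the block $x_i(k)$, the Laplacian term contributes $(\bL\bx)_{i,k} = \sum_{j\in\calN_i}(x_i(k) - x_j(k))$, while the clustering term contributes $\frac{1}{\rho}\sum_{r \in C_i^\star(k)} w_{i,r}\nabla_x f(x_i^\star(k), y_{i,r})$. Hence $\nabla J_\rho(\bx^\star, C^\star) = 0$ is equivalent to
\begin{equation*}
\frac{1}{\rho}\sum_{r \in C_i^\star(k)} w_{i,r}\nabla_x f(x_i^\star(k), y_{i,r}) + \sum_{j\in\calN_i}\big(x_i^\star(k) - x_j^\star(k)\big) = 0, \quad \text{for all } i \in [m],\, k \in [K].
\end{equation*}

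The central computation is the gradient of the Bregman loss. Differentiating $f(x,y) = \psi(y) - \psi(x) - \langle \nabla\psi(x), y-x\rangle$ with respect to $x$ and applying the product rule together with the symmetry of $\nabla^2\psi$, the two first-order terms in $\nabla\psi(x)$ cancel, leaving the clean identity $\nabla_x f(x,y) = \nabla^2\psi(x)(x - y)$. Substituting this into the condition above and separating the terms proportional to $x_i^\star(k)$ from the rest converts the stationarity condition into the linear system $P_{i,k}\, x_i^\star(k) = \frac{1}{\rho}\nabla^2\psi(x_i^\star(k))\sum_{r\in C_i^\star(k)} w_{i,r}y_{i,r} + \sum_{j\in\calN_i} x_j^\star(k)$, with $P_{i,k}$ exactly as defined in the statement.

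The last step, and the only place requiring a genuine argument rather than bookkeeping, is to justify that $P_{i,k}$ is invertible so the system can be solved for $x_i^\star(k)$. Here I would invoke convexity of $\psi$ (Assumption \ref{asmpt:psi}), which gives $\nabla^2\psi(x_i^\star(k)) \succeq 0$, so that $\frac{1}{\rho}\nabla^2\psi(x_i^\star(k))\sum_{r} w_{i,r}$ is positive semidefinite, the scalar weight sum being nonnegative. Connectivity of $G$ (Assumption \ref{asmpt:graph}) together with $m > 1$ forces $|\calN_i| \geq 1$, so $P_{i,k} \succeq |\calN_i| I_d \succ 0$ and is therefore invertible. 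Left-multiplying the linear system by $P_{i,k}^{-1}$ yields the stated closed form. I expect the gradient identity $\nabla_x f(x,y) = \nabla^2\psi(x)(x-y)$ and the positive-definiteness of $P_{i,k}$ to be the two points deserving care; everything else is rearrangement.
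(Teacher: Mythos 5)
Your proof is correct and follows essentially the same route as the paper: componentwise expansion of $\nabla J_\rho(\bx^\star,C^\star)=0$, the Bregman gradient identity $\nabla_x f(x,y) = \nabla^2\psi(x)(x-y)$, rearrangement into the linear system $P_{i,k}\,x_i^\star(k) = \nicefrac{1}{\rho}\nabla^2\psi(x_i^\star(k))\sum_{r \in C_i^\star(k)}w_{i,r}y_{i,r} + \sum_{j\in\calN_i}x_j^\star(k)$, and inversion of $P_{i,k}$. If anything, your invertibility step is slightly more careful than the paper's one-line appeal to Assumption \ref{asmpt:psi}: you correctly observe that strict convexity only guarantees $\nabla^2\psi \succeq 0$, and that it is the term $|\calN_i| I_d \succ 0$ (available since the graph is connected and $m>1$) that makes $P_{i,k}$ positive definite.
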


\begin{remark}
    If $f(x,y) = \frac{1}{2}\|x - y\|^2$ (i.e., $K$-means clustering), it can be shown that $\psi(x) = \frac{1}{2}\|x\|^2$, e.g., \cite{JMLR:v6:banerjee05b}. In that case $\nabla^2 \psi(x) \equiv I_{d}$, hence fixed points of DGC-KM satisfy, for any $i \in [m]$, $k \in [K]$
    \begin{equation*}
        x^\star_i(k) = \frac{\nicefrac{1}{\rho}\sum_{r \in C_i^\star(k)}w_{i,r}y_{i,r} + \sum_{j \in \calN_i}x_j^\star(k)}{\nicefrac{1}{\rho}\sum_{r \in C_i^\star(k)}w_{i,r} + |\calN_i|}.
    \end{equation*} This is consistent with the fixed points of the method from \cite{kar2019clustering}, termed \emph{generalized Lloyd minima}. 
\end{remark}

\subsection{Consensus and fixed points}\label{subsec:consensus}

In this section we study the behaviour of the sequences of fixed points generated by DGC-$\mathcal{F}_\rho$. As discussed in Section \ref{sec:methods}, convergence guarantees alone do not suffice in showing that a solution of the original problem \eqref{eq:general-constr} will be produced. Noting that fixed points of DGC-$\mathcal{F}_\rho$ depend on $\rho$, we are then interested in the behaviour of the family of fixed points $\{\bx_\rho \}_{\rho \geq 1}$ of DGC-$\mathcal{F}_\rho$, as $\rho \rightarrow \infty$. To start, we state the assumption used in this section. 

\begin{assumption}\label{asmpt:dist-norm}
    The distance function $g$ is the Euclidean distance and the gradient of $f$ is of the form $\nabla f(x,y) = \gamma(x - y)$, where $\gamma \equiv \gamma(x,y) \geq 0$ may depend on $x$ and $y$. 
\end{assumption}

We show in Appendix \ref{app:proofs} that Assumption \ref{asmpt:dist-norm} is satisfied for all of $K$-means, Huber, Fair and Logistic losses. Next, we define the key concept in this section, that of \emph{consensus fixed points}.

\begin{definition}\label{def:consensus-fixed_pt}
    The point $\overline{\bx} \in \R^{Kmd}$ is a consensus fixed point if: 
    \begin{enumerate}
        \item $\overline{\bx}_1 = \ldots = \overline{\bx}_m$;
        \item there exists a $\overline{C} \in U_{\overline{\bx}},$ such that $\mathbf{1}^\top \nabla J(\overline{\bx},\overline{C}) = 0$, where $\mathbf{1} = 1_m \otimes I_{Kd}$.
    \end{enumerate} 
\end{definition}

\begin{remark}
    Consensus fixed points are not fixed points per Definition~\ref{def:fix-pt}, as they may not satisfy $\nabla J(\overline{\bx},\overline{C}) = 0$.
\end{remark}

We can write any consensus fixed point as $\overline{\bx} = 1_m \otimes \bx$, where $\bx \in \R^{Kd}$ is the vector stacking of $K$ centers. From point 2) in Definition \ref{def:consensus-fixed_pt} and the definition of $\nabla J$, we get
\begin{equation}\label{eq:zero-grad}
    \mathbf{1}^\top\nabla J(\overline{\bx},\overline{C}) = \sum_{i \in [m]}\nabla H(\bx,\overline{C}_i) = 0.
\end{equation} If we define $C = (C(1),\ldots,C(K))$, where $C(k) = \cup_{i \in [m]}\overline{C}_i(k)$, it follows from~\eqref{eq:zero-grad} that, for all $k \in [K]$
\begin{align}
    \sum_{r \in C(k)}\hspace{-0.6em}w_{i,r}\nabla f(x(k),y_{i,r}) &= \sum_{i \in [m]}\sum_{r \in \overline{C}_i(k)}\hspace{-0.6em}w_{i,r}\nabla f(x(k),y_{i,r}) = [\mathbf{1}^\top \nabla J(\overline{\bx},\overline{C})]_k = 0, \label{eq:zero-grad2}
\end{align} i.e., $(\bx,C)$ is a stationary point of the centralized problem \eqref{eq:general-clust}. The converse holds as well, i.e., any stationary point $(\bx,C)$ of \eqref{eq:general-clust} induces a consensus fixed point $(\overline{\bx},\overline{C})$, where $\overline{\bx} = 1_m \otimes \bx$ and $\overline{C} = (\overline{C}_1,\ldots,\overline{C}_m)$, with $\overline{C}_i(k) = C(k) \cap \D_i$. As such, our aim is to find consensus fixed points, ensuring that a clustering of the full data is produced. To provide further intuition on consensus fixed points, it is instructive to consider the special case of Bregman loss functions.

\begin{lemma}\label{lm:bregman-cons-fp}
    If the clustering loss $f$ is a Bregman distance, then any consensus fixed point $(\overline{\bx},\overline{C})$, with $\overline{\bx} = 1_m \otimes \bx$, corresponds to a \emph{Lloyd point}, i.e., for all $k \in [K]$
    \begin{equation*}
        x(k) = \frac{1}{W_k}\sum_{i \in [m]}\sum_{r \in \overline{C}_i(k)}w_{i,r}y_{i,r} = \frac{1}{W_k}\sum_{r \in C(k)}w_{i,r}y_{i,r},
    \end{equation*} where $W_k = \sum_{i \in [m],r \in \overline{C}_i(k)}w_{i,r}$ and $C(k) = \cup_{i \in [m]}\overline{C}_i(k)$.
\end{lemma}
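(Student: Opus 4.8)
The plan is to start from the stationarity characterization already extracted for consensus fixed points and simply feed the Bregman structure into it. Writing $\overline{\bx} = 1_m \otimes \bx$, equation \eqref{eq:zero-grad2} tells us that any consensus fixed point $(\overline{\bx},\overline{C})$ satisfies, for every $k \in [K]$,
\begin{equation*}
\sum_{i \in [m]}\sum_{r \in \overline{C}_i(k)} w_{i,r}\nabla f(x(k), y_{i,r}) = 0 .
\end{equation*}
So the entire argument reduces to inserting the explicit gradient of a Bregman distance into this identity and solving for $x(k)$.

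First I would compute $\nabla_x f(x,y)$ for $f(x,y) = \psi(y) - \psi(x) - \langle \nabla\psi(x), y-x\rangle$. Differentiating term by term, the $-\nabla\psi(x)$ coming from $-\psi(x)$ cancels against the $\nabla\psi(x)$ produced by the inner-product term, leaving the clean expression $\nabla_x f(x,y) = \nabla^2\psi(x)(x-y)$. This is the key identity; everything after it is bookkeeping. Substituting it into the stationarity condition, and noting that $x(k)$ is common to all summands so that $\nabla^2\psi(x(k))$ pulls out of the double sum, I obtain
\begin{equation*}
\nabla^2\psi(x(k))\Big(\sum_{i \in [m]}\sum_{r \in \overline{C}_i(k)} w_{i,r}\big(x(k) - y_{i,r}\big)\Big) = 0 .
\end{equation*}

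Since $\psi$ is strictly convex (Assumption \ref{asmpt:psi}), its Hessian $\nabla^2\psi(x(k))$ is positive definite, hence invertible, so the parenthesized vector must vanish. Separating the terms multiplying $x(k)$ and rearranging then gives $x(k)\sum_{i,r} w_{i,r} = \sum_{i,r} w_{i,r} y_{i,r}$, i.e. $x(k) = \frac{1}{W_k}\sum_{i,r} w_{i,r} y_{i,r}$ with $W_k = \sum_{i \in [m],\, r \in \overline{C}_i(k)} w_{i,r}$, which is exactly the claimed Lloyd point; the second equality in the statement follows by collapsing the double sum using $C(k) = \cup_{i \in [m]}\overline{C}_i(k)$, a disjoint union since the $\D_i$ partition $\D$.

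I expect the only delicate point to be the invertibility of $\nabla^2\psi(x(k))$ in the factoring step: strict convexity only guarantees $\nabla^2\psi \succeq 0$ in general, so one must argue that for the Bregman-generating functions of interest (e.g. the squared norm, where $\nabla^2\psi \equiv I_d$) the Hessian is genuinely positive definite, making the cancellation legitimate. An alternative route that avoids inverting the Hessian is to invoke the classical Bregman-mean property directly: under Assumption \ref{asmpt:coerc} the map $x \mapsto \sum_{i,r} w_{i,r} f(x, y_{i,r})$ is convex, so the vanishing-gradient condition characterizes its global minimizer, and that minimizer is precisely the weighted arithmetic mean $\frac{1}{W_k}\sum_{i,r} w_{i,r} y_{i,r}$ — the defining property of Bregman representatives.
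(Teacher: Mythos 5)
Your proof is correct and, in fact, covers more ground than the paper's own argument. Both start from the stationarity identity \eqref{eq:zero-grad2}, but the paper's proof only specializes it to $f(x,y) = \|x-y\|^2$, where the gradient is $2(x(k)-y_{i,r})$ and the claim follows by rearranging; the general Bregman case asserted in the lemma statement is left implicit. You instead substitute the general Bregman gradient $\nabla_x f(x,y) = \nabla^2\psi(x)(x-y)$ --- the same identity the paper derives as \eqref{eq:fix-pt3} when proving the Bregman fixed-point characterization, Lemma \ref{lm:fix-pt} --- factor the Hessian out of the double sum, and then must cancel it. Your worry about that cancellation is exactly right: strict convexity of $\psi$ only yields $\nabla^2\psi \succeq 0$, so the Hessian route needs positive definiteness as an additional ingredient, which Assumption \ref{asmpt:psi} by itself does not supply. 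Your fallback argument closes this gap cleanly: under Assumption \ref{asmpt:coerc} the map $x \mapsto \sum_{i,r} w_{i,r} f(x,y_{i,r})$ is convex and differentiable, so a vanishing gradient identifies a global minimizer, and the unique global minimizer of a weighted sum of Bregman divergences taken in the second slot (as here) is the weighted arithmetic mean, by the classical Bregman-mean property of \cite{JMLR:v6:banerjee05b}; uniqueness follows from strict convexity of $\psi$. What your approach buys is a proof that genuinely covers every Bregman loss admitted by the lemma, at the price of invoking either positive definiteness of the Hessian or the convexity in Assumption \ref{asmpt:coerc}; what the paper's version buys is brevity, at the price of verifying only the $K$-means instance.
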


Lemma \ref{lm:bregman-cons-fp} shows that in the case of Bregman losses, consensus fixed points correspond to Lloyd-type points, i.e., cluster means. The next result characterizes consensus fixed points when their corresponding clusters are non-empty.

\begin{lemma}\label{lm:cons-fp-D}
    Let Assumption \ref{asmpt:dist-norm} hold and let $(\overline{\bx},\overline{C})$ be a consensus fixed point, i.e., $\overline{\bx} = \mathbf{1} \otimes \bx$. Then, $x(k) \in \overline{co}(\D)$, for all $k \in [K]$ for which the $k$-th cluster of at least one user is non-empty, i.e., $\cup_{i \in [m]}\overline{C}_i(k) \neq \emptyset$.
\end{lemma}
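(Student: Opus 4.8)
The plan is to start from the first-order condition that every consensus fixed point must satisfy. By Definition~\ref{def:consensus-fixed_pt} and the derivation culminating in \eqref{eq:zero-grad2}, for the consensus fixed point $(\overline{\bx},\overline{C})$ with $\overline{\bx} = \mathbf{1}\otimes\bx$ one has, for each $k \in [K]$,
\begin{equation*}
    \sum_{i \in [m]}\sum_{r \in \overline{C}_i(k)} w_{i,r}\nabla f(x(k),y_{i,r}) = 0.
\end{equation*}
My first step is to insert the gradient structure granted by Assumption~\ref{asmpt:dist-norm}, $\nabla f(x(k),y_{i,r}) = \gamma_{i,r}(x(k) - y_{i,r})$ with $\gamma_{i,r} := \gamma(x(k),y_{i,r}) \geq 0$. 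Setting $c_{i,r} := w_{i,r}\gamma_{i,r} \geq 0$ (recall $w_{i,r} > 0$) and $S := \sum_{i}\sum_{r \in \overline{C}_i(k)} c_{i,r}$, the condition rearranges to
\begin{equation*}
    S\,x(k) = \sum_{i \in [m]}\sum_{r \in \overline{C}_i(k)} c_{i,r}\,y_{i,r}.
\end{equation*}

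The main branch is the non-degenerate case $S > 0$: dividing by $S$ writes $x(k)$ as a convex combination of the data points $\{y_{i,r}\}$ of cluster $k$, since the weights $c_{i,r}/S$ are non-negative and sum to one. Hence $x(k) \in co(\D) = \overline{co}(\D)$, using that $\D$ is finite and so $co(\D)$ is already closed.

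The step I expect to be the main obstacle is the degenerate case $S = 0$, in which the displayed identity is vacuously satisfied and conveys no direct information about $x(k)$. Here $S = 0$ forces $c_{i,r} = 0$, hence $\gamma_{i,r} = 0$, for every $r$ in the (non-empty) cluster, so that $\nabla f(x(k),y_{i,r}) = 0$ for each such $r$. The plan is to combine this with the convexity and coercivity of $f$ from Assumption~\ref{asmpt:coerc}: these guarantee that $y_{i,r}$ is a global minimizer of $f(\cdot,y_{i,r})$, and, given the structure $\nabla f = \gamma(x - y)$, any critical point distinct from $y_{i,r}$ would require $\gamma(\cdot,y_{i,r})$ to vanish there. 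For the losses satisfying Assumption~\ref{asmpt:dist-norm} treated in this paper one has $\gamma(x,y) > 0$ whenever $x \neq y$, so $\nabla f(x(k),y_{i,r}) = 0$ yields $x(k) = y_{i,r} \in \D \subseteq \overline{co}(\D)$. A cleaner way to package both cases uniformly, which I would adopt, is a strict separation argument: if $x(k) \notin co(\D)$, then since $co(\D)$ is compact and convex there is a direction $v$ with $\langle v, x(k) - y_{i,r}\rangle \geq \epsilon > 0$ for all data points; taking the inner product of the first-order condition with $v$ gives $0 = \sum c_{i,r}\langle v, x(k)-y_{i,r}\rangle \geq \epsilon S$, forcing $S = 0$ and reducing matters precisely to the degenerate case above, which is then contradicted by the unique-minimizer property. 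This yields $x(k) \in \overline{co}(\D)$ in all cases.
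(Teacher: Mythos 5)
Your proof is correct in substance but takes a genuinely different route from the paper's. The paper argues by contradiction via a projection argument: assuming $x(k) \notin \overline{co}(\D)$, it projects $x(k)$ onto $\overline{co}(C(k))$, uses Assumption~\ref{asmpt:g&f} to show the projected center weakly decreases every term $f(\cdot,y_{i,r})$ and strictly decreases at least one of them, and then contradicts the fact that, by convexity, a consensus fixed point must globally minimize $H(\cdot,\overline{C})$; notably, the paper never invokes the gradient structure $\nabla f(x,y) = \gamma(x,y)(x-y)$, only the Euclidean form of $g$. You instead exploit that structure head-on: rewriting \eqref{eq:zero-grad2} as $S\,x(k) = \sum_{i}\sum_{r \in \overline{C}_i(k)} c_{i,r}\,y_{i,r}$ exhibits $x(k)$, whenever $S>0$, as a convex combination of the cluster's own data points. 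This is more elementary than the projection argument and yields a slightly stronger conclusion in that case, namely $x(k) \in \overline{co}\big(\cup_i \overline{C}_i(k)\big)$ rather than merely $x(k) \in \overline{co}(\D)$. The price you pay is the degenerate case $S = 0$, which the paper's route never has to confront.

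That degenerate case is where your write-up has a soft spot: you assert that $\gamma(x,y)>0$ whenever $x \neq y$ ``for the losses treated in this paper,'' but Assumption~\ref{asmpt:dist-norm} only grants $\gamma \geq 0$, so as written your argument establishes the lemma for the four concrete losses rather than under the stated hypotheses. The fix stays entirely within the paper's standing assumptions and is short: if $\nabla f(x(k),y_{i,r}) = 0$, then by convexity (Assumption~\ref{asmpt:coerc}) $x(k)$ is a global minimizer of $f(\cdot,y_{i,r})$; since $\nabla f(y_{i,r},y_{i,r}) = \gamma(y_{i,r},y_{i,r})\,(y_{i,r}-y_{i,r}) = 0$, the point $y_{i,r}$ is also a global minimizer, hence $f(x(k),y_{i,r}) = f(y_{i,r},y_{i,r})$; Assumption~\ref{asmpt:g&f} then forces $g(x(k),y_{i,r}) = g(y_{i,r},y_{i,r}) = 0$, and Assumption~\ref{asmpt:met} gives $x(k) = y_{i,r} \in \D$. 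Substituting this derivation for your appeal to the specific losses (it is precisely your ``unique-minimizer property,'' now proved rather than asserted), your separation-argument packaging closes the proof under the same hypotheses as the paper's.
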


Note that centers whose corresponding clusters are empty do not contribute to the cost or the gradient. As such, they can be freely assigned and we can always choose a center that belongs to $\overline{co}(\D)$. Define $\mathcal{X} = \big\{\overline{\bx} \in \R^{Kmd}: \: \overline{\bx} = 1_m \otimes \bx \text{ is a consensus fixed point and } x(k) \in \overline{co}(\D), \: \text{for all } k \in [K]\big\}$. It then follows from Lemma \ref{lm:cons-fp-D} and preceding discussion that $\mathcal{X} \neq \emptyset$. We now state the main result of this section.

\begin{theorem}\label{thm:cons-conv}
    Let Assumption \ref{asmpt:dist-norm} hold and $\{\bx_\rho\}_{\rho \geq 1}$ be a family of fixed points of DGC-$\mathcal{F}_\rho$, for all $\rho \geq 1$ and fixed center initialization $\bx^0 \in \R^{Kmd}$.\footnote{Note that, for each fixed $\rho \geq 1$, the fixed point $\bx_\rho$ can be any fixed point generated by our method DGC-$\mathcal{F}_\rho$, when initialized using $\bx^0$.} Then, the sequence converges to the set $\mathcal{X}$ with a rate $\mathcal{O}(\nicefrac{1}{\rho})$, i.e., $d(\bx_\rho,\mathcal{X}) = \mathcal{O}(\nicefrac{1}{\rho})$, where $d(\bx,\mathcal{X}) = \inf_{\by \in \mathcal{X}}\|\bx - \by\|$. Moreover, the clusters converge for finite $\rho$, i.e., there exists a $\rho_0 \geq 1$, such that $U_{\bx_\rho} \subseteq U_{\overline{\bx}}$, for all $\rho \geq \rho_0$ and some $\overline{\bx} \in \mathcal{X}$.
\end{theorem}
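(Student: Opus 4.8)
The plan is to prove the two claims of Theorem~\ref{thm:cons-conv} separately: first the $\mathcal{O}(1/\rho)$ convergence rate of the fixed points $\bx_\rho$ to the consensus set $\mathcal{X}$, and then the finite-$\rho$ stabilization of the clusters. For the rate, the natural starting point is the fixed-point condition $\nabla J_\rho(\bx_\rho, C_\rho) = 0$, which by \eqref{eq:distributed-gen} reads $\frac{1}{\rho}\nabla J(\bx_\rho, C_\rho) + \bL \bx_\rho = 0$, i.e. $\bL \bx_\rho = -\frac{1}{\rho}\nabla J(\bx_\rho, C_\rho)$. The key structural fact is that $\bL = L \otimes I_{Kd}$ has kernel exactly the consensus subspace $\{\mathbf{1}\otimes \bz : \bz \in \R^{Kd}\}$ (using Assumption~\ref{asmpt:graph}, connectedness, so $\lambda_2(L) > 0$). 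Decomposing $\bx_\rho$ into its consensus component and its orthogonal complement, the disagreement component satisfies $\|\bx_\rho - \Pi \bx_\rho\| \leq \frac{1}{\lambda_2(L)}\|\bL \bx_\rho\| = \frac{1}{\rho\,\lambda_2(L)}\|\nabla J(\bx_\rho, C_\rho)\|$, where $\Pi$ projects onto the consensus subspace.

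The crucial missing ingredient is a \emph{uniform bound} on $\|\nabla J(\bx_\rho, C_\rho)\|$ independent of $\rho$. The plan is to obtain this from coercivity (Assumption~\ref{asmpt:coerc}) together with the boundedness of fixed points established earlier (the paper references Lemma~\ref{lm:bdd-fix-pt}): since the fixed points $\bx_\rho$ lie in a bounded set uniformly in $\rho$, and $\nabla f$ is continuous, the gradients $\nabla J(\bx_\rho, C_\rho)$ are uniformly bounded by some constant $G$. This immediately yields $d(\bx_\rho, \mathcal{S}) \leq \|\bx_\rho - \Pi\bx_\rho\| = \mathcal{O}(1/\rho)$ where $\mathcal{S}$ is the full consensus subspace. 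To upgrade this to distance from $\mathcal{X}$ (consensus \emph{fixed} points with centers in $\overline{\mathrm{co}}(\D)$) rather than merely the consensus subspace, I would show that the consensus projection $\Pi\bx_\rho = \mathbf{1}\otimes \bar{x}_\rho$, where $\bar{x}_\rho$ is the network average, approximately satisfies the consensus-fixed-point condition \eqref{eq:zero-grad2}; summing the fixed-point equations over $i$ kills the Laplacian term exactly (since $\mathbf{1}^\top \bL = 0$), leaving $\frac{1}{\rho}\mathbf{1}^\top \nabla J(\bx_\rho, C_\rho) = 0$, and a Taylor/Lipschitz argument using Assumption~\ref{asmpt:dist-norm} (the form $\nabla f = \gamma(x-y)$) transfers this to the averaged center up to an $\mathcal{O}(1/\rho)$ error in the centers themselves.

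For the finite-$\rho$ cluster stabilization, the plan is to exploit the fact that the cluster assignment \eqref{eq:reassign} depends on $\bx_\rho$ only through strict inequalities $g(x_i(k),y_{i,r}) < g(x_i(l),y_{i,r})$, so assignments are \emph{locally constant} in the centers except on the measure-zero ``tie'' set where two centers are equidistant from a point. Since $\bx_\rho \to \mathcal{X}$ and $\mathcal{X}$ is a finite union (up to the free placement of empty-cluster centers) of isolated consensus configurations, once $\rho$ is large enough $\bx_\rho$ enters a neighborhood of some $\overline{\bx}\in\mathcal{X}$ on which no ties are created or broken, forcing $U_{\bx_\rho}\subseteq U_{\overline{\bx}}$. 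The argument mirrors the finite-time cluster convergence already proved in Theorem~\ref{thm:convergence}, only now with $\rho$ (rather than $t$) as the convergence parameter, so I would adapt that machinery.

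\textbf{The main obstacle} I anticipate is the second half of the rate claim: converting ``close to the consensus subspace'' into ``close to $\mathcal{X}$.'' The set $\mathcal{X}$ consists of genuine consensus fixed points, and it is not a priori clear that the limiting averaged center $\bar{x}_\rho$ lands near a \emph{fixed point} rather than merely near the consensus subspace. Controlling this requires that the averaged near-stationarity condition $\mathbf{1}^\top\nabla J(\bx_\rho, C_\rho)=0$, which holds at the disagreed centers $\bx_\rho$, propagates to the consensus point $\mathbf{1}\otimes\bar{x}_\rho$ with only an $\mathcal{O}(1/\rho)$ perturbation; this is where Assumption~\ref{asmpt:dist-norm} is indispensable, since the special gradient structure $\nabla f(x,y)=\gamma(x,y)(x-y)$ lets me bound the discrepancy $\|\nabla f(\bar{x}_\rho,y) - \nabla f(x_{i,\rho},y)\|$ in terms of the already-controlled disagreement $\|x_{i,\rho}-\bar{x}_\rho\| = \mathcal{O}(1/\rho)$, closing the loop.
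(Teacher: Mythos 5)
Your skeleton largely coincides with the paper's proof: the paper also starts from the fixed-point identity $\bL\bx_\rho = -\frac{1}{\rho}\nabla J(\bx_\rho,C_\rho)$, obtains a $\rho$-uniform bound on the right-hand side from the boundedness of fixed points (Lemma \ref{lm:bdd-fix-pt}) -- concretely, it compares $\bx_\rho$ with a global minimizer $\bz_\rho \in \overline{co}(\D)$ of $F_\rho(\cdot)=J(\cdot,C_\rho)$ and uses $\beta$-smoothness to get $\|\bL\bx_\rho\|\leq \frac{\beta}{\rho}\|\bx_\rho-\bz_\rho\|\leq \frac{2\beta R_0}{\rho}$, which your compactness-plus-continuity bound replaces harmlessly -- and it exploits exactly your cancellation $\mathbf{1}^\top\bL=0$ together with $\beta$-smoothness of $J$ in the second half.

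The genuine gap is your ``upgrade'' step, and it is precisely the obstacle you flagged. From $\mathbf{1}^\top\nabla J(\bx_\rho,C_\rho)=0$ and smoothness you correctly conclude that the averaged point $1_m\otimes\overline{x}_\rho$ satisfies $\|\mathbf{1}^\top\nabla J(1_m\otimes\overline{x}_\rho,C_\rho)\|=\mathcal{O}(1/\rho)$, i.e., it is an \emph{approximately} stationary consensus point. But approximate stationarity does not imply the existence of an \emph{exactly} stationary consensus point within $\mathcal{O}(1/\rho)$: that implication is an error-bound (\L{}ojasiewicz-type) property which neither convexity (the minimizer set of $H(\cdot,C)$ may be flat), nor smoothness, nor Assumption \ref{asmpt:dist-norm} provides, so the claimed ``transfer up to an $\mathcal{O}(1/\rho)$ error in the centers themselves'' does not follow. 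A second, related problem is your cluster step: it presumes $\mathcal{X}$ is a finite union of isolated configurations and that $\bx_\rho$ approaches a single such point, neither of which is justified by the assumptions. The paper closes both issues differently: it proves (i) the consensus-violation rate $\|\bL\bx_\rho\|\leq 2\beta R_0/\rho$, and (ii) that a limit point $\overline{\bx}$ of the bounded family $\{\bx_\rho\}$ (bounded again by Lemma \ref{lm:bdd-fix-pt}) is an \emph{exact} consensus fixed point, arguing by contradiction: if $\min_{C\in U_{\overline{\bx}}}\|\mathbf{1}^\top\nabla J(\overline{\bx},C)\|=\epsilon>0$, then Lemma \ref{lm:clust_convg} -- the clustering-stabilization lemma you reserve only for the second claim, but which is needed already here -- gives $C_\rho\in U_{\bx_\rho}\subseteq U_{\overline{\bx}}$ for all $\rho\geq\rho_0$, while $\|\mathbf{1}^\top\nabla J(\overline{\bx},C_\rho)\|\leq\sqrt{m}\,\beta\,\|\overline{\bx}-\bx_\rho\|\rightarrow 0$, a contradiction; membership of the clusterings in $U_{\overline{\bx}}$ and their convergence for finite $\rho$ then come from Lemma \ref{lm:clust_convg} applied at $\overline{\bx}$, with no tie-set or isolation argument. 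If you repair your plan by replacing the quantitative transfer with this limit-point/contradiction argument, you recover the paper's proof.
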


Theorem \ref{thm:cons-conv} states that consensus is achieved at rate $\mathcal{O}\left(\nicefrac{1}{\rho}\right)$, which we verify numerically in Section \ref{sec:num}. Moreover, clusters converge for a finite $\rho$, implying that \emph{we are guaranteed to produce a clustering of the full data for a finite value of $\rho$}. Together with Lemma \ref{lm:bregman-cons-fp}, Theorem \ref{thm:cons-conv} implies that, for the special case of Bregman losses, DGC-$\mathcal{F}_\rho$ is guaranteed to converge to the set of Lloyd points, which is consistent with centralized methods with Bregman losses, e.g., \cite{JMLR:v6:banerjee05b,pmlr-v162-armacki22a}. Moreover, the parameter $\rho$ offers an inherent trade-off, in that, while higher values of $\rho$ guarantee consensus, increasing $\rho$ results in slower convergence of DGC-$\mathcal{F}_\rho$, as it takes more time to optimize the clustering part of the cost, with similar observations made in \cite{kar2019clustering}. As such, $\rho$ offers a trade-off between convergence speed and producing a clustering of the full data. In applications where convergence speed is paramount, moderate values of $\rho$ are apt, otherwise Theorem \ref{thm:cons-conv} implies choosing $\rho$ and $t$ sufficiently large to guarantee DGC-$\mathcal{F}_\rho$ produces a clustering of the full data.

\begin{remark}
    Theorem \ref{thm:cons-conv} guarantees that a clustering corresponding to a consensus fixed point can be attained for finite values of $\rho$, i.e., values satisfying $\rho \geq \rho_0$, for some finite $\rho_0 \geq 1$. The specific value of $\rho_0$ depends on the choice of loss function $f$, users' datasets $\D_i$, as well as center initializations $\bx_i^0$, $i \in [m]$. As such, in general, it is not possible to provide an exact value of $\rho_0$, beyond its existence. However, we provide a further discussion on approximating $\rho_0$ in Appendix \ref{app:intuition}.  
\end{remark}

\begin{remark}\label{rmk:effect-of-m}
    Varying the number of users over a fixed dataset (i.e., without adding new samples when introducing new users), does not impact the asymptotic accuracy achieved by our algorithm.\footnote{By asymptotic accuracy we mean the accuracy achieved by a consensus fixed point reached by our algorithm, evaluated over the entire, joint dataset.} However, increasing the number of users can impact the speed at which the asymptotic accuracy is achieved, by affecting the network connectivity and requiring more iterations to ensure convergence of our method. We verify this behaviour numerically in Appendix \ref{app:exp}.
\end{remark}

\begin{remark}
    Specializing~\eqref{eq:zero-grad2} to $K$-means cost, any consensus fixed point $(\overline{\bx},\overline{C})$, with $\overline{\bx} = 1_m \otimes \bx$ satisfies
    \begin{equation*}
        x(k) = \frac{1}{W_k}\sum_{i \in [m]}\sum_{r \in \overline{C}_i(k)}w_{i,r}y_{i,r} = \frac{1}{W_k}\sum_{r \in C(k)}w_{i,r}y_{i,r},
    \end{equation*} where $W_k = \sum_{i \in [m]}\sum_{r \in \overline{C}_i(k)}w_{i,r}$ and $C(k) = \cup_{i \in [m]}\overline{C}_i(k)$, i.e., consensus fixed points correspond to means of global clusters, also known as \emph{Lloyd points}.
\end{remark}

\section{Numerical results}\label{sec:num}

In this section we present numerical results. All experiments were implemented in python, averaged across 5 runs. We use uniform weights, i.e., $w_{i,r} = \frac{1}{N}$, for all $r \in \D_i$ and $i \in [m]$. For Huber and fair losses, we set $\delta = 5$, $\gamma = 1$, unless specified otherwise. We conduct experiments on synthetic and real data, including Iris \cite{iris}, MNIST \cite{lecun-mnist} and CIFAR10 \cite{krizhevsky2009learning}. In particular, we use the full Iris data, a subset of MNIST consisting of the first seven digits (MNIST7), subsets of CIFAR10 consisting of the first three (CIFAR3) and eight (CIFAR8) classes, as well as full MNIST and CIFAR10 datasets. We consider two setups with respect to data distribution across users. In the first, dubbed \emph{homogeneous data}, users have access to data from all underlying clusters, in equal proportion. In the second, dubbed \emph{heterogeneous data}, users only have access to a subset of underlying clusters, possibly in varying proportion.\footnote{Meaning that different users can have access to different number of clusters, as well as different number of samples per cluster.} For a detailed description of the data, as well as additional extensive experiments, see Appendix \ref{app:exp}. 

\begin{figure}[htp]
\centering
\begin{tabular}{ll}
\includegraphics[width=0.4\columnwidth]{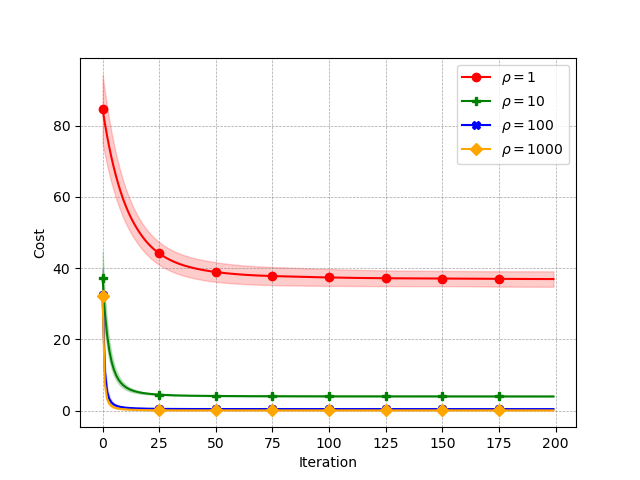}
&
\includegraphics[width=0.4\columnwidth]{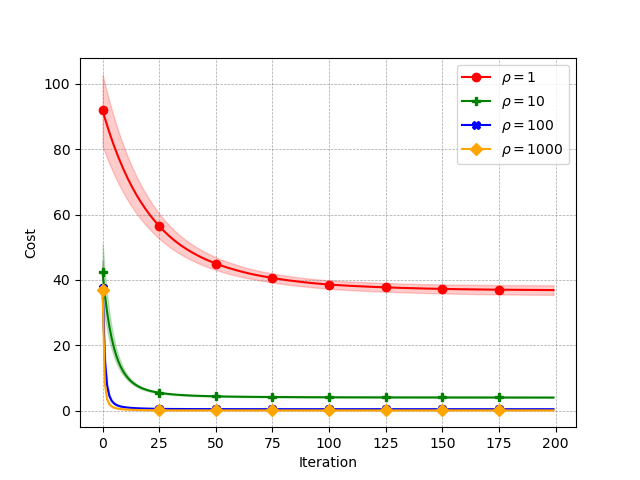}
\\
\includegraphics[width=0.4\columnwidth]{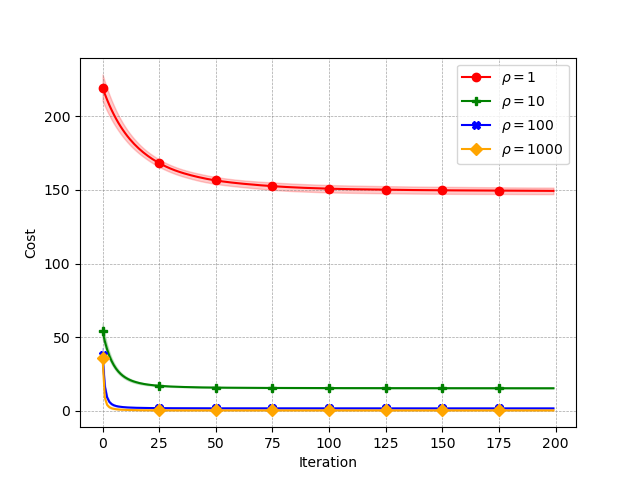}
&
\includegraphics[width=0.4\columnwidth]{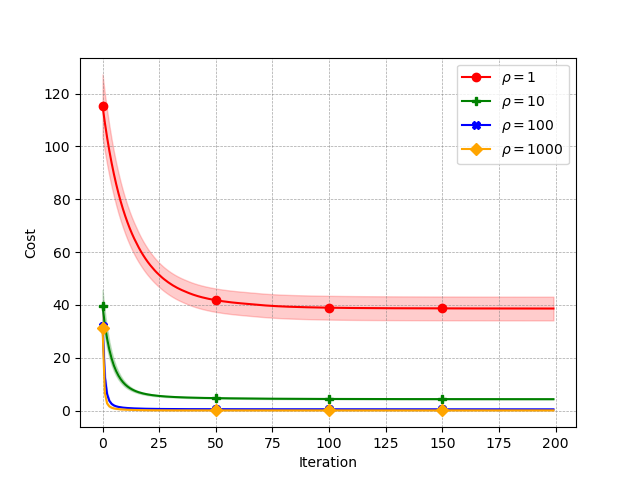}
\end{tabular}
\caption{Behaviour of $J_\rho$ for different $\rho$ and $B = 1$. Left to right: DGC-KM$_\rho$, DGC-HL$_\rho$ upper and DGC-LL$_\rho$, DGC-FL$_\rho$ lower row.}
\label{fig:iris-rho-cost}
\end{figure}

\begin{table}[htp]
\caption{Effect of $\rho$ on maximum center distance after $T = 500$ iterations.}
\label{tab:rho}
\begin{center}
\begin{small}
\begin{sc}
\begin{tabular}{lcccc}
\toprule
 & $\rho = 1$ & $\rho = 10$ & $\rho = 10^2$ & $\rho = 10^{3}$ \\
\midrule
DGC-KM$_\rho$ & $1.16$ & $3.3 \times 10^{-1}$ & $4.7 \times 10^{-2}$ & $5 \times 10^{-3}$ \\
DGC-HL$_\rho$ & $1.17$ & $3.1 \times 10^{-1}$ & $4.8 \times 10^{-2}$ & $5 \times 10^{-3}$ \\
DGC-LL$_\rho$ & $1.23$ & $4.3 \times 10^{-1}$ & $6.1 \times 10^{-2}$ & $8 \times 10^{-3}$ \\
DGC-FL$_\rho$ & $1.64$ & $4.4 \times 10^{-1}$ & $7.7 \times 10^{-2}$ & $10\times 10^{-3}$ \\
\bottomrule
\end{tabular}
\end{sc}
\end{small}
\end{center}
\vskip -0.1in
\end{table}

The first set of experiments, using Iris data, aims to verify our theory. We consider a network of $m = 10$ users, communicating over a ring graph, with homogeneous data distribution. Users initialize their centers by choosing a random sample from each class\footnote{By ``class'' here we mean the true underlying cluster. As Iris, MNIST and CIFAR10 are labeled datasets, we have knowledge of the true classes/clusters.} of their local data. We evaluate the cost $J_\rho$ and consensus for $B = 1$ and varying values of $\rho$. Consensus is measured via maximum center distance among users, i.e., $\max_{i,j \in [m]}\|\bx_i - \bx_j\|$.  The results are presented in Figure \ref{fig:iris-rho-cost} and Table \ref{tab:rho}. The solid lines in Figure \ref{fig:iris-rho-cost} represent the average performance, with the shaded regions showing standard deviation. As predicted in Lemma \ref{lm:decr}, the cost is decreasing in each iteration (Figure \ref{fig:iris-rho-cost}). Table \ref{tab:rho} presents the maximum center distance across users after $T = 500$ iterations, with $B=1$. As predicted in Theorem \ref{thm:cons-conv}, the maximum center distance is of order $\mathcal{O}\left(\nicefrac{1}{\rho} \right)$.

\begin{table}[htp]
\caption{Robustness to initialization.}
\label{tab:init}
\begin{center}
\begin{small}
\begin{sc}
\begin{tabular}{lcc}
\toprule
 & Warm start & Random \\
\midrule
 SKL-KM & $89.3 \pm 0.0 \%$ & $81.7 \pm 14.5 \%$  \\ 
CGC-KM & $89.1 \pm 0.3 \%$ & $81.6 \pm 14.8 \%$ \\
LGC-KM & $89.3 \pm 2.9 \%$ & $82.5 \pm 3.2 \%$ \\
DGC-KM$_\rho$ & $90.6 \pm 0.5 \%$ & $\mathbf{90.5 \pm 3.6 \%}$ \\
DGC-HL$_\rho$ & $90.6 \pm 0.5 \%$ & $89.3 \pm 6.1 \%$ \\
DGC-LL$_\rho$ & $\mathbf{90.7 \pm 0.2 \%}$ & $86.7 \pm 5.5 \%$ \\
ADMM-KM & $88.7 \pm 0.2 \%$ &  $88.6 \pm 0.0 \%$ \\
\bottomrule
\end{tabular}
\end{sc}
\end{small}
\end{center}
\vskip -0.1in
\end{table}

The second set of experiments, using the same network and data setup as the previous, aims to test robustness to initialization. It was observed in \cite{dist-clust-wsn} that their distributed ADMM-based $K$-means method (ADMM-KM) is more robust to initialization than centralized $K$-means. The intuition behind this phenomena stems from the fact that, while a centralized algorithm has only one initialization, distributed algorithms in effect have $m$ initializations, one for each user, with the consensus dynamic moving local centers toward a joint solution, negating the possible effects of bad initialization at some users and allowing distributed algorithms to reach a solution of better quality. To further test this phenomena, we perform two sets of experiments. In the first, centers for all methods are initialized by choosing a random sample from each class, while in the second, centers are initialized uniformly at random.\footnote{For distributed methods, random sampling is done locally at each user, i.e., each user draws random centers from their own local dataset.}As such, methods in the first experiment \emph{exploit knowledge of the underlying clustering structure} and we refer to this setup as \emph{warm start}, while in the second, methods \emph{are oblivious to the underlying clustering structure} and we refer to this setup as \emph{random}.

\begin{table*}
\caption{Clustering accuracy on homogeneous data. We use $B = 1$ and $\rho = 10$ for MNIST7, $\rho = 100$ for CIFAR3 and CIFAR8, with $\rho = 1000$ on full MNIST and CIFAR10 data.}
\label{tab:hom}
\begin{adjustwidth}{-1in}{-1in}
\begin{center}
\begin{small}
\begin{sc}
\begin{tabular}{lccccccc}
\toprule
 & SKL-KM & CGC-KM & LGC-KM & DGC-KM$_\rho$ & DGC-HL$_\rho$ & DGC-LL$_\rho$ & ADMM-KM \\
\midrule
MNIST7 & $73.7 \pm 0.2 \%$ & $73.6 \pm 0.5\%$ & $62.9 \pm 3.1\%$ & $73.3 \pm 1.1 \%$ & $\mathbf{74.4 \pm 1.3\%}$ & $70.9 \pm 1.6\%$ & $73.9 \pm 0.2\%$ \\
CIFAR3 & $50.6 \pm 0.4\%$ & $50.1 \pm 0.4\%$ & $41.4 \pm 4.4\%$ & $\mathbf{51.2 \pm 1.9\%}$ & $49.1 \pm 4.2\%$ & $46.0 \pm 3.6\%$ & $50.6 \pm 0.7\%$\\
CIFAR8 & $21.2 \pm 0.1\%$ & $21.4 \pm 0.3\%$ & $16.7 \pm 0.5\%$ & $20.9 \pm 0.4\%$ & $\mathbf{21.5 \pm 0.9\%}$ & $20.6 \pm 0.5\%$ & $20.9 \pm 0.2\%$ \\
MNIST & $51.8 \pm 0.8 \%$ & $52.7 \pm 3.2\%$ & $18.79 \pm 1.4\%$ & $52.5 \pm 3.5 \%$ & $45.8 \pm 1.3\%$ & $43.3 \pm 4.6\%$ & $\mathbf{53.2 \pm 1.6\%}$\\
 CIFAR10 & $20.4 \pm 0.3 \%$ & $\mathbf{20.5 \pm 0.7 \%}$ & $12.7 \pm 0.0 \%$ & $19.6 \pm 0.6 \%$ & $19.6 \pm 0.2 \%$ & $20.1 \pm 1.1 \%$ & $17.9 \pm 0.0 \%$\\
\bottomrule
\end{tabular}
\end{sc}
\end{small}
\end{center}
\end{adjustwidth}
\vskip -0.1in
\end{table*}

We evaluate the performance of our methods DGC-KM, DGC-HL and DGC-LL with $B = 1$, benchmarked against centralized gradient clustering with $K$-means (CGC-KM) \cite{pmlr-v162-armacki22a}, scikit-learn's state-of-the-art centralized $K$-means (SKL-KM) implementation \cite{scikit-learn}, distributed ADMM-KM \cite{dist-clust-wsn}, and a local $K$-means clustering method (LGC-KM), where users cluster their data in isolation, using gradient-based clustering from \cite{pmlr-v162-armacki22a}. We set $\rho = 10$ for ADMM-KM and our methods. All the methods are run for $T = 1.000$ iterations. We measure the clustering quality by comparing clustering produced labels with true labels, i.e., \emph{accuracy}.\footnote{We account for possible label permutation, by taking the highest possible accuracy with respect to all label permutations. Note that accuracy achieved by unsupervised methods presented in our results should not be compared to the accuracy achieved by the more powerful supervised learning methods.} For DGC, LGC and ADMM, we report the average accuracy across users. The SKL-KM method uses the more powerful $K$-means++ initialization for the warm start case, while it is assigned the same random initialization as CGC for the random case. All the distributed methods are assigned the same initialization. The results are presented in Table \ref{tab:init}. We can clearly see that distributed methods are much more resilient to initialization, with the performance of all distributed methods remaining virtually the same in both setups, while the performance of centralized algorithms deteriorates significantly under random initialization. The strong performance of LGC relative to CGC and SKL can be explained by the fact that the data is homogeneous, with each user having access to all classes, while clustering much smaller datasets. All the distributed algorithms perform on par with LGC in the warm start setup and outperform it in the random initialization setup, further highlighting how user collaboration can correct the effects of bad initialization and lead to a solution of higher accuracy.

Next, we aim to test the applicability of our framework in scenarios such as outlier detection. We consider a noisy version of the Iris dataset, where we randomly select $20\%$ of samples from each class and add Gaussian noise to each component of the selected samples, with mean $\mu = 11$ and variance $\sigma^2 = 1$, to create a fourth class of outlier points. We refer to the noisy points as ``outliers'', with the noiseless points being ``good'' points. We then spread the entire noisy Iris dataset across $m = 5$ users communicating over a ring network, with each user getting $24$ good data points ($8$ per class) and $6$ outliers. We consider the performance of our methods with $K$-means and Huber losses, DGC-KM and DGC-HL, as well as the distributed $K$-means based ADMM-KM method from \cite{dist-clust-wsn}. We set $\rho = 10$ for all three methods and $B = 1$ for our two methods, with the Huber loss parameter set to $\delta = 0.05$ for our DGC-HL method. We initialize the $K 
= 3$ centers for all three methods by first computing the true means of all good points (global, joint means), with each user locally adding Gaussian white noise ($\mu = 0$ and $\sigma^2 = 1$) to each component of the true means. We run all three methods for $T = 1.000$ iterations, after which we obtain the final centers. The final centers are averaged across $10$ runs, to account for possible perturbations in center initialization. Finally, after the averaged centers are obtained, we visualize the results, by applying the t-distributed stochastic neighbor embedding (t-SNE) method \cite{JMLR:v9:vandermaaten08a} on the entire noisy Iris data and the averaged centers produced by our two methods and ADMM-KM. The results are presented in Figure \ref{fig:noisy} below. We can see that our DGC-HL method correctly identifies all three clusters of good points, ignoring the outliers. On the other hand, both $K$-means based methods, DGC-KM and ADMM-KM, incorrectly identify the cluster of outliers as one of the three clusters, showing poor resilience to outliers, even under very strong initialization. As such, our DGC-HL method can be used for outlier detection, by marking the points furthest away from the true centers as outliers, further underlining the importance of methods beyond $K$-means and the potential of our general framework for a wide range of applications. 

\begin{figure}[htp]
\centering
\includegraphics[width=0.6\columnwidth]{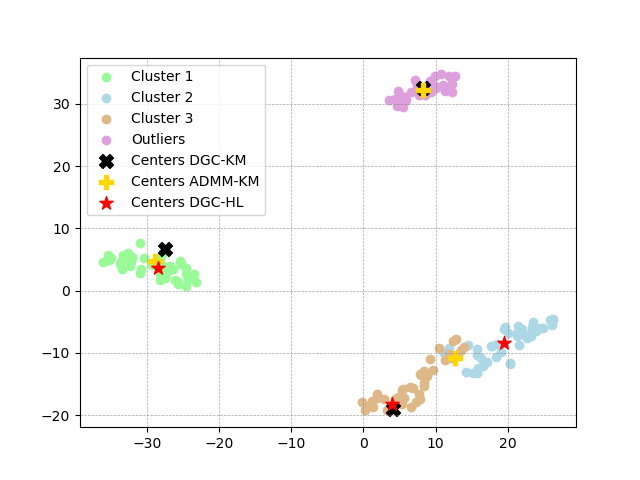}
\caption{Performance of DGC-KM, DGC-HL and ADMM-KM on noisy Iris data. We can see that DGC-HL successfully identifies the true clusters, while both DGC-KM and ADMM-KM incorrectly identify the cluster of outliers as one of the true clusters.}
\label{fig:noisy}
\end{figure}

To test the scalability of our methods on huge-scale datasets, we perform an experiment on a synthetic dataset. The data is generated by drawing $500.000$ samples from $K = 4$ different Gaussian distributions with means $\mu = \{\begin{bmatrix} 1, 1 \end{bmatrix}, \begin{bmatrix} -1, 1 \end{bmatrix}, \begin{bmatrix} 1, -1 \end{bmatrix}, \begin{bmatrix} -1, -1 \end{bmatrix} \}$ and identity covariance matrix, for a total of $2.000.000$ samples ($40\times$ larger than the largest real datasets used, MNIST and CIFAR10). We again consider a ring network of $m = 10$ users, with homogeneous data. We benchmark the performance of DGC-KM with $B = 1$ against SKL-KM, CGC-KM and ADMM-KM. Centers of all methods are initialized by randomly choosing $K = 4$ samples from the data, oblivious to the underlying clusters. We set $\rho = 100$ for both DGC-KM and ADMM-KM. The iterative algorithms were run for $T = 1.000$ iteartions, while SKL-KM terminated early, due to an inbuilt termination criteria. We report the number of iterations, time per iteration (in seconds) and accuracy. The results are presented in Table \ref{tab:huge-data}. We can see that our method performs on par with centralized methods in terms of accuracy, while maintaining a similar time per iteration, demonstrating that it scales well to huge datasets.\footnote{The slight slowdown of time per iteration of distributed methods compared to centralized ones can be explained by the fact that distributed methods were deployed in a ``simulated'' distributed environment, where the execution was done sequentially, instead of executing in parallel, optimally. Additionally, in contrast to an optimized software library like scikit-learn, our implementation of the various distributed algorithms is not optimized.}  

\begin{table}[htp]
\caption{Performance of clustering methods on huge-scale data.}
\label{tab:huge-data}
\begin{center}
\begin{threeparttable}
\begin{small}
\begin{sc}
\begin{tabular}{lccc}
\toprule
 & Num. iter. & Time per iter. & Acc. ($\%$) \\
\midrule
SKL-KM & $79 \pm 7$ & $0.025$ & $70.8 \pm 0.001$ \\
CGC-KM & $1000$ & $0.066$ & $70.8 \pm 0.001$ \\
DGC-KM & $1000$ & $0.093$ & $70.3 \pm 0.9$ \\
ADMM-KM & $1000$ & $0.099$ & $67.6 \pm 1.2$ \\
\bottomrule
\end{tabular}
\end{sc}
\end{small}
\end{threeparttable}
\end{center}
\vskip -0.1in
\end{table}

Finally, we test the performance of our methods on real datasets, with both homogeneous and heterogeneous data.\footnote{Note that this distinction is irrelevant for centralized methods, as they have access to the entire data in both cases.} In the homogeneous setup we again consider $m = 10$ users on a ring graph and measure performance via accuracy, testing the performance on Iris, MNIST3, CIFAR3-8, as well as the full MNIST and CIFAR10 datasets. In the heterogeneous setup we use a ring graph of $m = 15$ users for Iris data, with each user having access to only two classes. For the MNIST7 and CIFAR3 data in heterogeneous setup, we consider a network of $m = 10$ users, with users communicating over a Erdos-Renyi graph with connectivity parameter $p = 0.5$. For MNIST7, each user has access to data from at least three and at most five of the underlying seven classes, while for CIFAR3, users have access to two of the underlying three classes. Due to the difficulty of tracking label permutations for heterogeneous data, we evaluate the performance using \emph{Adjusted Rand Index} (ARI) score between the true labels and the ones produced by clustering. The ARI score measures alignment of labels between two clusterings, while accounting for possible label permutations. Its value lies in $[0,1]$, with higher values meaning better label alignment. In both settings we test the performance of our methods DGC-KM, DGC-HL and DGC-LL with SKL-KM, CGC-KM and ADMM-KM. For our methods we set $B = 1$, with $\rho$ the same for our methods and ADMM-KM and varying for different datasets (see Tables \ref{tab:hom} and \ref{tab:het}). We initialize all the methods in homogeneous case using warm start, except on the full MNIST and CIFAR10 data, where the centers of distributed methods are chosen randomly, with CGC using $K$-means++ initialization. For the heterogeneous data case, we use random initialization for all distributed methods and CGC. The SKL-KM method uses the more powerful $K$-means++ initialization in all the experiments. For homogeneous data all the methods run for $T = 4.000$ iterations, while for heterogeneous data the methods run for $T = 1.000$ iterations for Iris and $T = 4.000$ iterations for MNIST7 and CIFAR3 data. The results for homogeneous data are presented in Table \ref{tab:hom}, while the results for heterogeneous data are presented in Table \ref{tab:het}. We can see that our methods consistently perform on par with, or better than the other methods, highlighting the strong performance of our proposed framework across a myriad of real datasets.

\begin{table*}
\caption{ARI score on heterogeneous data. We use $B = 1$, with $\rho = 100$ on Iris and $\rho = 1000$ on MNIST7 and CIFAR3 data, for all distributed methods.}
\label{tab:het}
\begin{adjustwidth}{-1in}{-1in}
\begin{center}
\begin{small}
\begin{sc}
\begin{tabular}{lccccccc}
\toprule
 & SKL-KM & CGC-KM & LGC-KM & DGC-KM$_\rho$ & DGC-HL$_\rho$ & DGC-LL$_\rho$ & ADMM-KM \\
\midrule
Iris & $0.67 \pm 0.12$ & $0.67 \pm 0.12$ & $0.66 \pm 0.02$ & $\mathbf{0.77 \pm 0.06}$ & $0.76 \pm 0.06$ & $0.71 \pm 0.06$ & $0.76 \pm 0.02$ \\
MNIST7 & $0.45 \pm 0.0005$ & $0.47 \pm 0.03$ & $0.06 \pm 0.01$ & $0.43 \pm 0.05$ & $0.47 \pm 0.04$ & $0.47 \pm 0.02$ & $\mathbf{0.48 \pm 0.02}$ \\
CIFAR3 & $0.111 \pm 0.0$ & $\mathbf{0.113 \pm 0.003}$ & $0.016 \pm 0.006$ & $0.111 \pm 0.003$ & $0.092 \pm 0.03$ & $0.108 \pm 0.002$ & $0.107 \pm 0.002$ \\
\bottomrule
\end{tabular}
\end{sc}
\end{small}
\end{center}
\end{adjustwidth}
\vskip -0.1in
\end{table*}

\section{Conclusion}\label{sec:conclusion}

We study clustering over distributed data, where users have access to their local dataset, with the goal of obtaining a clustering of the full data. We design a family of clustering algorithms, DGC-$\mathcal{F}_\rho$, whose main advantages compared to the standard approaches are its applicability to a wide range of clustering problems, tunable computation and communication cost, as well as the ease of implementation. Theoretical studies show that the sequence of centers generated by DGC-$\mathcal{F}_\rho$ converges to fixed points and that, as $\rho$ increases, we are guaranteed to produce a clustering of the full data. We confirm our results numerically and demonstrate strong performance across a set of scenarios. Future ideas include using group lasso regularizer, known to achieve center consensus for finite values of $\rho$, e.g., \cite{network-lasso,sun-cvx_clust,armacki-personalized}, study the statistical consistency of the algorithm, e.g., \cite{pollard-kmeans,ghosh_ifca,armacki-oneshot}, as well as cluster recovery guarantees, e.g., \cite{kumar_kmeans,kmeans_awasthi,one_shot_clust}. Finally, an important direction left for future work is a comprehensive study of an inexact distributed clustering framework, where the inexactness can come in many forms, such as noisy gradient computation \cite{bertsekas-gradient,stochastic-kmeans}, communication channel noise \cite{kar-noisy-channels,armacki-noisy}, or inexact messages exchanged by users \cite{nedic-quantized,kar-quantized}. 

\section*{Acknowledgments}

The authors would like to thank Himkant Sharma (IIT Kharagpur) for his help with some of numerical experiments.

\bibliography{bibliography}

\newpage

\appendix

\section{Introduction}

The Appendix provides additional materials and proofs omitted from the main body of the paper. Appendix \ref{app:intuition} provides some intuition behind our approach. Appendix \ref{app:proofs} contains proofs omitted from the main body. Appendix \ref{app:asmpt3} shows the generality of Assumption \ref{asmpt:coerc}. Appendix~\ref{app:exp} provides additional numerical results.

\section{Algorithm Intuition}\label{app:intuition}

In this Appendix we provide some intuition behind the use of gradient-based clustering and estimating the value of $\rho_0$ from Theorem \ref{thm:cons-conv}

\paragraph{Gradient-based clustering.} As discussed in Remark \ref{rmk:gradient-intuition}, it is not always possible to design closed-form center updates for clustering methods. For ease of exposition, assume we have a single user, unit weights and the clusters are updated. Then, one ideally wants to perform the following center updates
\begin{equation}\label{eq:exact}
    x^{t+1}(k) = \argmin_{x \in \R^d}\sum_{r \in C^{t+1}(k)}f(x,y_{r}),
\end{equation} which equivalently requires solving the following system $\sum_{r \in C^{t+1}(k)}\nabla_x f(x^{t+1}(k),y_{r}) = 0$. If the loss is $K$-means, the resulting system is linear and the solution is given by $x^{t+1}(k) = |C^{t+1}(k)|^{-1}\big(\sum_{r \in C^{t+1}(k)}y_{r}\big)$, which is also \emph{optimal} for any Bregman loss, see, e.g., \cite{JMLR:v6:banerjee05b}. However, if the loss is not $K$-means or Bregman, the resulting system of equations can be highly nonlinear and a closed-form solution might not exist. For instance, consider the fair loss from Example 4. In this case $\nabla_x f(x,y) = 4\eta[1 - \frac{\eta}{\eta + \|x-y\|^2}](x-y) $, resulting in a system of equations given by
\begin{equation*}
    \sum_{r \in C^{t+1}(k)}4\eta\Big[1 - \frac{\eta}{\eta + \|x^{t+1}(k)-y_{r}\|^2}\Big](x^{t+1}(k)-y_r) = 0,
\end{equation*} which is highly nonlinear in the variable of interest, namely $x^{t+1}(k)$. It is clear that the above equation has no closed-form solution and would require an iterative solver to obtain the next centers. Instead, the gradient-based approach allows us to take $B \geq 1$ steps in the direction of the negative gradient, which is easy to compute. We can control the gap between the ideal update \eqref{eq:exact} and the gradient-based one via the parameter $B$, which allows us to take multiple gradient steps and get closer to \eqref{eq:exact}, at the cost of more computation. Similarly, it can be shown that no closed-form update satisfying \eqref{eq:exact} exists for both Huber and fair losses. Therefore, using a gradient-based approach allows for a simple and computationally cheap update rule for any differentiable non-Bregman loss. 

\paragraph{On the value of $\rho_0$.} As discussed in the main body, in general, it is not possible to provide an exact value of $\rho_0$. However, am estimate can be constructed as follows. First, in the proof of Theorem \ref{thm:cons-conv} ahead, we show $\|\bL\bx_\rho\| \leq \nicefrac{2\beta R_0}{\rho}$, where $\beta$ is the smoothness parameter, $R_0 = \max_{x \in \overline{co}(\D,\bx^0)}\|x\|$, with $\overline{co}(\D,\bx^0) \subset \R^d$ being the closure of the convex hull of the union of the joint dataset and center initialization. From Lemma \ref{lm:clust_convg}, we know that, for the consensus fixed point $\overline{\bx}$, there exists a $\epsilon_* = \epsilon_*(\overline{\bx}) > 0$, such that the clusters optimal with respect to any centers $\bx^\prime \in \R^{Kmd}$ which are $\epsilon_*$-close to $\overline{\bx} \in \R^{Kmd}$, are also optimal with respect to $\overline{\bx}$. Setting $\rho_0 = \nicefrac{\epsilon_*}{2\beta R_0}$, it follows that $\|\bL\bx_{\rho}\| \leq \epsilon_*$, implying $\bx_\rho$ is $\epsilon_*$-close to $\overline{\bx}$, for all $\rho \geq \rho_0$, guaranteeing a clustering of the full, joint data is produced for all $\rho \geq \rho_0$. The smoothness constant $\beta$ is a property of the loss $f$, independent of the data and can be estimated locally by each user. To estimate $R_0$, we first note that it suffices to find an upper bound on $\rho_0$, and proceed as follows. Each user $i \in [m]$ computes $R_i = \max_{x \in \overline{co}(\D_i,\bx_i^0)}\|x\|$, after which a distributed min-consensus algorithm \cite{jakubowicz-max,jadbabaie-min} is employed, to obtain $\overline{R} = \min_{i \in [m]}R_i$, at each user. Since $\overline{co}(\D_i,\bx_i^0) \subseteq \overline{co}(\D,\bx^0)$, it follows that $\overline{R} \leq R_0$. Setting $\overline{\rho} = \nicefrac{\epsilon_*}{2\beta\overline{R}}$, we get the desired upper bound $\overline{\rho} \geq \rho_0$. Finally, while the value of $\epsilon_*$ is impossible to estimate directly, as, per Lemma 5, it depends on the specific consensus fixed point $\overline{\bx}$ that we converge to,\footnote{Itself depending on center initialization and the dataset being clustered.} using $\overline{\rho}(\epsilon) = \nicefrac{\epsilon}{2\beta\overline{R}}$, guarantees $\|L\bx_\rho\| \leq \epsilon$, for any $\rho \geq \overline{\rho}(\epsilon)$ and any $\epsilon > 0$. Therefore, in practice, taking $\epsilon$ very small (e.g., $\epsilon \approx 10^{-6}$) and deploying our algorithm with $\rho \geq \overline{\rho}(\epsilon)$, guarantees that consensus is achieved up to numerical precision and that a clustering of the joint data will almost certainly be produced. Note that it is often impossible to explicitly quantify a parameter even in centralized clustering, where, e.g., it is well-known that Lloyd's algorithm is guaranteed to converge in finite time \cite{JMLR:v6:banerjee05b}, yet the convergence time heavily depends on the dataset and cluster initialization, e.g., \cite{Milligan1980}, and can not be quantified analytically. As such, the inability to provide an explicitly value of $\rho_0$ for which we are guaranteed to produce a clustering of the joint data is a fundamental issue inherent to the problem of clustering, rather than specific to our method.

\section{Missing proofs}\label{app:proofs}

In this section we provide the proofs omitted from the main body. Subsection~\ref{subsec:proofs1} provides proofs from Section~\ref{subsec:convergence}, 
Subsection~\ref{subsec:proofs2} provides proofs from Section~\ref{subsec:fixed-pt}, while Subsection~\ref{subsec:proofs3} provides proofs omitted from Section~\ref{subsec:consensus}.

\subsection{Proofs from Section~\ref{subsec:convergence}}\label{subsec:proofs1}

As stated in the main body, in order to prove Theorem \ref{thm:convergence}, a series of lemmas are introduced. The next result characterizes the behaviour of the distributed cost $J_\rho$.

\begin{lemma}\label{lm:J_rho-co-coerc}
    For each fixed clustering $C \in \mathcal{C}_{m,K,\D}$, the function $J_\rho$ is convex and $\beta_{L,\rho}$-smooth, with $\beta_{L,\rho} = \nicefrac{\beta}{\rho} + \lambda_{\max}(L)$.
\end{lemma}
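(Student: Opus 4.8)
The plan is to decompose the cost for a fixed clustering $C$ as $J_\rho(\cdot,C) = \frac{1}{\rho}J(\cdot,C) + Q$, where $Q(\bx) = \frac{1}{2}\langle \bx,\bL\bx\rangle$, and to establish convexity and the quantitative smoothness bound for each summand separately. I will then combine them, using the two elementary facts that convexity is preserved under nonnegative combinations and that gradient-Lipschitz constants are subadditive. Throughout I work with the variational form of smoothness from Assumption~\ref{asmpt:coerc} (rather than Hessians), since this accommodates losses such as Huber without any differentiability headaches beyond the single derivative that Assumption~\ref{asmpt:coerc} already presupposes.

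First I would treat $J(\cdot,C) = \sum_{i\in[m]} H(\bx_i,C_i) = \sum_{i,k}\sum_{r\in C_i(k)} w_{i,r} f(x_i(k),y_{i,r})$. The key structural observation is that, for a fixed clustering, $J$ is \emph{block-separable}: it is a sum of functions $g_{i,k}(x) = \sum_{r\in C_i(k)} w_{i,r} f(x,y_{i,r})$, each depending only on the disjoint coordinate block $x_i(k)$. Summing the two-sided inequality of Assumption~\ref{asmpt:coerc} over $r\in C_i(k)$ with the nonnegative weights $w_{i,r}$, and using the crucial bound $\sum_{r\in C_i(k)} w_{i,r} \le \sum_{i,r} w_{i,r} = 1$, gives for each block
\[
    0 \le g_{i,k}(x) - g_{i,k}(z) - \langle \nabla g_{i,k}(z), x - z\rangle \le \frac{\beta}{2}\|x-z\|^2 .
\]
Thus each block is convex and $\beta$-smooth, and because the blocks act on orthogonal coordinate sets, summing over $(i,k)$ shows $J(\cdot,C)$ is convex and $\beta$-smooth on $\R^{Kmd}$.

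Next I would handle $Q$. Convexity follows from $\bL = L\otimes I_{Kd} \succeq 0$, which holds because the graph Laplacian $L$ is positive semidefinite and a Kronecker product of PSD matrices is PSD. The relevant constant is $\lambda_{\max}(\bL)$, and by the Kronecker eigenvalue rule $\lambda_{\max}(L\otimes I_{Kd}) = \lambda_{\max}(L)\,\lambda_{\max}(I_{Kd}) = \lambda_{\max}(L)$. Concretely, since $\nabla Q(\bz) = \bL\bz$,
\[
    Q(\bx) - Q(\bz) - \langle \bL\bz,\, \bx-\bz\rangle = \tfrac{1}{2}\langle \bx-\bz,\, \bL(\bx-\bz)\rangle \in \Big[0,\ \tfrac{1}{2}\lambda_{\max}(L)\|\bx-\bz\|^2\Big].
\]
Finally I combine: scaling the $J$-inequality by $1/\rho>0$ and adding the $Q$-inequality yields, for all $\bx,\bz\in\R^{Kmd}$,
\[
    0 \le J_\rho(\bx,C) - J_\rho(\bz,C) - \langle \nabla J_\rho(\bz,C),\, \bx-\bz\rangle \le \tfrac{1}{2}\Big(\tfrac{\beta}{\rho} + \lambda_{\max}(L)\Big)\|\bx-\bz\|^2,
\]
which is precisely convexity together with $\beta_{L,\rho}$-smoothness for $\beta_{L,\rho} = \tfrac{\beta}{\rho} + \lambda_{\max}(L)$.

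The only genuinely delicate point is ensuring the cluster weight mass does not accumulate: it is the block-separable structure combined with $\sum_{r\in C_i(k)} w_{i,r}\le 1$ that pins the smoothness constant of $J$ at $\beta$, rather than letting it scale with the number of samples $N$. Everything else—the additive Lipschitz property, preservation of convexity under sums, and the Kronecker spectrum computation—is routine bookkeeping.
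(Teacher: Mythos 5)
Your proof is correct, and it reaches the stated constant by a route that genuinely differs from the paper's. Both arguments rest on the same skeleton: the decomposition $J_\rho(\cdot,C) = \frac{1}{\rho}J(\cdot,C) + \frac{1}{2}\langle \cdot, \bL\, \cdot\rangle$, the block-separable structure of $J$ with the crucial bound $\sum_{r\in C_i(k)} w_{i,r} \le 1$, and the Kronecker identity $\lambda_{\max}(L\otimes I_{Kd}) = \lambda_{\max}(L)$. The difference is the formalism used to certify smoothness. You work entirely with the variational (function-value) form from Assumption~\ref{asmpt:coerc}, summing the two-sided inequalities over samples and over disjoint coordinate blocks, and treating the quadratic consensus term exactly, so the final bound follows by simply adding inequalities. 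The paper instead proves that $\nabla J_\rho(\cdot,C)$ is $\beta_{L,\rho}$-Lipschitz --- via the triangle inequality, a Jensen step to control the weighted sums of gradient differences, and $\beta$-smoothness of $f$ --- and then invokes the standard fact that convexity together with Lipschitz-continuous gradients implies the quadratic upper bound. Your route is more elementary and self-contained: it needs neither Jensen's inequality nor the cited equivalence result. What the paper's route buys is the explicit gradient-Lipschitz estimate $\|\nabla J_\rho(\bx,C) - \nabla J_\rho(\bz,C)\| \le \beta_{L,\rho}\|\bx - \bz\|$ as a byproduct, which is reused verbatim later in the paper (in the proofs of Lemmas~\ref{lm:fix_pt} and~\ref{lm:final-step}, and as the input to the co-coercivity argument); under your approach that property still holds, but recovering it would require exactly the convexity-plus-smoothness equivalence you avoided here.
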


\begin{proof}
    Recall the clustering cost $J_\rho$ from~\eqref{eq:general-decentr}. Since at least one cluster is non-empty at each user, it readily follows that $J_\rho$ is convex, as a sum of convex functions. Next, we know from~\eqref{eq:distributed-gen} that we can represent the gradient of $J_\rho$ compactly as $\nabla J_\rho(\bx,C) = \nicefrac{1}{\rho}\nabla J(\bx,C) + \bL \bx$. Therefore, for each $\bx,\bz \in \R^{Kmd}$, we have
    \begin{equation}\label{eq:split}
    \begin{aligned}
        \| \nabla J_\rho(\bx,C) - \nabla J_\rho(\bz,C)\| &\leq \nicefrac{1}{\rho}\|\nabla J(\bx,C) - \nabla J(\bz,C)\| + \|\bL(\bx - \bz)\| \\ &\leq \nicefrac{1}{\rho}\|\nabla J(\bx,C) - \nabla J(\bz,C)\| + \lambda_{\max}(\bL)\|\bx - \bz\|,
    \end{aligned}
    \end{equation} where $\lambda_{\max}(\bL)$ is the largest eigenvalue of $\bL$. We now look at $\|\nabla J(\bx,C) - \nabla J(\bz,C)\|$. First note that, for each $i \in [m]$ and $k \in [K]$, the $i,k$-th component of $\nabla J(\bx,C) \in \R^{Kmd}$ is given by $[\nabla J(\bx,C)]_{i,k} = \sum_{r \in C_i(k)}w_{i,r}\nabla f(x_i(k),y_{i,r})$. It then follows that
    \begin{align*}
        \|\nabla J(\bx,C) - \nabla J(\bz,C)\|^2 &= \sum_{i \in [m]}\sum_{k \in [K]}\| [\nabla J(\bx,C) - \nabla J(\bz,C)]_{i,k} \|^2 \\ &= \sum_{i \in [m]}\sum_{k \in [K]}\Big\|\sum_{r \in C_i(k)}w_{i,r}\big[\nabla f(x_i(k),y_{i,r}) - \nabla f(z_i(k),y_{i,r}) \big] \Big\|^2 \\ &\stackrel{(a)}{\leq} \sum_{i \in [m]}\sum_{k \in [K]}\sum_{r \in C_i(k)}\frac{w_{i,r}}{\widetilde{w}_{i,k}}\Big\|\widetilde{w}_{i,k}\big[\nabla f(x_i(k),y_{i,r}) - \nabla f(z_i(k),y_{i,r}) \big] \Big\|^2 \\ &\stackrel{(b)}{\leq} \beta^2\sum_{i \in [m]}\sum_{k \in [K]}\widetilde{w}_{i,k}^2\|x_i(k) - z_i(k)\|^2 \stackrel{(c)}{\leq} \beta^2\|\bx - \bz\|^2,
    \end{align*} where $\widetilde{w}_{i,k} = \sum_{r \in C_i(k)}w_{i,r}$, $(a)$ follows from Jensen's inequality, $(b)$ follows from $\beta$-smoothness of $f$, while $(c)$ follows from the fact that $\widetilde{w}_{i,k} < 1$. Using the properties of the Kronecker product, it can be shown that $\lambda_{\max}(\bL) = \lambda_{\max}(L)$, i.e., the largest eigenvalue of $\bL \in \R^{Kmd \times Kmd}$ corresponds to the largest eigenvalue of $L \in \R^{m \times m}$. Plugging everything back in~\eqref{eq:split} yields  
    \begin{equation*}
        \| \nabla J_\rho(\bx,C) - \nabla J_\rho(\bz,C)\| \leq \nicefrac{\beta}{\rho}\|\bx - \bz\| + \lambda_{\max}(L)\|\bx - \bz\| = \beta_{L,\rho}\|\bx - \bz\|.
    \end{equation*} It can be shown that convexity and $\beta_{L,\rho}$-Lipschitz continuous gradients together imply $\beta_{L,\rho}$-smoothness, see, e.g., \cite{lectures_on_cvxopt}. This completes the proof. 
\end{proof}

Prior to stating the next result, note that the center update~\eqref{eq:grad_local} can be represented compactly as
\begin{equation}\label{eq:grad_global}
    \bx^{t,b+1} = \bx^{t,b} - \alpha \nabla J_\rho(\bx^{t,b},C^{t+1}) = \bx^{t,b} - \alpha \left(\frac{1}{\rho}\nabla J(\bx^{t,b},C^{t+1}) + \bL\bx^{t,b}\right),
\end{equation} where $\nabla J(\bx^{t,b}, C^{t+1}) \in \R^{Kmd}$ is the vector stacking of the gradients of $H$ with respect to $\bx_i$, whose $i$-th block, for any $i \in [m]$, is given by
\begin{equation}\label{eq:grad_J}
    \left[\nabla J(\bx^{t,b},C^{t+1})\right]_i =  \nabla H(\bx_i^{t,b},C_i^{t+1}) \in \R^{Kd}. 
\end{equation} We next prove that DGC-$\mathcal{F}_\rho$ generates a non-increasing sequence of values of $J_\rho$.

\begin{lemma}\label{lm:decr}
For the sequence $\{(\bx^t,C^t)\}_{t \in \N}$, generated by Algorithm~\ref{alg:dist-grad-cl}, with $\alpha < \frac{1}{\beta_{L,\rho}}$, the resulting sequence of costs $\{J_\rho(\bx^t,C^t)\}_{t \in \N}$ is non-increasing.
\end{lemma}

\begin{proof}
First, note that \eqref{eq:reassign} together with Assumption \ref{asmpt:g&f} implies that the clustering reassignment step decreases the cost since, at every time $t \geq 1$ and user $i \in [m]$,
\begin{align*}
        H(\bx^t_i,C_i^{t+1}) = \sum_{k \in [K]}\sum_{y \in C_i^{t+1}(k)}w_yf\left(x^t_i(k),y\right) \leq \sum_{k \in [K]}\sum_{y \in C_i^t(k)}w_yf\left(x^t_i(k),y\right) = H(\bx_i^t,C_i^t),
\end{align*} while the consensus part, $\langle\bx^t,\bL\bx^t\rangle$, remains unchanged. This readily implies that
\begin{align}\label{eq:leq1}
    J_\rho(\bx^t,C^{t+1}) = \frac{1}{\rho}J(\bx^t,C^{t+1}) + \frac{1}{2}\langle \bx^t,\bL\bx^t\rangle \leq \frac{1}{\rho}J(\bx^t,C^t) + \frac{1}{2}\langle \bx^t,\bL\bx^t\rangle = J_\rho(\bx^t,C^t).
\end{align} Next, from Lemma~\ref{lm:J_rho-co-coerc} we have that, for any $t \geq 0$\footnote{Note that, starting from the center initialization $\bx^0$, we first update the clusters to obtain $C^1$. From there, we perform $E$ center updates to obtain $\bx^1$, hence for the center update step, the counter starts from $t = 0$. This is different for the cluster update step, as we do not have a clustering at time $t = 0$, so cost decrease with respect to cluster update only starts from iteration $t = 1$.} and $b = 0,\ldots,B-1$,
\begin{align*}
    J_\rho(\bx^{t,b+1},C^{t+1}) \leq J_\rho(\bx^{t,b},C^{t+1}) + \left\langle \nabla J_\rho(\bx^{t,b},C^{t+1}), \bx^{t,b+1} - \bx^{t,b} \right\rangle + \frac{\beta_{L,\rho}}{2}\|\bx^{t,b+1} - \bx^{t,b} \|^2 .
\end{align*} Using \eqref{eq:grad_global}, we get
\begin{align}
    \begin{aligned}\label{eq:recursion}
        J_\rho(\bx^{t,b+1},C^{t+1}) \leq J_\rho(\bx^{t,b},C^{t+1}) - c(\alpha) \| \nabla J_\rho(\bx^{t,b},C^{t+1})\|^2,
    \end{aligned} 
\end{align} where $c(\alpha) = \alpha \left(1 - \frac{\alpha \beta_{L,\rho}}{2} \right)$. Applying~\eqref{eq:recursion} recursively, and recalling that $\bx^{t+1} = \bx^{t,B}$, $\bx^t = \bx^{t,0}$, we get that, for any time $t \geq 0$
\begin{equation}\label{eq:contraction}
    J_\rho(\bx^{t+1},C^{t+1}) \leq J_\rho(\bx^t,C^{t+1}) - c(\alpha)\sum_{b = 0}^{B-1}\|\nabla J_\rho(\bx^{t,b},C^{t+1})\|^2. 
\end{equation} Choosing $\alpha < \frac{1}{\beta_{L,\rho}}$ guarantees that $c(\alpha) > 0$, which readily implies
\begin{equation}\label{eq:leq2}
    J_\rho(\bx^{t+1},C^{t+1}) \leq J_\rho(\bx^t,C^{t+1}).
\end{equation} Finally, combining \eqref{eq:leq1} and \eqref{eq:leq2}, we get that, for any $t \geq 1$,
\begin{align*}
        J_\rho(\bx^{t+1},C^{t+1}) \leq J_\rho(\bx^t,C^{t+1}) \leq J_\rho(\bx^t,C^t),
\end{align*} which completes the proof.
\end{proof}

\begin{remark}
    We can see the benefit of performing $B$ rounds of center update in equation~\eqref{eq:contraction}, where a higher value of $B$ leads to a stronger decrease in the cost function $J_\rho$.
\end{remark}

The next result states that if two cluster centers are sufficiently close, their set of optimal clusterings match. The proof can be found in \cite{kar2019clustering,pmlr-v162-armacki22a}.

\begin{lemma}\label{lm:clust_convg}
For every $\bx \in \R^{Kmd}$, there exists an $ \epsilon_* = \epsilon_*(\bx) > 0$, such that, for any $\bx^{\prime} \in \mathbb{R}^{Kmd}$ satisfying $\max_{i \in [m], \: k \in [K]} g(x_i(k),x_i^{\prime}(k)) < \epsilon_*$, we have $U_{\bx^{\prime}} \subset U_{\bx}$.
\end{lemma}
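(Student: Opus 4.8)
The plan is to reduce the set inclusion $U_{\bx'} \subseteq U_\bx$ to a pointwise statement about nearest centers, and then lean on the triangle inequality granted by Assumption~\ref{asmpt:met} to argue that a sufficiently small center perturbation cannot manufacture a new nearest-center assignment. The conceptual crux is that perturbing the centers can only \emph{break} ties in the assignment rule \eqref{eq:reassign}, never create them, so the set of optimal clusterings can only shrink; this is exactly the direction of inclusion we need.

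Concretely, I would first fix $\bx$ and, for each user $i \in [m]$ and data index $r \in [N_i]$, define the set of nearest centers $S_{i,r} = \{k \in [K] : g(x_i(k),y_{i,r}) = \min_{l \in [K]} g(x_i(l),y_{i,r})\}$, with $S'_{i,r}$ the analogous set for $\bx'$. By Definition~\ref{def:U}, a clustering lies in $U_\bx$ precisely when every point $y_{i,r}$ is assigned to some index in $S_{i,r}$, and likewise for $U_{\bx'}$ and $S'_{i,r}$. Hence it suffices to show that, for $\bx'$ close enough to $\bx$, one has $S'_{i,r} \subseteq S_{i,r}$ for all $i,r$, which immediately yields $U_{\bx'} \subseteq U_\bx$.

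Next I would quantify the separation at each point: for every pair $(i,r)$ with $S_{i,r} \neq [K]$, set $\Delta_{i,r} = \min_{k \notin S_{i,r}} g(x_i(k),y_{i,r}) - \min_{l \in [K]} g(x_i(l),y_{i,r}) > 0$, and define $\epsilon_* = \tfrac{1}{2}\min_{(i,r):\,S_{i,r}\neq[K]} \Delta_{i,r}$ (taking any positive value if every $S_{i,r}=[K]$). Since the joint dataset is finite and $K<\infty$, this is a minimum over finitely many strictly positive numbers, so $\epsilon_* > 0$. The core estimate then follows from two applications of the triangle inequality: if $\max_{i,k} g(x_i(k),x_i'(k)) < \epsilon_*$, then for any non-minimizer $k \notin S_{i,r}$ and any $k^\star \in S_{i,r}$ one obtains $g(x_i'(k),y_{i,r}) - g(x_i'(k^\star),y_{i,r}) > \Delta_{i,r} - 2\epsilon_* \geq 0$, whence $k \notin S'_{i,r}$, establishing $S'_{i,r} \subseteq S_{i,r}$.

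The main obstacle is not the algebra but pinning down two delicate points: ensuring $\epsilon_*$ is strictly positive, which is where finiteness of $\D$ and $K$ is essential (otherwise the infimum of the gaps $\Delta_{i,r}$ could vanish), and correctly treating the degenerate case $S_{i,r} = [K]$, where no separation constraint is active and the inclusion holds vacuously. Both are precisely what make the direction $U_{\bx'} \subseteq U_\bx$, rather than the reverse, the correct and provable one.
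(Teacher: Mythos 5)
Your proof is correct: the reduction of $U_{\bx'} \subseteq U_{\bx}$ to the pointwise inclusion $S'_{i,r} \subseteq S_{i,r}$, the strictly positive gap $\Delta_{i,r}$ (well-defined by finiteness of $\D$ and $K$), and the two triangle-inequality estimates using Assumption~\ref{asmpt:met} together give exactly the claimed statement, with the degenerate case $S_{i,r}=[K]$ handled correctly. The paper itself does not spell out a proof but defers to the cited references \citep{kar2019clustering,pmlr-v162-armacki22a}, whose argument is precisely this separation-plus-perturbation reasoning, so your write-up matches the intended proof in essence.
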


The next result shows that any convergent subsequence of $\{\bx^t\}_{t \in \N}$ converges to a fixed point.

\begin{lemma}\label{lm:fix_pt}
Let $\{\bx^t\}_{t \in \N}$ be a sequence generated by DGC-$\mathcal{F}_\rho$, with $\alpha < \frac{1}{\beta_{L,\rho}}$. Then, any convergent subsequence converges to a fixed point.
\end{lemma}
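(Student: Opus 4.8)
The plan is to combine the descent machinery already established in Lemma~\ref{lm:decr} with the finiteness of the set of clusterings $\mathcal{C}_{m,K,\D}$ and the continuity afforded by Lemma~\ref{lm:J_rho-co-coerc}. The guiding idea is that a monotone, bounded-below cost sequence forces the per-step gradient norms to vanish; then, because there are only finitely many possible clusterings, I can freeze the discrete variable along a subsequence and pass to the limit in the continuous variable, certifying both conditions of Definition~\ref{def:fix-pt} simultaneously.

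First I would establish that $\nabla J_\rho(\bx^t, C^{t+1}) \to 0$ along the whole sequence. Combining \eqref{eq:leq1} and \eqref{eq:contraction} gives, for every $t$,
\begin{equation*}
    J_\rho(\bx^{t+1}, C^{t+1}) \leq J_\rho(\bx^t, C^t) - c(\alpha)\sum_{b=0}^{B-1}\|\nabla J_\rho(\bx^{t,b}, C^{t+1})\|^2,
\end{equation*}
where $c(\alpha) > 0$ since $\alpha < 1/\beta_{L,\rho}$. Because $f \geq 0$ and $\bL \succeq 0$, we have $J_\rho \geq 0$, so the non-increasing sequence $\{J_\rho(\bx^t, C^t)\}$ converges; telescoping the display then yields $\sum_{t}\sum_{b}\|\nabla J_\rho(\bx^{t,b}, C^{t+1})\|^2 < \infty$, and in particular the $b=0$ term $\|\nabla J_\rho(\bx^t, C^{t+1})\| \to 0$ (recalling $\bx^{t,0} = \bx^t$).

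Next, let $\{\bx^{t_n}\}$ be a subsequence with $\bx^{t_n} \to \bx^\star$. By Lemma~\ref{lm:clust_convg} applied at $\bx^\star$, for all $n$ large enough the optimal clusterings satisfy $U_{\bx^{t_n}} \subseteq U_{\bx^\star}$; since the assignment rule \eqref{eq:reassign} guarantees $C^{t_n+1} \in U_{\bx^{t_n}}$, we obtain $C^{t_n+1} \in U_{\bx^\star}$ for large $n$. Because $\mathcal{C}_{m,K,\D}$ is finite, I would extract a further subsequence (not relabeled) along which $C^{t_n+1} \equiv C^\star$ is constant, with $C^\star \in U_{\bx^\star}$ — this is condition~1 of Definition~\ref{def:fix-pt}. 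For condition~2, continuity of the map $\bx \mapsto \nabla J_\rho(\bx, C^\star)$ (a consequence of the smoothness in Lemma~\ref{lm:J_rho-co-coerc}) gives $\nabla J_\rho(\bx^\star, C^\star) = \lim_n \nabla J_\rho(\bx^{t_n}, C^\star) = 0$, using that $\|\nabla J_\rho(\bx^t, C^{t+1})\| \to 0$ along the full sequence. Hence $(\bx^\star, C^\star)$ satisfies Definition~\ref{def:fix-pt}, and since $C^\star \in U_{\bx^\star}$ with vanishing gradient, we have $C^\star \in \overline{U}_{\bx^\star}$ by Definition~\ref{def:Ubar}, so $\overline{U}_{\bx^\star} \neq \emptyset$ and $\bx^\star$ is a fixed point.

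The main obstacle I anticipate is the careful handling of the coupled discrete--continuous dynamics: the gradient vanishes only along the sequence of clusterings $\{C^{t+1}\}$ actually visited, which a priori change from step to step, so one cannot directly invoke continuity of a single fixed map. The resolution hinges on two facts working in tandem — finiteness of $\mathcal{C}_{m,K,\D}$ (to freeze $C^\star$ along a subsequence) and the neighborhood stability of optimal clusterings from Lemma~\ref{lm:clust_convg} (to guarantee the frozen $C^\star$ is genuinely optimal for the limit $\bx^\star$, not merely for the iterates). The one routine technical point to verify is that Euclidean convergence $\bx^{t_n} \to \bx^\star$ indeed triggers the $g$-distance hypothesis of Lemma~\ref{lm:clust_convg}, which is immediate for the continuous metrics $g$ considered here.
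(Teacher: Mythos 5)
Your proof is correct, and it relies on exactly the same ingredients as the paper's: the descent inequality \eqref{eq:contraction} combined with \eqref{eq:leq1} and telescoping to get summability of the squared gradient norms, Lemma~\ref{lm:clust_convg} to place the visited clusterings $C^{t_n+1}$ inside $U_{\bx^\star}$ for large $n$, finiteness of $\mathcal{C}_{m,K,\D}$, and Lipschitz continuity of $\nabla J_\rho(\cdot,C)$ from Lemma~\ref{lm:J_rho-co-coerc}. The difference is purely in the logical organization: the paper argues by contradiction, using finiteness of $U_{\bx^\star}$ to extract a uniform lower bound $\epsilon_1 = \min_{C \in U_{\bx^\star}}\|\nabla J_\rho(\bx^\star,C)\| > 0$ in \eqref{eq:nonz-grad}, propagating it to the visited clusterings in \eqref{eq:contra}, and then contradicting it by showing $\|\nabla J_\rho(\bx^\star, C^{t_s+1})\| \to 0$; you argue directly, using the same finiteness via pigeonhole to freeze a single clustering $C^\star$ along a sub-subsequence and then passing to the limit by continuity. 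Your direct version is marginally cleaner in two respects: it explicitly exhibits the witness $C^\star \in \overline{U}_{\bx^\star}$ required by Definition~\ref{def:Ubar}, rather than inferring its existence non-constructively, and it isolates the whole-sequence fact $\|\nabla J_\rho(\bx^t,C^{t+1})\| \to 0$, which the paper also derives (its \eqref{eq:sum}) but only exploits along the subsequence. The two uses of finiteness (positive minimum versus constant sub-subsequence) are interchangeable here, so the proofs are equivalent in substance; yours would be a reasonable drop-in replacement.
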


\begin{proof}
Let $\{\bx^{t_s} \}_{s \in \N}$ be a convergent subsequence of $\{\bx^t \}_{t \in \N}$. Let $\bx^\star \in \R^{Kmd}$ be its limit point and assume the contrary, that $\bx^\star$ is not a fixed point. By Definition~\ref{def:fix-pt}, this implies $\|\nabla J_\rho(\bx^\star,C)\| > 0$, for all $C \in U_{\bx^\star}$ As the number of possible clusterings is finite, we define
\begin{equation}\label{eq:nonz-grad}
    \epsilon_1 = \min_{C \in U_{\bx^\star}}\| \nabla J_\rho(\bx^\star,C) \| > 0.
\end{equation} Next, from the continuity of $g$, we know that, for any $\epsilon >0$, there exists a $\delta > 0$, such that, for any $x,x^\prime \in \R^d$, for which $\|x - x^\prime\| \leq \delta$, we have $g(x,x^\prime) < \epsilon$. Choose $\epsilon_* > 0$ from Lemma~\ref{lm:clust_convg}. From $\lim_{s \rightarrow \infty}\bx^{t_s} =\bx^\star$, we know that, for any fixed $\delta > 0$, there exists a sufficiently large $s_0 \in \N$, such that, for all $i \in [m]$, $k \in [K]$ and all $s \geq s_0$, $\|x^{t_s}_i(k) - x^\star_i(k)\| < \delta$. From the previous discussion, it then readily follows that there exists a $\delta_* > 0$ and a sufficiently large $s_0 \in \N$, such that $g(x^{t_s}_i(k),x^\star_i(k)) < \epsilon_*$, for all $i \in [m]$, $k \in [K]$ and $s \geq s_0$. Per Lemma~\ref{lm:clust_convg}, we then have $C_{\bx^{t_s+1}} \in U_{\bx^{t_s}} \subset U_{\bx^\star}$, for all $ s \geq s_0$. From \eqref{eq:nonz-grad}, it follows that, for all $s \geq s_0$,
\begin{equation}\label{eq:contra}
    \| \nabla J_\rho(\bx^
    \star,C^{t_s+1})\| \geq \epsilon_1. 
\end{equation} Next, using~\eqref{eq:contraction}, we have 
\begin{align*}
    \begin{aligned}
        J_\rho(\bx^{t+1},C^{t+1}) &\leq J_\rho(\bx^t,C^t) - c(\alpha)\sum_{b = 0}^{B-1}\|\nabla J_\rho(\bx^{t,b},C^{t+1}) \|^2 \\ &\leq \ldots \leq J_\rho(\bx^0,C^1) - c(\alpha)\sum_{s = 0}^t\sum_{b = 0}^{B-1}\|\nabla J_\rho(\bx^{s,e},C^{s+1}) \|^2.
    \end{aligned}
\end{align*} Rearranging, we get
\begin{align}\label{eq:sum}
\begin{aligned}
    c(\alpha)\sum_{s = 0}^t\sum_{b = 0}^{B-1}\|\nabla J_\rho(\bx^{s,e},C_{s+1}) \|^2 &\leq J_\rho(\bx^0,C^1).
\end{aligned}
\end{align} Additionally, note that
\begin{equation}\label{eq:sum_subseq}
    \sum_{j=0}^{s(t)}\|\nabla J_\rho(\bx^{t_j},C^{t_j+1}) \|^2\leq \sum_{j=0}^{s(t)}\sum_{b = 0}^{B-1}\|\nabla J_\rho(\bx^{t_j,e},C^{t_j+1}) \|^2 \leq \sum_{j = 0}^t\sum_{b = 0}^{B-1}\|\nabla J_\rho(\bx^j,C^{j+1}) \|^2, 
\end{equation} where $s(t) = \sup\{j: t_j \leq t \}$. Combining \eqref{eq:sum} and \eqref{eq:sum_subseq}, we get
\begin{equation}\label{eq:sum_subseq2}
    c(\alpha)\sum_{j=0}^{s(t)}\|\nabla J_\rho(\bx^{t_j},C^{t_j+1}) \|^2 \leq J_\rho(\bx^0,C^1).
\end{equation} Since the term on the right hand side of \eqref{eq:sum_subseq2} is finite and independent of $t$, we can take the limit as $t$ goes to infinity, to obtain
\begin{align}\label{eq:bound}
    c(\alpha)\sum_{j = 0}^\infty \|\nabla J_\rho(\bx^{t_j},C^{t_j+1}) \|^2 < \infty, 
\end{align} where we use the fact that $\lim_{t \rightarrow \infty}s(t) = \infty$. Using \eqref{eq:bound}, we get $\lim_{s \rightarrow \infty} \|\nabla J_\rho(\bx^{t_s},C^{t_s+1}) \|^2 = 0$. Next, fix an $\epsilon > 0$. By the definition of limits, there exists a $s_1 \in \N$, such that, for all $s \geq s_1$, $\|\nabla J_\rho(\bx^{t_s},C^{t_s+1}) \| < \epsilon$. On the other hand, from $\lim_{s \rightarrow\infty}\bx^{t_s} = \bx^\star$, we know that there exists a $s_2 \in \N$, such that $\|\bx^{t_s} - \bx^\star \| < \epsilon,$ for all $s \geq s_2$. As $C_{\bx^{t_s+1}} \in U_{\bx^{t_s}} \subset U_{\bx^\star}$, for all $ s \geq s_0$, we then have, for any $s \geq \max\{s_0,s_1,s_2\}$,
\begin{align*}
    \|\nabla J_\rho(\bx^\star,C^{t_s+1}) \| &\leq \|\nabla J_\rho(\bx^\star,C^{t_s+1}) - \nabla J_\rho(\bx^{t_s},C^{t_s+1}) \| + \| \nabla J_\rho(\bx^{t_s},C^{t_s+1}) \| \\ &\leq \beta_{L,\rho} \| \bx^\star - \bx^{t_s} \| + \epsilon < (\beta_{L,\rho} + 1)\epsilon,
\end{align*} where we used the Lipschitz continuity of gradients of $J_\rho$ in the second inequality. As $\epsilon > 0$ was arbitrarily chosen, we can conclude that
\begin{equation}
\label{eq:grad-limit}
    \lim_{s \rightarrow \infty}\| \nabla J_\rho(\bx^\star,C^{t_s+1}) \| = 0,
\end{equation} which clearly contradicts \eqref{eq:contra}. Hence, it follows that $\bx^\star$ is a fixed point, i.e., there exists a clustering $C \in U_{\bx^\star}$, such that $\nabla J_\rho(\bx^\star,C) = 0$.
\end{proof}

The next lemma shows that the clusters converge in finite time. The proof follows the same reasoning as the one in \cite{pmlr-v162-armacki22a}, and is omitted for brevity.

\begin{lemma}\label{lm:Clust_fin_conv}
For any convergent subsequence $\{\bx^{t_s}\}_{s \in \N}$ of $\{\bx^t\}_{t \in \N}$, generated by DGC-$\mathcal{F}_\rho$, there exists a $s_0 \in \N$, such that, for all $s \geq s_0$, $C^{t_s+1} \in \overline{U}_{\bx^\star}$, where $\bx^\star = \lim_{s \rightarrow \infty}\bx^{t_s}$.
\end{lemma}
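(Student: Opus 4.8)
The plan is to combine two facts already obtained (explicitly or implicitly) in the proof of Lemma~\ref{lm:fix_pt} with the crucial observation that the set of admissible clusterings $\mathcal{C}_{m,K,\D}$ is \emph{finite}. Recall that Definition~\ref{def:Ubar} requires $C \in \overline{U}_{\bx^\star}$ to satisfy two conditions: (i) $C \in U_{\bx^\star}$, and (ii) $\nabla J_\rho(\bx^\star,C) = 0$. The strategy is to show that each of these holds for $C^{t_s+1}$ once $s$ is large enough, and then take the larger of the two threshold indices.

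First I would re-derive condition (i). Since $C^{t_s+1}$ is produced by applying the cluster-assignment step \eqref{eq:reassign} to the centers $\bx^{t_s}$, Definition~\ref{def:U} gives $C^{t_s+1} \in U_{\bx^{t_s}}$ for every $s$. Because $\bx^{t_s} \to \bx^\star$ and $g$ is continuous, Lemma~\ref{lm:clust_convg} supplies an $\epsilon_* = \epsilon_*(\bx^\star) > 0$ and an $s_0 \in \N$ with $g(x_i^{t_s}(k),x_i^\star(k)) < \epsilon_*$ for all $i \in [m]$, $k \in [K]$ whenever $s \geq s_0$, whence $U_{\bx^{t_s}} \subset U_{\bx^\star}$ and therefore $C^{t_s+1} \in U_{\bx^\star}$ for all $s \geq s_0$. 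This is precisely the inclusion used on the way to \eqref{eq:contra}.

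Next I would handle condition (ii) using finiteness. The proof of Lemma~\ref{lm:fix_pt} already establishes $\lim_{s \to \infty}\|\nabla J_\rho(\bx^\star,C^{t_s+1})\| = 0$ in \eqref{eq:grad-limit}. To upgrade this from \emph{vanishing} to \emph{eventually exactly zero}, define $\epsilon_1 = \min\{\|\nabla J_\rho(\bx^\star,C)\| : C \in U_{\bx^\star}, \ \nabla J_\rho(\bx^\star,C) \neq 0\}$. If this set is empty, then every optimal clustering already has zero gradient and condition (ii) is automatic; otherwise, since $U_{\bx^\star} \subset \mathcal{C}_{m,K,\D}$ is finite, the minimum is attained and $\epsilon_1 > 0$. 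By \eqref{eq:grad-limit} there is an $s_1 \in \N$ with $\|\nabla J_\rho(\bx^\star,C^{t_s+1})\| < \epsilon_1$ for all $s \geq s_1$. For $s \geq \max\{s_0,s_1\}$ we then simultaneously have $C^{t_s+1} \in U_{\bx^\star}$ and a gradient norm strictly below $\epsilon_1$; by the definition of $\epsilon_1$ this forces $\nabla J_\rho(\bx^\star,C^{t_s+1}) = 0$, so both conditions of Definition~\ref{def:Ubar} are met and $C^{t_s+1} \in \overline{U}_{\bx^\star}$.

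The only subtle point — and the one I would guard most carefully — is the finiteness-driven dichotomy in the last step: the discreteness of $\mathcal{C}_{m,K,\D}$ is what prevents the gradient from approaching zero through a sequence of strictly positive values without ever attaining it, and it is also what guarantees the separating constant $\epsilon_1$ is strictly positive. Everything else is bookkeeping, namely reusing the inclusion $C^{t_s+1} \in U_{\bx^\star}$ and the limit \eqref{eq:grad-limit} from Lemma~\ref{lm:fix_pt} and taking the maximum of the two threshold indices.
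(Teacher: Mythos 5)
Your proof is correct, and it is essentially the argument the paper intends: the paper omits this proof entirely (deferring to the centralized reference), but your two ingredients — the inclusion $C^{t_s+1} \in U_{\bx^{t_s}} \subset U_{\bx^\star}$ via Lemma~\ref{lm:clust_convg}, and the finiteness-driven dichotomy with $\epsilon_1$ defined as the minimal nonzero gradient norm — are exactly the devices the paper itself uses in the proofs of Lemma~\ref{lm:fix_pt} and Lemma~\ref{lm:final-step}. The one point that genuinely needed checking is the one you flagged implicitly: equation~\eqref{eq:grad-limit} is derived in the paper inside a proof by contradiction, but its derivation (cost monotonicity, summability of squared gradient norms, Lipschitz continuity of $\nabla J_\rho$ for fixed clustering) never invokes the contradiction hypothesis, so reusing it unconditionally, as you do, is legitimate.
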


The following lemma shows that the generated sequence of cluster centers stays bounded. 

\begin{lemma}\label{lm:bdd-seq}
The sequence of centers $\{\bx^t \}_{t\in\N}$ generated by DGC-$\mathcal{F}_\rho$ stays bounded, i.e., there exists a $M_\rho > 0$, such that $\|\bx^t\| \leq M_\rho$, for all $t \in \N$.
\end{lemma}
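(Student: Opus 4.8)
The plan is to control $\{\bx^t\}$ through the cost $J_\rho$, exploiting its monotone decrease (Lemma~\ref{lm:decr}), the coercivity of $f$, and the non-expansiveness of the consensus map. Set $J_0 := J_\rho(\bx^0,C^1)$, which is finite. Chaining \eqref{eq:leq1}, \eqref{eq:recursion} and the inequalities in the proof of Lemma~\ref{lm:decr}, every intermediate iterate satisfies $J_\rho(\bx^{t,b},C^{t+1}) \le J_0$ (the cluster reassignment does not increase the cost, and each inner gradient step decreases it since $c(\alpha)>0$). Since $J_\rho(\bx,C) = \tfrac{1}{\rho}J(\bx,C) + \tfrac{1}{2}\langle \bx,\bL\bx\rangle$ with both summands non-negative ($f\ge 0$ and $\bL\succeq 0$), this single bound simultaneously yields $\langle \bx^{t,b},\bL\bx^{t,b}\rangle \le 2J_0$ and $J(\bx^{t,b},C^{t+1}) \le \rho J_0$ for all $t,b$.

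Next I would decouple the dynamics across cluster indices: for fixed $k$ the stack $z_k^{t,b} := (x_1^{t,b}(k)^\top,\ldots,x_m^{t,b}(k)^\top)^\top \in \R^{md}$ evolves on its own, because both $\bL = L\otimes I_{Kd}$ and the innovation term couple only coordinates sharing the same $k$. I first bound indices $k$ whose cluster is non-empty at some user $i_0$. Picking $r\in C_{i_0}^{t+1}(k)$, the bound $J(\bx^{t,b},C^{t+1})\le \rho J_0$ forces $f(x_{i_0}^{t,b}(k),y_{i_0,r}) \le \rho J_0/w_{\min}$ with $w_{\min}:=\min_{i,r}w_{i,r}>0$, so coercivity of $f(\cdot,y)$ over the finite set $y\in\D$ confines $x_{i_0}^{t,b}(k)$ to a bounded sublevel set, say $\|x_{i_0}^{t,b}(k)\|\le R_1$. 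Connectivity (Assumption~\ref{asmpt:graph}) makes the smallest nonzero Laplacian eigenvalue $\lambda_2(L)>0$, whence $\langle z_k^{t,b},(L\otimes I_d)z_k^{t,b}\rangle \ge \lambda_2(L)\,\|z_k^{t,b} - 1_m\otimes \bar x^{t,b}(k)\|^2$ with $\bar x^{t,b}(k)=\tfrac1m\sum_i x_i^{t,b}(k)$; combined with $\langle \bx^{t,b},\bL\bx^{t,b}\rangle \le 2J_0$ this gives $\|x_i^{t,b}(k)-\bar x^{t,b}(k)\|\le \sqrt{2J_0/\lambda_2(L)}$ for every $i$, and a triangle inequality yields $\|x_i^{t,b}(k)\|\le R_1 + 2\sqrt{2J_0/\lambda_2(L)} =: R_2$.

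The remaining indices $k$ are those whose cluster is empty at \emph{every} user during an inner loop; this is the main obstacle, since the cost provides no control there, and it arises precisely because the consensus term updates centers even for empty clusters (unlike the centralized method). For such $k$ the innovation vanishes and the inner loop reduces to the pure consensus recursion $z_k^{t,b+1} = (I_{md}-\alpha(L\otimes I_d))z_k^{t,b}$; the step-size condition gives $0<\alpha\lambda_{\max}(L)<1$, so $I_{md}-\alpha(L\otimes I_d)$ has spectral norm $1$ and the map is non-expansive, whence $\|z_k^{t+1}\|\le\|z_k^t\|$. Thus, for each $k$ and each $t$, either the cluster is non-empty at some user, giving $\|z_k^{t+1}\|\le \sqrt{m}\,R_2$, or it is empty everywhere and the norm does not increase. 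An induction on $t$ tracking the last iteration at which cluster $k$ was non-empty then yields $\|z_k^t\|\le \max\{\|z_k^0\|,\sqrt{m}\,R_2\}$ for all $t$, and summing squares over $k\in[K]$ produces the uniform bound $M_\rho$.
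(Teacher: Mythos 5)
Your proof is correct, and it takes a genuinely different route from the paper's. The paper argues by contradiction: it assumes $\|x_i^{t_s}(k)\|\to\infty$ along a subsequence, observes that persistent disagreement among users would force the Laplacian term $\langle \bx,\bL\bx\rangle$ (hence $J_\rho$) to blow up, and in the remaining consensus-approaching case traces back to the last time $\overline{t}_s \le t_s$ at which cluster $k$ was non-empty at some user --- the intermediate updates being pure consensus/diffusion steps --- so that all users' $k$-th centers blow up at $\overline{t}_s$ as well, and coercivity then forces $J_\rho(\bx^{\overline{t}_s},C^{\overline{t}_s})\to\infty$, contradicting the monotonicity from Lemma~\ref{lm:decr}. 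You instead give a direct, quantitative argument from the same pillars (monotone cost, coercivity, connectivity, and the observation that empty clusters undergo pure consensus updates): the uniform bound $J_\rho(\bx^{t,b},C^{t+1})\le J_0$ splits into a clustering bound, which via coercivity and $w_{\min}>0$ confines the center of any user whose cluster is non-empty to a sublevel set of radius $R_1$, and a Laplacian bound, which via the spectral gap $\lambda_2(L)>0$ propagates this to every user (radius $R_2$); for indices $k$ whose cluster is empty at all users, the step-size condition makes $I_{md}-\alpha(L\otimes I_d)$ symmetric with spectrum in $(0,1]$, hence non-expansive, and a simple induction on $t$ over this empty/non-empty dichotomy closes the proof. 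Your induction plays exactly the role of the paper's trace-back to $\overline{t}_s$, and your spectral-gap step is the quantitative counterpart of the paper's \emph{disagreement implies blow-up} step. What your version buys is an explicit constant $M_\rho$ (in terms of $J_0$, $w_{\min}$, $\lambda_2(L)$ and sublevel sets of $f$) and no subsequence or limit arguments; the paper's version avoids the spectral-gap inequality and the non-expansiveness computation at the price of a non-constructive contradiction.
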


\begin{proof}
By Lemma \ref{lm:decr}, we know that
\begin{equation}\label{eq:contr-setup}
    \ldots \leq J_\rho(\bx^t,C^{t+1}) \leq \ldots \leq J_\rho(\bx^1,C^1) \leq J_\rho(\bx^0,C^1) < \infty.
\end{equation} Next, suppose the contrary, that the sequence of centers $\{\bx^t\}_{t \in \N}$ is unbounded. This implies the existence of a user $i \in [m]$, cluster $k \in [K]$, and a subsequence $t_s$, $s  \in \N$, such that $\lim_{s \rightarrow \infty}\|x_i^{t_s}(k)\| = \infty$. From Assumption~\ref{asmpt:graph}, if consensus for the $k$-th cluster is not reached, i.e., if there exists a $j \in [m]$ such that $\lim_{s \rightarrow\infty}\|x_i^{t_s}(k) - x_j^{t_s}(k)\| > 0$, then clearly $\lim_{s \rightarrow \infty}\langle \bx^{t_s},\bL\bx^{t_s} \rangle \rightarrow \infty$, which implies $J_\rho(\bx^{t_s},C^{t_s}) \rightarrow \infty$, contradicting~\eqref{eq:contr-setup}. 

Therefore, it must be that $\lim_{s \rightarrow \infty}\|x_i^{t_s}(k) - x_j^{t_s}(k)\| = 0$, for all $j \in [m]$. For each $s \in \N$, let $\overline{t}_s = \max\big\{t \leq t_s: \text{ there exists a } j \in [m] \text{ such that } C_j^t(k) \ne \emptyset \big\}$, i.e., $\overline{t}_s$ is the largest element in the sequence prior to $t_s$, such that the $k$-th cluster of at least one user is non-empty.  We then have the following possibilities:
\begin{enumerate}
    \item If $\overline{t}_s = t_s$, then $x_j^{t_s}(k) = x_j^{\overline{t}_s}(k)$, for all $j \in [m]$.
    \item If $\overline{t}_s < t_s$, then, recalling the update rule \eqref{eq:grad_local}, for all $j \in [m]$, we have 
    \begin{align*}
        x_j^{t_s}(k) &= x_j^{t_s-1,B-1}(k) - \alpha\sum_{l \in \calN_j}\left(x_j^{t_s - 1,B-1}(k) - x_l^{t_s - 1,B-1}(k)\right) \\ &= x_j^{t_s - 1,B-2}(k) - \alpha\sum_{l \in \calN_j}\left(x_j^{t_s - 1,B-1}(k) - x_l^{t_s - 1,B-1}(k) + x_j^{t_s - 1,B-2}(k) - x_l^{t_s - 1,B-2}(k)\right) \\ &= \ldots = x_j^{t_s - 1}(k) - \alpha\sum_{b = 0}^{B - 1}\sum_{l \in \calN_j}\left(x_j^{t_s - 1,b}(k) - x_l^{t_s - 1,b}(k)\right) \\ &= \ldots = x_j^{\overline{t}_s}(k) - \alpha\sum_{r = 1}^{t_s - \overline{t}_s}\sum_{b = 0}^{B - 1}\sum_{l \in \calN_j}\left(x_j^{t_s - r,b}(k) - x_l^{t_s - r,b}(k)\right).
    \end{align*} By the definition of $\overline{t}_s$, if $\overline{t}_s < t_s$, it follows that, for the $k$-th center, the algorithm only performs $B(t_s - \overline{t}_s)$ consensus steps between times $\overline{t}_s$ and $t_s$, i.e., only diffuses the $k$-th center estimates $x^{\overline{t}_s}_j(k)$, $j \in [m]$, across the network\footnote{To be more precise, since the algorithm performs $E(t_s - \overline{t}_s)$ consensus steps, the diffusion is performed across the $E(t_s - \overline{t}_s)$-hop neighbours. For a vertex $i$, the set of $K$-hop neighbours of $i$ is the set of vertices that can be reached from $i$ by traversing at most $K$ edges, see, e.g., \cite{chung1997spectral,cvetkovic_rowlinson_simic_1997}.}.
\end{enumerate} From $1.$ and $2.$ we can readily conclude that $x_j^{t_s}(k)$ only depends on the $k$-th center estimates at time $\overline{t}_s$, $x_l^{\overline{t}_s}(k)$, $l \in [m]$, for all users.  From the preceding discussion, the facts that $\lim_{s \rightarrow \infty}\|x_i^{t_s}(k)\|=\infty$ and $\lim_{s \rightarrow \infty}\|x_i^{t_s}(k) - x_j^{t_s}(k)\| = 0$, for all $j \in [m]$, we can readily conclude that $\|x_j^{\overline{t}_s}(k) \| \rightarrow \infty$, for all $j \in [m]$. For a center $\bx\in \mathbb R^{Kmd}$ and clustering $C$, define
\begin{equation*}
    H_k(\bx,C) = \sum_{j \in [m]}\sum_{r \in C_j(k)}w_{j,r} f(x_j(k),y_{j,r}),
\end{equation*} i.e., for a given center and clustering, $H_k$ defines the cost associated with the $k$-th cluster across all the users. Combining the facts that $\|x_j^{\overline{t}_s}(k) \| \rightarrow \infty$, for all $j \in [m]$, and that for every $s \in \N$, there exists a $j \in [m]$, such that $C_j^{\overline{t}_s}(k) \neq \emptyset$, with Assumption~\ref{asmpt:coerc}, we get
\begin{equation*}
    \lim_{s \rightarrow \infty} H_k\left(\bx^{\overline{t}_s},C^{\overline{t}_s}\right) = \lim_{s \rightarrow \infty} \sum_{j \in [m]}\sum_{r \in C_j^{\overline{t}_s}(k)}w_{j,r}f(x_j^{\overline{t}_s}(k),y_{j,r}) = \infty.
\end{equation*} It is easy to see that unboundness of $H_k$ implies unboundedness of $J_\rho$, i.e., we have \linebreak $\lim_{s \rightarrow \infty} J_\rho(\bx^{\overline{t}_s}, C^{\overline{t}_s})=\infty$, clearly contradicting~\eqref{eq:contr-setup}. Therefore, the desired claim follows.
\end{proof}

The next lemma shows that, if a point in the sequence of centers is sufficiently close to a fixed point, then all the subsequent points remain in the neighborhood of the fixed point.

\begin{lemma}\label{lm:final-step}
Let $\{\bx^t\}_{t \in \N}$ be the sequence of centers generated by DGC-$\mathcal{F}_\rho$, with the step-size satisfying $\alpha < \frac{1}{\beta_{L,\rho}}$. Let $\bx^\star \in \R^{Kmd}$ be a fixed point, in the sense of Definition~\ref{def:fix-pt}. Then, there exists an $\epsilon^\star > 0$, for which, for all $\epsilon \in (0,\epsilon^\star)$, there exists a $ t_{\epsilon} \in \N$, such that, if $\|\bx^{t_0} - \bx^\star \| \leq \epsilon$, for some $t_0 > t_{\epsilon}$, then $\|\bx^t - \bx^\star \| \leq \epsilon$, for all $t \geq t_0$.
\end{lemma}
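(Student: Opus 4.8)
The plan is to combine the non-expansiveness of gradient descent on each convex, $\beta_{L,\rho}$-smooth map $J_\rho(\cdot,C)$ with a summability argument that forbids ``bad'' clusterings once the iterate is both close to $\bx^\star$ and late in the sequence. The central difficulty is that a clustering $C \in U_{\bx^\star}$ need not satisfy $\nabla J_\rho(\bx^\star,C)=0$: only clusterings in $\overline{U}_{\bx^\star}$ do, and the naive contraction argument only controls the update when the gradient vanishes at $\bx^\star$. I would therefore first partition $U_{\bx^\star}$ into the \emph{good} clusterings $\overline{U}_{\bx^\star}$ and the \emph{bad} ones $U_{\bx^\star}\setminus\overline{U}_{\bx^\star}$. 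Since $\bx^\star$ is a fixed point, $\overline{U}_{\bx^\star}\neq\emptyset$, and since there are finitely many clusterings, $\epsilon_1 := \min_{C\in U_{\bx^\star}\setminus\overline{U}_{\bx^\star}}\|\nabla J_\rho(\bx^\star,C)\| > 0$ (if the set is empty the bad case never occurs and only the contraction argument below is needed). Using continuity of $g$ together with Lemma~\ref{lm:clust_convg}, I would pick $\epsilon_0>0$ so that $\|\bx-\bx^\star\|<\epsilon_0$ forces $U_{\bx}\subseteq U_{\bx^\star}$, and set $\epsilon^\star = \min\{\epsilon_0,\ \epsilon_1/(2\beta_{L,\rho})\}$.

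The two quantitative facts I would establish are the following. First, for a good clustering $C\in\overline{U}_{\bx^\star}$, each center sub-step $\bx\mapsto \bx-\alpha\nabla J_\rho(\bx,C)$ is non-expansive toward $\bx^\star$: because $\nabla J_\rho(\bx^\star,C)=0$ and $J_\rho(\cdot,C)$ is convex and $\beta_{L,\rho}$-smooth (Lemma~\ref{lm:J_rho-co-coerc}), co-coercivity gives $\|\bx-\alpha\nabla J_\rho(\bx,C)-\bx^\star\|\le\|\bx-\bx^\star\|$ whenever $\alpha<2/\beta_{L,\rho}$, which is implied by the standing assumption $\alpha<1/\beta_{L,\rho}$. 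Iterating this over all $B$ center-update rounds shows that every intermediate $\bx^{t,b}$, and in particular $\bx^{t+1}=\bx^{t,B}$, stays within $\|\bx^t-\bx^\star\|$ of $\bx^\star$. Second, for a bad clustering $C$, the Lipschitz bound yields $\|\nabla J_\rho(\bx,C)\| \ge \epsilon_1 - \beta_{L,\rho}\|\bx-\bx^\star\| \ge \epsilon_1/2$ whenever $\|\bx-\bx^\star\|\le\epsilon^\star$.

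Next I would fix the time threshold $t_\epsilon$ through a summability estimate. Telescoping the descent inequality~\eqref{eq:contraction} together with the clustering-decrease~\eqref{eq:leq1} and the lower bound $J_\rho\ge 0$ gives $c(\alpha)\sum_{t}\sum_{b=0}^{B-1}\|\nabla J_\rho(\bx^{t,b},C^{t+1})\|^2 \le J_\rho(\bx^0,C^1)<\infty$, so the tail sums vanish; I would choose $t_\epsilon$ so that $\sum_{t\ge t_\epsilon}\sum_{b}\|\nabla J_\rho(\bx^{t,b},C^{t+1})\|^2 < (\epsilon_1/2)^2$. The proof then proceeds by induction on $t\ge t_0$. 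Assuming $\|\bx^t-\bx^\star\|\le\epsilon\le\epsilon^\star$, the reassignment step produces $C^{t+1}\in U_{\bx^t}\subseteq U_{\bx^\star}$. If $C^{t+1}$ were bad, then by the second fact the single sub-term $\|\nabla J_\rho(\bx^{t,0},C^{t+1})\|^2\ge(\epsilon_1/2)^2$ would already exceed the entire tail sum over $t\ge t_\epsilon\,($which contains it$)$, a contradiction; hence $C^{t+1}\in\overline{U}_{\bx^\star}$, and by the first fact $\|\bx^{t+1}-\bx^\star\|\le\|\bx^t-\bx^\star\|\le\epsilon$, closing the induction.

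The main obstacle is precisely the exclusion of bad clusterings near $\bx^\star$. Such clusterings are genuinely selectable near the fixed point, arising from tie-breaking at data points equidistant (in $g$) from two centers at $\bx^\star$, and each carries a gradient bounded away from zero, which would eject the iterate from the $\epsilon$-ball. The role of the threshold $t_\epsilon$ is exactly to exhaust the finite ``gradient budget'' guaranteed by the summability bound, so that once the iterate is simultaneously close and late, no bad clustering can be afforded and the dynamics reduce to a non-expansive gradient map with $\bx^\star$ as its fixed point. Everything else is routine; the delicate point is choosing $\epsilon^\star$ and $t_\epsilon$ consistently so that the induction hypothesis feeds both the large-gradient lower bound for bad clusterings and the non-expansiveness for good ones.
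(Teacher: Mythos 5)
Your proposal is correct and follows essentially the same route as the paper's proof: the same partition of $U_{\bx^\star}$ into $\overline{U}_{\bx^\star}$ and its complement with $\epsilon_1$ the minimal bad-clustering gradient norm, the same choice $\epsilon^\star \sim \min\{\delta_*,\epsilon_1/\beta_{L,\rho}\}$, the same contradiction ruling out bad clusterings once the iterate is both close and late, and the same co-coercivity/non-expansiveness argument over the $B$ sub-steps for good clusterings. The only difference is bookkeeping: you calibrate $t_\epsilon$ by making the tail of $\sum_t\sum_b\|\nabla J_\rho(\bx^{t,b},C^{t+1})\|^2$ smaller than $(\epsilon_1/2)^2$, whereas the paper requires $J_\rho(\bx^t,C^t)$ to be within $\frac{c(\alpha)}{2}(\epsilon_1-\beta_{L,\rho}\epsilon)^2$ of its limit $J_\rho^\star$ — two equivalent ways of exhausting the same descent budget.
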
 

\begin{proof}
Recall that, by Lemma \ref{lm:decr}, the sequence of costs $\{J_\rho(\bx^t,C^t) \}_{t \in \N}$ is non-increasing. Moreover, since $J_\rho \geq 0$, we know that the limit of the sequence of costs exists and is finite. Let
\begin{equation}\label{eq:J*def}
    J_\rho^\star = \lim_{t \rightarrow \infty}J_\rho(\bx^t,C^t).
\end{equation} By assumption, $\overline{U}_{\bx^\star} \ne \emptyset$. From the definition of $\overline{U}_{\bx^\star}$, for all $C \in U_{\bx^\star} \setminus \overline{U}_{\bx^\star}$ we have
\begin{equation}\label{eq:grad-nonz}
    \|\nabla J_\rho(\bx^\star,C) \| > 0.
\end{equation} As $U_{\bx^\star}$ is a finite set, we can define $\epsilon_1 = \min_{C \in U_{\bx^\star} \setminus \overline{U}_{\bx^\star}} \|\nabla J_\rho(\bx^\star,C)\| > 0$. Let $\epsilon_* > 0$ be such that Lemma \ref{lm:clust_convg} holds. From the continuity of $g$, we know that there exists a $\delta_* > 0$, such that, for all $\bx \in \R^{Kmd}$, $i \in [m]$, $k \in [K]$,
\begin{equation}\label{eq:delta}
    \|\bx - \bx^\star\| < \delta_* \implies g(x_i(k),x^\star_i(k)) < \epsilon_* .
\end{equation} Define
\begin{equation}\label{eq:eps*}
    \epsilon^\star = \min\bigg\{\delta_*, \frac{\epsilon_1}{\beta_{L,\rho}} \bigg\}. 
\end{equation} For an arbitrary $\epsilon \in (0,\epsilon^\star)$, let $t_0 \in \N$ be such that, for all $t \geq t_0$, 
\begin{equation}\label{eq:condition}
    J_\rho(\bx^t,C^t) \leq J^\star_\rho + \frac{c(\alpha)}{2}(\epsilon_1 - \beta_{L,\rho}\epsilon)^2, 
\end{equation} with $c(\alpha)$ defined as in Lemma \ref{lm:decr}. Note that such a choice of $t_0$ is possible, from \eqref{eq:J*def} and the fact that $(\epsilon_1 - \beta_{L,\rho}\epsilon)^2 > 0$. Our goal now is to show that, for a fixed $\epsilon\in (0,\epsilon^\star)$, if for some $t$ such that $t \geq t_0$ and $\|\bx^t - \bx^\star\| < \epsilon$, then $\|\bx^{t+1} - \bx^\star \| < \epsilon$.

First note that, if $t \geq t_0$ and $\|\bx^t - \bx^\star\| < \epsilon$, it holds that $C^{t+1} \in \overline{U}_{\bx^\star}$. To see this, assume the contrary, $\| \bx^t - \bx^\star \| < \epsilon$ and $C^{t+1} \notin \overline{U}_{\bx^\star}$. It follows from \eqref{eq:eps*} that $\| \bx^t - \bx^\star \| < \delta_*$. From \eqref{eq:delta} and Lemma \ref{lm:clust_convg}, we then have $U_{\bx^t} \subset U_{\bx^\star}$, and hence, $C^{t+1} \in U_{\bx^\star}$. Using Lipschitz continuity of gradients of $J_\rho$, we get
\begin{equation}\label{eq:step1}
    \|\nabla J_\rho(\bx^t,C^{t+1}) - \nabla J_\rho(\bx^\star,C^{t+1}) \| \leq \beta_{L,\rho}\|\bx^t - \bx^\star\| \leq \beta_{L,\rho}\epsilon.
\end{equation} As $C^{t+1} \notin \overline{U}_{\bx^\star}$, from \eqref{eq:grad-nonz}, we have
\begin{equation}\label{eq:step2}
    \|\nabla J_\rho(\bx^\star,C^{t+1}) \| \geq \epsilon_1.
\end{equation} Applying the triangle inequality, \eqref{eq:step1} and \eqref{eq:step2}, we get 
\begin{equation}\label{eq:above}
    \|\nabla J_\rho(\bx^t,C^{t+1}) \| \geq \epsilon_1 - \beta_{L,\rho}\epsilon.
\end{equation} Note that by \eqref{eq:eps*}, the right-hand side of \eqref{eq:above} is positive. Combining \eqref{eq:contraction},~\eqref{eq:condition},~\eqref{eq:above}, we get
\begin{align*}
    \begin{aligned}
        J_\rho(\bx^{t+1},C^{t+1}) &\leq J_\rho(\bx^t,C^t) - c(\alpha)\|\nabla J_\rho(\bx^t,C^{t+1}) \|^2 \\ &\leq J_\rho^\star + \frac{c(\alpha)}{2}(\epsilon_1 - \beta_{L,\rho}\epsilon)^2 - c(\alpha)\| \nabla J_\rho(\bx^t,C^{t+1}) \|^2 \\ &\leq J_\rho^\star + \frac{c(\alpha)}{2}(\epsilon_1 - \beta_{L,\rho}\epsilon)^2 - c(\alpha)(\epsilon_1 - \beta_{L,\rho}\epsilon)^2 < J_\rho^\star, 
    \end{aligned} 
\end{align*} which is a contradiction. Hence, $C^{t+1} \in \overline{U}_{\bx^\star}$. Next, it can be shown that convexity and $\beta_{L,\rho}$-smoothness of $J_\rho$ together imply $\beta_{L,\rho}$-co-coercivity of $J_\rho$, i.e., for each fixed clustering $C \in \mathcal{C}_{m,K,\D}$ and $\bx,\bz \in \R^{Kmd}$, we have
\begin{equation*}
    \langle \nabla J_\rho(\bx,C) - \nabla J_\rho(\bz,C), \bx - \bz \rangle \geq \frac{1}{\beta_{L,\rho}}\|\nabla J_\rho(\bx,C) - \nabla J_\rho(\bz,C) \|^2.
\end{equation*} For a formal account of this result, see, e.g., \cite{lectures_on_cvxopt}. Combining this fact with the update rule \eqref{eq:grad_global}, and the fact that $C^{t+1} \in \overline{U}_{\bx^\star}$, we have
\begin{align*}
    \begin{aligned}
        \|\bx^{t+1} - \bx^\star \|^2 &= \|\bx^{t,B-1} - \alpha \nabla J_\rho(\bx^{t,B-1},C^{t+1}) - \bx^\star \|^2 \\ &= \|\bx^{t,B-1} - \bx^\star \|^2 + \alpha^2\|\nabla J_\rho(\bx^{t,B-1},C^{t+1}) \|^2 - 2\alpha\langle \nabla J_\rho(\bx^{t,B-1},C^{t+1}), \bx^t - \bx^\star \rangle \\ &\leq \|\bx^{t,B-1} - \bx^\star \|^2 - \alpha\left(\frac{1}{\beta_{L,\rho}} - \alpha \right)\|\nabla J_\rho(\bx^{t,B-1},C^{t+1}) \|^2.
    \end{aligned}
\end{align*} Repeating the argument recursively and recalling that $\bx^{t,0} = \bx^t$, we get
\begin{equation}\label{eq:seq-decr}
    \|\bx^{t+1} - \bx^\star \|^2 \leq \|\bx^t - \bx^\star\|^2 - \alpha\left(\frac{1}{\beta_{L,\rho}} - \alpha\right)\sum_{b = 0}^{B-1}\|\nabla J_\rho(\bx^{t,b},C^{t+1})\| \leq \|\bx^t - \bx^\star\|^2 < \epsilon^2.
\end{equation} where second inequality follows from the step-size choice $\alpha < \frac{1}{\beta_{L,\rho}}$. Therefore, we have shown that $\| \bx^t - \bx^\star \| < \epsilon$ implies $\| \bx^{t+1} - \bx^\star \| < \epsilon$. The same result holds for all $s > t$ inductively.
\end{proof}

\begin{remark}
    We can again see the benefit of performing $B$ rounds of center update in~\eqref{eq:seq-decr}, where higher values of $B$ lead to stronger decrease in distance of centers from a fixed point.
\end{remark}

We are now ready to prove our main result.

\begin{proof}[Proof of Theorem~\ref{thm:convergence}]
By Lemma \ref{lm:decr} and the fact that the corresponding sequence of costs $\{J_\rho(\bx^t,C^t)\}_{t \in \N}$ is nonnegative, the monotone convergence theorem states that this sequence converges to some $J^\star_\rho \geq 0$. On the other hand, by Bolzano-Weierstrass theorem and Lemma \ref{lm:bdd-seq}, the sequence $\{\bx^t\}_{t \in \N}$ has a convergent subsequence, $\{\bx^{t_s}\}_{s \in \N}$, with some $\bx^\star \in \mathbb{R}^{Kmd}$ as its limit. From the continuity of $J_\rho$ and convergence of $\bx^{t_s}$, we can then conclude that $J^\star_\rho = \lim_{s \rightarrow \infty}J_\rho(\bx^{t_s},C^{t_s}) = J(\bx^\star, C^\star)$. Lemma \ref{lm:fix_pt} then implies that $\bx^\star$ is a fixed point. Finally, Lemmas \ref{lm:Clust_fin_conv} and \ref{lm:final-step} imply the convergence of the full sequence, i.e., $\lim_{t \rightarrow \infty}\bx^t = \bx^\star$. Convergence of clusters in finite  time is implied by Lemmas \ref{lm:clust_convg}, \ref{lm:Clust_fin_conv}.
\end{proof}

\subsection{Proofs from Section \ref{subsec:fixed-pt}}\label{subsec:proofs2}

\noindent \textbf{Example of Bregman distances that are squares of metrics}. For the squared Euclidean norm $f(x,y) = \|x - y \|^2$, we have $g(x,y) = \sqrt{f(x,y)} = \|x - y\|$. Similarly, for the Mahalanobis-like Bregman distance $f(x,y) = \|x - y\|_A^2$, for a positive definite matrix $A$, we have $g(x,y) = \sqrt{f(x,y)} = \|x - y\|_A$. For further examples, see \cite{pmlr-v162-armacki22a}. 

\begin{proof}[Proof of Lemma \ref{lm:fix_pt}]
    By Definition~\ref{def:fix-pt}, we know that $(\bx^\star,C^\star)$ must satisfy
    \begin{equation}\label{eq:fix-pt1}
        0 = \nabla J_\rho(\bx^\star,C^\star) = \frac{1}{\rho}J(\bx^\star,C^\star) + \bL\bx^\star.
    \end{equation} By the definition of $J(\bx^\star,C^\star)$ and $\bL$, it is not hard to see that, for each $i \in [m]$, the $k$-th component of the gradient of $J_\rho(\bx^\star,C^\star)$, for any $k \in [K]$, is given by
    \begin{equation}\label{eq:fix-pt2}
        [\nabla J(\bx^\star,C^\star)]_{i,k} = \frac{1}{\rho}\sum_{r \in C_i^\star(k)}w_{i,r}\nabla f(x_i^\star(k),y_{i,r}) + \sum_{j \in \calN_i}(x_i^\star(k) - x_j^\star(k)).
    \end{equation} Using the definition of Bregman divergence, we have
    \begin{equation}\label{eq:fix-pt3}
        \nabla_x f(x,y) = -\nabla \psi(x) + \nabla \psi(x) + \nabla^2\psi(x)(x - y) = \nabla^2\psi(x)(x - y). 
    \end{equation} Combining~\eqref{eq:fix-pt1},~\eqref{eq:fix-pt2} and~\eqref{eq:fix-pt3}, for any $i \in [m]$ and $k \in [K]$, we get
    \begin{align}
        &\frac{1}{\rho}\sum_{r \in C_i^\star(k)}w_{i,r}\nabla^2\psi(x_i^\star(k))(x_i^\star(k) - y_{i,r}) + \sum_{j \in \calN_i}(x^\star_i(k) - x^\star_j(k)) = 0 \iff \nonumber \\ &\left(\frac{1}{\rho}\nabla^2\psi(x_i^\star(k))\sum_{r \in C_i^\star(k)}w_{i,r} + |\calN_i|I_{d} \right)x^\star_i(k) = \frac{1}{\rho}\sum_{r \in C_i^\star(k)}w_{i,r}y_{i,r} + \sum_{j \in \calN_i}x^\star_j(k). \label{eq:fix-pt4}
    \end{align} According to Assumption~\ref{asmpt:psi}, the matrix $P_{i,k} = \frac{1}{\rho}\nabla^2\psi(x_i^\star(k))\sum_{r \in C_i^\star(k)}w_{i,r} + |\calN_i|I_{d}$ is positive definite, and hence invertible. Multiplying both sides of~\eqref{eq:fix-pt4} with $P_{i,k}^{-1}$ completes the proof.
\end{proof}

\subsection{Proofs from Section~\ref{subsec:consensus}}\label{subsec:proofs3}

\begin{lemma}\label{lm:asmpt-satisfied}
    Assumption \ref{asmpt:dist-norm} is satisfied for $K$-means, Huber Logistic and Fair loss functions.
\end{lemma}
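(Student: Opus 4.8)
The plan is to verify the two requirements of Assumption~\ref{asmpt:dist-norm} separately for each of the four losses. The first requirement, that $g$ is the Euclidean distance, is immediate: in Examples 1--4 and in the corresponding algorithms DGC-KM$_\rho$ through DGC-FL$_\rho$, each loss is defined as a function of $\|x-y\|$ (equivalently $\|x-y\|^2$) with $g(x,y) = \|x-y\|$, so this holds by construction. The substance of the lemma is the second requirement, namely that $\nabla_x f(x,y) = \gamma(x,y)(x-y)$ for some scalar $\gamma(x,y) \geq 0$. I would establish this by a direct gradient computation, using the chain rule together with the elementary identities $\nabla_x \|x-y\|^2 = 2(x-y)$ and $\nabla_x \|x-y\| = (x-y)/\|x-y\|$ valid for $x \neq y$.

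For $K$-means, $f(x,y) = \tfrac{1}{2}\|x-y\|^2$ gives $\nabla_x f = x-y$, so $\gamma \equiv 1$. For Logistic, $f(x,y) = \log(1 + \exp(\|x-y\|^2))$, the chain rule yields $\nabla_x f = \frac{2}{1 + \exp(-\|x-y\|^2)}(x-y)$, so $\gamma = 2/(1 + \exp(-\|x-y\|^2)) \in [1,2)$. For Fair, writing $f$ as a smooth function of $s = \|x-y\|^2$ and differentiating gives $\gamma = \frac{4\|x-y\|^2}{1 + \|x-y\|^2/\eta} \geq 0$. In all three of these cases the loss is a smooth function of $\|x-y\|^2$, hence smooth in $x$, and the nonnegativity of $\gamma$ is read off directly. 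These computations also recover exactly the per-sample weights appearing in the update equations~\eqref{eq:dgc-km},~\eqref{eq:dgc-ll} and~\eqref{eq:dgc-fl}, which serves as a useful consistency check.

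The one case needing slightly more care is the Huber loss, since $\phi_\delta$ is defined piecewise. On the region $\|x-y\| \leq \delta$ we are in the quadratic branch and $\nabla_x f = x-y$, so $\gamma = 1$; on $\|x-y\| > \delta$ we have $f = \delta\|x-y\| - \delta^2/2$ and the chain rule gives $\nabla_x f = \frac{\delta}{\|x-y\|}(x-y)$, so $\gamma = \delta/\|x-y\| \in (0,1)$. The point to check is that the two branches agree at the threshold $\|x-y\| = \delta$, where both expressions yield $\gamma = 1$; this confirms that $\phi_\delta$ is $C^1$ and that the gradient is well defined and of the claimed form everywhere, with the value at $x=y$ being $0 = \gamma\cdot 0$. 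Collecting the four cases gives $\gamma \geq 0$ throughout and completes the proof.

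I do not anticipate any genuine obstacle here, as this is a verification lemma; the only place where one could slip is the boundary behaviour of the Huber gradient at $\|x-y\|=\delta$, which is precisely why I would single that case out explicitly rather than treating it by the same one-line chain-rule argument as the smooth losses.
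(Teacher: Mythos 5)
Your proposal is correct and takes essentially the same route as the paper: a direct per-loss gradient computation that reads off the scalar $\gamma(x,y) \geq 0$, with your expressions matching the paper's exactly (your Fair-loss factor $\frac{4\|x-y\|^2}{1+\|x-y\|^2/\eta}$ is algebraically identical to the paper's $4\eta\big[1-\frac{\eta}{\eta+\|x-y\|^2}\big]$). Your only additions — the explicit check that the two Huber branches agree at $\|x-y\|=\delta$ and the remark about $x=y$ — are details the paper leaves implicit in its piecewise formula.
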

\begin{proof}
    For $f(x,y) = \frac{1}{2}\|x - y\|^2$, we have $\nabla f(x,y) = x - y$, i.e., $\gamma(x,y) = 1$. For $f(x,y) = \phi_\delta(\|x - y\|)$, we have 
    \begin{equation*}
        \nabla f(x,y) = \begin{cases}
            x - y, & \|x - y\| \leq \delta \\
            \frac{\delta(x - y)}{\|x - y\|}, & \|x - y\| > \delta
        \end{cases}
    \end{equation*} i.e.,
    \begin{equation*}
        \gamma(x,y) = \begin{cases}
            1, & \|x - y\| \leq \delta \\
            \frac{\delta}{\|x - y\|}, & \|x - y\| > \delta
        \end{cases}.
    \end{equation*} For $f(x,y) = \log(1 + \exp(g(x,y)^2))$, we have $\nabla f(x,y) = \frac{2(x - y)}{1 + \exp(-\|x - y\|^2)}$, i.e., $\gamma(x,y) = \frac{2}{1 + \exp(-\|x - y\|^2)}$. Finally, for $f(x,y) = h_\eta(\|x - y\|)$, we have $\nabla f(x,y) = 4\eta[1 - \frac{\eta}{\eta + \|x-y\|^2}](x-y)$, i.e., $\gamma(x,y) = 4\eta[1 - \frac{\eta}{\eta + \|x-y\|^2}]$, which completes the proof.
\end{proof}

The next result shows that fixed points of DGC-$\mathcal{F}_\rho$ remain in $\overline{co}(\D,\bx^0)$\footnote{We use the shorthand notation $co(\D,\bx^0)$ to denote the convex hull of the data $\D$ and initial centers $x_i^0(k)$, $i \in [m]$, $k \in [K]$.}, for each $\rho$.

\begin{lemma}\label{lm:bdd-fix-pt}
    Let Assumption \ref{asmpt:dist-norm} hold. Then for any $\rho \geq 1$, fixed points $\bx_\rho$ of DGC-$\mathcal{F}_\rho$ satisfy $x_{i,\rho}(k) \in \overline{co}(\D,\bx^0)$, for all $i \in [m]$, $k\in[K]$, where $\bx^0 \in \R^{Kmd}$ is the center initialization. 
\end{lemma}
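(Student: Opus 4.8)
The plan is to show that the compact convex set $\overline{co}(\D,\bx^0)$ is invariant under the updates of DGC-$\mathcal{F}_\rho$, and then to conclude that the fixed point, being the limit of a trajectory initialized at $\bx^0$ (Theorem~\ref{thm:convergence}), must lie in this closed set. Since cluster reassignment (Steps 2--5 of Algorithm~\ref{alg:dist-grad-cl}) leaves the centers untouched, it suffices to analyze a single center update~\eqref{eq:grad_local}. The key step is to rewrite this update, under Assumption~\ref{asmpt:dist-norm}, as a convex combination of quantities already known to lie in $\overline{co}(\D,\bx^0)$.

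Concretely, substituting $\nabla f(x,y)=\gamma(x,y)(x-y)$ into~\eqref{eq:grad_local} and collecting terms, I would write
\[
x_i^{t,b+1}(k) = \big(1-\alpha d_i-\alpha s_i\big)\,x_i^{t,b}(k) + \alpha\sum_{j\in\calN_i}x_j^{t,b}(k) + \frac{\alpha}{\rho}\sum_{r\in C_i^{t+1}(k)}w_{i,r}\gamma_{i,r}\,y_{i,r},
\]
where $d_i=|\calN_i|$, $\gamma_{i,r}=\gamma(x_i^{t,b}(k),y_{i,r})\ge 0$, and $s_i=\tfrac1\rho\sum_{r\in C_i^{t+1}(k)}w_{i,r}\gamma_{i,r}\ge 0$. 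The three groups of coefficients are nonnegative (granting $1-\alpha d_i-\alpha s_i\ge 0$) and sum to exactly $1$, so $x_i^{t,b+1}(k)$ is a convex combination of $x_i^{t,b}(k)$, its neighbours' estimates $x_j^{t,b}(k)$, and the data points $y_{i,r}$.

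The crux is verifying that the step-size guarantees the self-coefficient $1-\alpha d_i-\alpha s_i$ is nonnegative; this is where I expect the main (albeit mild) obstacle. First, $\beta$-smoothness (Assumption~\ref{asmpt:coerc}) together with $\nabla f(y,y)=0$ gives $\gamma(x,y)\|x-y\|=\|\nabla f(x,y)\|\le\beta\|x-y\|$, hence $\gamma\le\beta$ and $s_i\le\tfrac\beta\rho$. Second, I would invoke the standard Laplacian bound $\lambda_{\max}(L)\ge d_{\max}+1$ (with $d_{\max}=\max_{i}|\calN_i|$), giving $d_i\le d_{\max}<\lambda_{\max}(L)$. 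Combining, $d_i+s_i<\lambda_{\max}(L)+\tfrac\beta\rho$, so the step-size condition $\alpha<1/(\tfrac\beta\rho+\lambda_{\max}(L))$ from Theorem~\ref{thm:convergence} yields $\alpha(d_i+s_i)<1$, exactly as needed.

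Finally, I would close by induction on the combined $(t,b)$ counter: the base case holds since $x_i^{0,0}(k)=x_i^0(k)\in\overline{co}(\D,\bx^0)$ and the data lie in $\overline{co}(\D,\bx^0)$ by definition, while the inductive step follows from the convex-combination representation above together with convexity of $\overline{co}(\D,\bx^0)$. Hence every iterate $\bx^{t}$ stays in $\overline{co}(\D,\bx^0)$ componentwise, and since this set is closed, the limit $\bx_\rho=\lim_{t\to\infty}\bx^t$ guaranteed by Theorem~\ref{thm:convergence} also lies in it. Note that this argument treats empty clusters automatically: when $C_i^{t+1}(k)=\emptyset$ we have $s_i=0$ and the update reduces to a convex (consensus) average of the neighbouring centers, which remains in the hull without any appeal to data anchoring.
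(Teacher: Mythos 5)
Your proposal is correct and follows essentially the same route as the paper: both prove the stronger invariance claim that every iterate stays in $\overline{co}(\D,\bx^0)$ by rewriting the update under Assumption~\ref{asmpt:dist-norm} as a convex combination of the current center, neighbours' centers, and data points, verifying the self-coefficient is nonnegative under the step-size condition, and then passing to the limit via closedness and Theorem~\ref{thm:convergence}. The only (minor, and if anything cleaner) deviation is that you bound $\gamma(x,y)\le\beta$ generically from $\beta$-smoothness and $\nabla f(y,y)=0$, whereas the paper cites a case-by-case verification for the four losses.
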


\begin{proof}
    We will show a stronger result, namely, that for each fixed $\rho$, the sequence of centers $\{\bx^t \}_{t\in \N}$ generated by DGC-$\mathcal{F}_\rho$ stays in $co(\D,\bx^0)$. We prove the claim by induction. Clearly, for each $i \in [m]$ and $k \in [K]$, we have $x_i^0(k) \in co(\D,\bx^0)$. 
    
    Next, assume that, for some $t > 0$ and each $i \in [m]$ and $k \in [K]$, we have $x_i^t(k) \in co(\D,\bx^0)$. Recalling the update equation~\eqref{eq:grad_local}, using Assumption~\ref{asmpt:dist-norm} and the fact that $x_i^{t,0}(k) = x_i^t(k)$, it follows that, for each $i \in [m]$ and $k \in [K]$    
    \begin{align}
    x_i^{t,1}(k) &= x_i^t(k) - \alpha\bigg( \sum_{j \in \calN_i} \left[x^{t}_i(k) - x^{t}_j(k) \right]+ \frac{1}{\rho}\hspace{-0.5em}\sum_{r \in C_i^{t+1}(k)}\hspace{-1em}w_{i,r} \gamma^t_{i,r}(k)(x_i^{t}(k) - y_{i,r}) \bigg) \nonumber \\ &= \bigg(1 - \alpha\big(|\mathcal{N}_i| + \nicefrac{1}{\rho}\hspace{-1em}\sum_{r \in C_i^{t+1}(k)}\hspace{-1em}w_{i,r}\gamma^t_{i,r}(k)\big)\bigg)x_i^t(k) + \alpha \sum_{j \in \calN_i} x^{t}_j(k) + \nicefrac{\alpha}{\rho}\hspace{-1em}\sum_{r \in C_i^{t+1}(k)}\hspace{-1em}w_{i,r} \gamma^t_{i,r}(k)y_{i,r} \label{eq:conv-comb}, 
    \end{align} where we use $\gamma^t_{i,r}(k)$ as a shorthand notation for $\gamma(x^t_i(k),y_{i,r})$. It can be readily seen that \eqref{eq:conv-comb} is a convex combination of $x_i^{t}(k), \: x_j^t(k)$ and $y_{i,r}$, $j \in \mathcal{N}_i$, $r \in C_i^{t+1}(k)$, for the step-size choice $\alpha < \frac{1}{\nicefrac{1}{\rho}\sum_{r \in C_i^{t+1}}w_{i,r}\gamma^t_{i,r}(k) + |\mathcal{N}_i|}$. It is easy to see that $|\mathcal{N}_i| \leq \lambda_{\max}(L)$ and, using the results from Lemma \ref{lm:asmpt-satisfied}, it can be shown that $\gamma(x,y) \leq \beta$, for all four loss functions ($K$-means, Huber, Logistic and Fair), where we recall that $\beta$ is the smoothness parameter. Finally, recalling that our step-size satisfies $\alpha < \frac{1}{\nicefrac{\beta}{\rho} + \lambda_{\max}(L)} \leq \frac{1}{\nicefrac{1}{\rho}\sum_{r \in C_i^{t+1}}w_{i,r}\gamma^t_{i,r}(k) + |\mathcal{N}_i|}$, it readily follows from \eqref{eq:conv-comb} that $x^{t,1}_i(k)$ is a convex combination of elements from $co(\D,\bx^0)$, by induction hypothesis. Using the same arguments, we can easily show that $x_i^{t,b}(k) \in co(\D,\bx^0)$, for all $i \in [m]$, $k \in [K]$ and $b = 2,\ldots,B-1$. Since $\bx^{t+1} = \bx^{t,B}$, it follows that $x^{t+1}_i(k) = x^{t,B}_i(k) \in co(\D,\bx^0)$, which completes the induction proof. 
    
    Noting that the set $\overline{co}(\D,\bx^0)$ is closed, it readily follows that the limit points of sequences generated by DGC-$\mathcal{F}_\rho$ stay in $\overline{co}(\D,\bx^0)$. According to Theorem~\ref{thm:convergence}, the limit points are the fixed points, which completes the proof.
\end{proof}

We next prove Lemma \ref{lm:bregman-cons-fp}.

\begin{proof}[Proof of Lemma \ref{lm:bregman-cons-fp}]
    Specializing \eqref{eq:zero-grad2} for $f(x,y) = \|x - y\|^2$, we get, for all $k \in [K]$
    \begin{equation*}
        2\sum_{i \in [m]}\sum_{r \in \overline{C}_i(k)} w_{i,r}(x(k)- y_{i,r}) = 0.
    \end{equation*} Defining $W_k = \sum_{i \in [m]}\sum_{r \in \overline{C}_i(k)}w_{i,r}$ and $C(k) = \cup_{i \in [m]}\overline{C}_i(k)$, the claim readily follows.
\end{proof}

We next prove Lemma \ref{lm:cons-fp-D}.

\begin{proof}[Proof of Lemma \ref{lm:cons-fp-D}]
    Assume the contrary, that there exists a cluster $k \in [K]$ such that $C(k) \neq \emptyset$ and $x(k) \notin \overline{co}(\D)$. Define $\widetilde{x}(k)$ to be the projection of $x(k)$ onto $\overline{co}(C(k))$, with respect to the Euclidean distance, i.e., $\widetilde{x}(k) = \argmin_{y \in \overline{co}(C(k))}\|x(k) - y\|$. Note that the projection is well defined, as the distance metric is induced by an inner product and $\overline{co}(C(k))$ is a non-empty, closed, convex set. By Assumption~\ref{asmpt:g&f}, for all $r \in C(k)$, we have 
    \begin{equation*}
        g(\widetilde{x}(k),y_{i,r}) \leq g(x(k),y_{i,r}) \implies f(\widetilde{x}(k),y_{i,r}) \leq f(x(k),y_{i,r}),
    \end{equation*} where we used the fact that $\widetilde{x}(k)$ is the projection of $x(k)$ onto $\overline{co}(C(k))$. 
    
    If for all $r \in C(k)$ we have $\|\widetilde{x}(k) - y_{i,r}\| = \|x(k) - y_{i,r}\|$, it readily follows that $\|\widetilde{x}(k) - \widetilde{y}\| = \|x(k) - \widetilde{y}\|$, for all $\widetilde{y} \in co(C(k))$. If $\widetilde{x}(k) \in co(C(k))$, it then follows that $\|x(k) - \widetilde{x}(k)\| = \|\widetilde{x}(k) - \widetilde{x}(k)\| = 0$, implying that $x(k) \in co(C(k))$, which can not be, as $co(C(k)) \subseteq \overline{co}(\D)$ and $x(k) \notin \overline{co}(\D)$. Similarly, if $\widetilde{x}(k) \in \partial co(C(k))$, where $\partial co(C(k))$ denotes the boundary of $co(C(k))$, by definition of boundary, there exists a sequence $\{\widetilde{y}_n\}_{n \in \N} \subset co(C(k))$ that converges to $\widetilde{x}(k)$. From the fact that $\|\widetilde{x}(k) - \widetilde{y}_n\| = \|x(k) - \widetilde{y}_n\|$, for all $n \in \N$, we have $\lim_{n \rightarrow \infty}\|x(k) - \widetilde{y}_n\| = \lim_{n \rightarrow \infty}\|\widetilde{x}(k) -  \widetilde{y}_n\| = 0$, resulting in $x(k) \in \partial co(C(k))$, which again can not be. 
    
    Therefore, there must exist a $r \in C(k)$ such that $\|\widetilde{x}(k) - y_{i,r}\| < \|x(k) - y_{i,r}\|$, implying that $f(\widetilde{x}(k),y_{i,r}) < f(x(k),y_{i,r})$. As such, we have
    \begin{equation*}
        \sum_{r \in C(k)}w_{i,r}f(\widetilde{x}(k),y_{i,r}) < \sum_{r \in C(k)}w_{i,r}f(x(k),y_{i,r}).
    \end{equation*} Defining $\widehat{\bx} = \begin{bmatrix} \widehat{x}(1)^\top & \ldots & \widehat{x}(K)^\top \end{bmatrix}^\top$, where $\widehat{x}(l) = x(l)$ for all $l \neq k$ and $\widehat{x}(k) = \widetilde{x}(k)$, it can be readily observed that 
    \begin{equation}\label{eq:contr-fixpt}
        H(\widehat{\bx},C) < H(\bx,C),   
    \end{equation} where $H$ is the centralized cost from~\eqref{eq:general-clust}. From the convexity of $f$ (Assumption~\ref{asmpt:coerc}) and Definition~\ref{def:consensus-fixed_pt}, it readily follows that a consensus fixed point must satisfy $\bx \in \argmin_{\by \in \R^{Kd}}H(\by,C),$ which is clearly violated in~\eqref{eq:contr-fixpt}. Therefore, the claim follows.
\end{proof}

Prior to proving Theorem \ref{thm:cons-conv}, consider the function $F_c(\bx) = J(\bx,C)$, where $C$ is a fixed clustering of the dataset $\D$ into $K$ clusters. From the finiteness of $\D$, we know that there is a finite number of distinct partitions and hence a finite number of distinct functions $F_c(\bx)$, while from Assumption~\ref{asmpt:coerc} it follows that all functions $F_c(x)$ are coercive. This in turns implies the existence of a global minimizer of $F_c(\bx) $, i.e., a point $\bz_c \in \R^{Kmd}$, such that $F_c(\bz_c) = \min_{\bx \in \R^{Kmd}}F_c(\bx)$. It can readily be seen that, if some of the partitions of the clustering $C$ are empty, we can set the corresponding centers to be any value without changing the minimum, e.g., if $C_i(k) = \emptyset$, the point $\bz^\prime_c \in \R^{Kmd}$, given by $z^\prime_{c,j}(l) = z_{c,j}(l)$, for all $l \in [K]$ and $j \neq i$, and 
\begin{equation*}
    z^\prime_{c,i}(l) = \begin{cases}
        z_{c,i}(l), & l \neq k \\
        0, & l = k
    \end{cases},    
\end{equation*} then $\bz^\prime_c = \argmin_{\bx \in \R^{Kmd}}F_c(\bx)$. Recall that $\mathcal{C}_{K,\D}$ denotes the set of all $K$-partitions of the full dataset $\D$. Combining this with the coercivity of $F$ and the finiteness of $\D$, it readily follows that, for each $C \in \mathcal{C}_{K,\D}$, we can find a minimizer $\bz^\prime_c = \argmin_{\bx \in \R^{Kmd}}F_c(\bx)$, such that $\max_{C \in \mathcal{C}_{K,\D}}\|\bz^\prime_c \| < \infty$. Moreover, Lemma~\ref{lm:cons-fp-D} guarantees an even stronger result, that we can find $\bz^\prime_c$ such that $\bz^\prime_c = \argmin_{\bx \in \R^{Kmd}}F_c(\bx)$ and $z^\prime(k) \in \overline{co}(\D)$, for each $k \in [K]$. Since $\overline{co}(\D) \subset \overline{co}(\D,\bx^0)$ and $\overline{co}(\D,\bx^0)$ is a compact set (follows from finiteness of $\D$), there exits a finite $R_0 > 0$, such that $R_0 = \max_{\bx \in \overline{co}(\D,\bx^0)}\|\bx\|$. 

We are now ready to prove Theorem~\ref{thm:cons-conv}.

\begin{proof}[Proof of Theorem \ref{thm:cons-conv}]
    We start by showing that consensus is achieved with rate $\mathcal{O}(\nicefrac{1}{\rho})$. Recall the clustering cost in \eqref{eq:distributed-gen}. By definition, we know that, for any $C_{\rho} \in \overline{U}_{\bx_{\rho}}$
    \begin{equation}\label{eq:consensus-grad}
        \nabla J_{\rho}(\bx_{\rho},C_{\rho}) = 0 \iff L\bx_{\rho} = -\frac{1}{\rho}\nabla J(\bx_{\rho},C_{\rho}).
    \end{equation} Consider the function $F_{\rho}(\bx) = J(\bx,C_{\rho})$. From the preceding discussion and the fact that $\overline{U}_{\bx_\rho}$ is finite, it follows that for each $\rho \geq 1$, we can obtain a global minima $\bz_\rho$ of $F_\rho$, such that $\bz_\rho \in \overline{co}(\D)$. From \eqref{eq:consensus-grad} and Assumption \ref{asmpt:dist-norm}, we get $\|L\bx_{\rho}\| = \frac{1}{{\rho}}\|\nabla F_\rho(\bx_{\rho})\| \leq \frac{\beta}{\rho}\|\bx_\rho - \bz_{\rho}\| \leq \frac{2\beta R_0}{\rho}$, where the last inequality follows from Lemma \ref{lm:bdd-fix-pt}, proving the first claim.

     Denote the consensus point by $\bx \in \R^{Kd}$, i.e., from the first part we know that $\lim_{\rho \rightarrow \infty}\bx_{i,\rho} = \bx$, for all $i \in [m]$. Define $\overline{\bx} = 1_m \otimes \bx \in \R^{Kmd}$, and note that it satisfies point $1)$ in Definition \ref{def:consensus-fixed_pt}. To prove point $2)$, it remains to show that there exists a clustering $C \in U_{\overline{\bx}}$, such that $\mathbf{1}^\top\nabla J(\overline{\bx},C) = 0$. To that end, assume the contrary, that $\|\mathbf{1}^\top\nabla J(\overline{\bx},C)\| > 0$, for any $C \in U_{\overline{\bx}}$. As the number of possible clusterings is finite, we know that there exists an $\epsilon > 0$, such that $\min_{C \in U_{\overline{\bx}}}\|\mathbf{1}^\top\nabla J(\overline{\bx},C)\| = \epsilon > 0$. Next, note that from $\overline{\bx} = \lim_{\rho \rightarrow\infty}\bx_\rho$, and Lemma \ref{lm:clust_convg}, there exists a $\rho_0 \geq 1$, such that $U_{\bx_\rho} \subseteq U_{\overline{\bx}}$, for all $\rho \geq \rho_0$. As $\bx_\rho$ is a fixed point of $J_\rho$, we know that, for some $C_\rho \in U_{\bx_\rho}$, $\rho\nabla J_{\rho}(\bx_\rho,C_\rho) = \nabla J(\bx_\rho,C_\rho) + \rho\bL\bx_\rho = 0$, which readily implies $\nabla J(\overline{\bx},C_\rho) = \nabla J(\overline{\bx},C_\rho) - \nabla J(\bx_\rho,C_\rho) - \rho\bL\bx_\rho$. Using the fact that $\mathbf{1}^\top \bL = 0$, we get $\mathbf{1}^\top \nabla J(\overline{\bx},C_\rho) = \mathbf{1}^\top \left( \nabla J(\overline{\bx},C_\rho) - \nabla J(\bx_\rho,C_\rho)\right)$. Taking the norm and using the smoothness of $J$ with respect to the first variable, we get $\|\mathbf{1}^\top \nabla J(\overline{\bx},C_\rho)\| \leq \sqrt{m}\beta\|\overline{\bx} - \bx_\rho \|$. Since $\|\bx_\rho - \overline{\bx}\|  \rightarrow 0$ and $C_\rho \in U_{\overline{\bx}}$, for all $\rho \geq \rho_0$, it follows that $\|\mathbf{1}^\top \nabla J(\overline{\bx},C_\rho)\|\rightarrow 0$, contradicting the assumption. Hence, $\overline{\bx}$ is a consensus fixed point. Convergence of clusters for a finite value of $\rho$ is now a direct consequence of the fact that $\bx_\rho \rightarrow \overline{\bx}$ and Lemma \ref{lm:clust_convg}.
\end{proof}

\section{On Assumption \ref{asmpt:coerc}}\label{app:asmpt3}

In this section we provide a result which underlines the generality and wide applicability of Assumption \ref{asmpt:coerc}.

\begin{lemma}\label{lm:asmpt3}
    The $K$-means, Huber, logistic and fair loss functions all satisfy Assumption \ref{asmpt:coerc}, independent of the data.
\end{lemma}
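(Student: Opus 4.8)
The plan is to verify, for each of the four losses separately, the three properties bundled in Assumption~\ref{asmpt:coerc}: coercivity, convexity (the left inequality), and $\beta$-smoothness with a finite constant $\beta$ (the right inequality). The unifying observation is that each loss is a function of $r := \|x-y\|$ or of $s := \|x-y\|^2$ alone, with $y$ entering only as the center of a radial profile. Since every bound I derive will be uniform over $r,s \ge 0$ and over $y \in \R^d$, the clause ``independent of the data'' is immediate.

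First I would dispatch coercivity and convexity. For coercivity, fixing $y$ and letting $\|x\| \to \infty$ forces $r \to \infty$, and each scalar profile ($\tfrac12 r^2$, $\phi_\delta(r)$, $\log(1+e^{r^2})$, $h_\eta(r)$) diverges. For convexity I would use the rule that a convex nondecreasing scalar function composed with a convex map is convex: $K$-means is a quadratic; Huber writes as $\phi_\delta(\|x-y\|)$ with $\phi_\delta$ convex and nondecreasing on $[0,\infty)$; logistic writes as $\psi(\|x-y\|^2)$ with the softplus $\psi(s) = \log(1+e^s)$ convex and increasing; and fair writes as $H(\|x-y\|^2)$ with $H(s) = 2\eta^2\!\left(s/\eta - \log(1+s/\eta)\right)$, for which $H'(s) = \tfrac{2s}{1+s/\eta} \ge 0$ and $H''(s) = \tfrac{2}{(1+s/\eta)^2} \ge 0$ give convexity and monotonicity.

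For the right inequality it suffices to show $\nabla_x f$ is $\beta$-Lipschitz, which together with convexity yields the descent bound $\tfrac{\beta}{2}\|x-z\|^2$. Using the radial/quadratic structure, for $f = \varphi(r)$ the Hessian $\nabla^2_x f = \varphi''(r)\, uu^\top + \tfrac{\varphi'(r)}{r}(I - uu^\top)$ with $u = (x-y)/r$ has eigenvalues $\varphi''(r)$ and $\varphi'(r)/r$, while for $f = \Phi(s)$ the Hessian $\nabla^2_x f = 2\Phi'(s) I + 4\Phi''(s)(x-y)(x-y)^\top$ has eigenvalues $2\Phi'(s)$ and $2\Phi'(s) + 4\Phi''(s)s$. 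Bounding these gives $\beta = 1$ for $K$-means and Huber (since $\varphi'' \le 1$ and $\varphi'(r)/r \le 1$ in both cases); $\beta = 2 + 4/e$ for logistic, using $2\psi'(s) \le 2$ and $4\psi''(s)s = 4se^{-s}/(1+e^{-s})^2 \le 4\sup_{s \ge 0} se^{-s} = 4/e$; and $\beta = 6\eta$ for fair, using $2H'(s) \le 4\eta$ and $4H''(s)s = 8\eta^2\, s/(s+\eta)^2 \le 8\eta^2 \cdot \tfrac{1}{4\eta} = 2\eta$.

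The main obstacle to a fully rigorous argument is the Huber loss, which is only $C^1$ (not $C^2$) at $\|x-y\| = \delta$, so the Hessian-eigenvalue computation needs supplementing there. I would resolve this by noting that $\phi_\delta$ is $C^1$ with a $1$-Lipschitz derivative — equivalently, $f$ is $C^1$ and piecewise $C^2$ with Hessian bounded by $I$ on each piece and gradient continuous across $r = \delta$ — which already yields a globally $1$-Lipschitz gradient and hence the descent lemma; alternatively one checks directly that $x \mapsto \gamma(x,y)(x-y)$ from Lemma~\ref{lm:asmpt-satisfied} is $1$-Lipschitz. The point $r = 0$ is harmless: logistic and fair are smooth in $s$ everywhere, and near $x = y$ the Huber profile equals $\tfrac12 r^2$, so the orthogonal eigenvalue $\varphi'(r)/r$ extends continuously to $\varphi''(0) = 1$. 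What remains are the two elementary one-variable suprema $\sup_{s \ge 0} se^{-s} = 1/e$ (attained at $s=1$) and $\sup_{s \ge 0} s/(s+\eta)^2 = 1/(4\eta)$ (attained at $s = \eta$).
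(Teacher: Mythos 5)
Your proof is correct, and it follows the same overall skeleton as the paper's: coercivity read off from the radial profiles, convexity via the rule ``convex nondecreasing scalar function composed with a convex map,'' and $\beta$-smoothness via a second-order (Hessian) bound, including the same key scalar supremum $\sup_{s\ge 0} s e^{-s} = 1/e$ for the logistic loss. The execution differs in two worthwhile ways. First, where the paper bounds $\|\nabla^2_{xx} f\|$ by the triangle inequality on the decomposition $aI + b\,(x-y)(x-y)^\top$, you compute the exact eigenvalues of the radial/quadratic-composite Hessian; this yields the same constant $2+4/e$ for logistic but a tighter, scale-homogeneous constant $6\eta$ for the fair loss (the paper settles for $8\eta + 8\max\{1,\eta^2\}$), and $\beta=1$ for $K$-means and Huber (the paper states $\beta = 2$, consistent with the unnormalized $\|x-y\|^2$ convention; either way only existence of a finite $\beta$ matters for the lemma). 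Second, the paper outsources the Huber case entirely, citing Lemma B.2 of \cite{pmlr-v162-armacki22a}, whereas you supply the missing argument yourself: you correctly note that the Hessian-eigenvalue computation breaks at the kink $\|x-y\|=\delta$ where $\phi_\delta$ is only $C^1$, and you repair it via the piecewise-$C^2$/continuous-gradient argument (or equivalently the $1$-Lipschitzness of $x \mapsto \gamma(x,y)(x-y)$). Your version is thus self-contained and sharper; the paper's is shorter at the cost of a citation and looser constants.
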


\begin{proof}
    From the definition of each loss, it is not hard to see that all are coercive. Next, it can be readily verified that all four losses are convex, as they are compositions of a non-decreasing convex function and a convex function, i.e., we have $f(x,y) = h(g(x,y))$, where $h: \R \mapsto \R$ is convex and non-decreasing on $[0,\infty)$ and $g: \R^d \times \R^d \mapsto [0,\infty)$ is the standard Euclidean distance, i.e., $g(x,y) = \|x - y\|$.\footnote{Same results can be shown to hold for Mahalanobis distance, i.e., $g(x,y) = \|x - y\|_A$, for some positive definite matrix $A \in \R^{d \times d}$.} To verify that $\beta$-smoothness holds, we note that it suffices to show that each loss has $\beta$-Lipschitz continuous gradients, for some $\beta > 0$ (see Lemma 1.2.3 in \cite{lectures_on_cvxopt}). For $K$-means this is obvious, with $\beta = 2$. Similarly, Huber loss has $\beta$-Lipschitz continuous gradients with $\beta = 2$, see, e.g., Lemma B.2 in \cite{pmlr-v162-armacki22a}. For logistic and fair losses, we proceed as follows. From Lemma 1.2.2 in \cite{lectures_on_cvxopt}, we know that it suffices to show $\|\nabla^2_{xx} f(x,y)\| \leq \beta$, for all $x,y \in \R^d$. If $f$ is the logistic loss, i.e., $f(x,y) = \log(1+\exp(\|x-y\|^2))$, it can then be shown that $\nabla^2_{xx} f(x,y) = \frac{2I}{1+\exp(-\|x-y\|^2)} + \frac{4\exp(-\|x-y\|^2)}{[1+\exp(-\|x-y\|^2)]^2}(x-y)(x-y)^\top$, where $I \in \R^{d \times d}$ is the identity matrix, hence
    \begin{align*}
        \|\nabla^2_{xx} f(x,y)\| \leq 2 + 4\exp(-\|x-y\|^2)\|(x-y)(x-y)^\top\|,
    \end{align*} which follows from the triangle inequality. For ease of notation, let $z = x-y$ and consider $\exp(-\|z\|^2)\|zz^\top\|$. As $zz^\top \in \R^{d \times d}$ is a symmetric rank 1 matrix, with non-zero eigenvalue $\lambda = \|z\|^2$, we then $\exp(-\|z\|^2)\|zz^\top\| = \exp(-\|z\|^2)\|z\|^2$. Since the function $p(t) = t\exp(-t)$, for $t \geq 0$, reaches its global maximum at $t = 1$, it readily follows that $\|\nabla^2_{xx} f(x,y)\| \leq 2 + 4\exp(-1)$, therefore the logistic loss is $\beta$-smooth, for $\beta = 2 + 4\exp(-1)$. If $f$ is the fair loss, i.e., $f(x,y) = 2\eta^2[\|x-y\|^2/\eta - \log(1 + \|x-y\|^2/\eta)]$, for any $\eta > 0$, it can then be shown that $\nabla_{xx}^2 f(x,y) = 4\eta I - \frac{4\eta^2I}{\eta + \|x-y\|^2} + \frac{8\eta^2}{(\|x-y\|^2 + \eta)^2}(x-y)(x-y)^\top$, hence
    \begin{align*}
        \|\nabla_{xx}^2 f(x,y)\| \leq 8\eta + \frac{8\eta^2\|x-y\|^2}{(\|x-y\|^2 + \eta)^2},
    \end{align*} which follows from the triangle inequality and $\|(x-y)(x-y)^\top\| = \|x-y\|^2$. If $\|x-y\| \leq 1$, we have $8\eta^2\|x-y\|^2/(\|x-y\|^2+\eta)^2 \leq 8$. Otherwise, we have $8\eta^2\|x-y\|^2/(\|x-y\|^2+\eta)^2 \leq 8\eta^2$. Combining, we get $\|\nabla^2_{xx}f(x,y)\| \leq 8\eta + 8\max\{1,\eta^2\}$, completing the proof.
\end{proof}

\section{Additional experiments}\label{app:exp}

In this section we provide additional numerical experiments. Subsection \ref{subsec:data} provides a detailed description of the datasets and network, Subsection \ref{subsec:rho} provides experiments testing the performance for different values of penalty $\rho$, Subsection \ref{subsec:upd} provides experiments testing the performance for different values of center updates $B$, while Subsection \ref{subsec:m} tests the performance of our methods for varying number of users $m$. 

\subsection{Data and network}\label{subsec:data}

We use Iris \cite{iris}, MNIST \cite{lecun-mnist} and CIFAR10 \cite{krizhevsky2009learning} data. Iris consists of $K = 3$ classes, with $N = 150$ samples evenly split among the classes and $d = 4$ features. MNIST and CIFAR10 consist of ten classes each, with a total of $50.000$ training samples. The number of features of MNIST and CIFAR data, corresponding to pixels of images, is $d = \{784,3072\}$. We normalize the MNIST and CIFAR datasets, dividing all the pixels by the largest value, so that each pixel belongs to $[0,1]$. We use the full MNIST and CIFAR10 datasets (i.e., $K = 10$, $N = 50.000$), as well as smaller subsets. In particular, we use the first seven digits of MNIST, dubbed MNIST7 and create two CIFAR10 subsets, with three (CIFAR3) and eight (CIFAR8) classes. We consider two scenarios with respect to local data distributions: \emph{homogeneous} and \emph{heterogeneous}. In the homogeneous scenario, each user has access to all classes in equal proportion, while in the heterogeneous scenario, users have access to strict subsets of all classes, with possibly varying proportions of samples per class. In the homogeneous setup, we randomly select a thousand samples per class, for a total of $N = \{7.000, 3.000, 8.000,50.000,50.000\}$ samples for MNIST7, CIFAR3 CIFAR8 and full MNIST and CIFAR10 data. In the heterogeneous setup we use the same Iris and CIFAR3 data, while we sample a new MNIST7 dataset, to allow for varying proportion of samples per class, which consists of the first seven digits and $8.380$ samples. For heterogeneous Iris and CIFAR3 datasets, each user has access to two out of three classes, while for heterogeneous MNIST7 data, users have access to anywhere between three and five out of seven classes. Figure \ref{fig:het_dist} shows the distributions of classes and proportions of samples per class, for each user in the heterogeneous data scenario. For all our methods we use the step-size $\alpha = \frac{1}{2m|\D_{\max}|/\rho + \lambda_{\max}(L) + 1}$, where $|\D_{\max}| = \max_{i \in [m]}|\D_i|$ is the size of the largest dataset. For the centralized and local clustering we use the step-size $\alpha_c = \frac{1}{2|\D|}$ and $\alpha_l = \frac{1}{2|\D_i|}$, respectively, with $|\D|$ and $|\D_i|$ being the sizes of full data across all users and data of user $i$. Unless specified otherwise, we use a network of $m = 10$ users, communicating over a ring graph. The default communication network is visualized in Figure \ref{fig:ring-graph}. 

\begin{figure*}
\centering
\begin{adjustwidth}{-1in}{-1in} 
\begin{tabular}{lll}
\includegraphics[width=0.43\columnwidth]{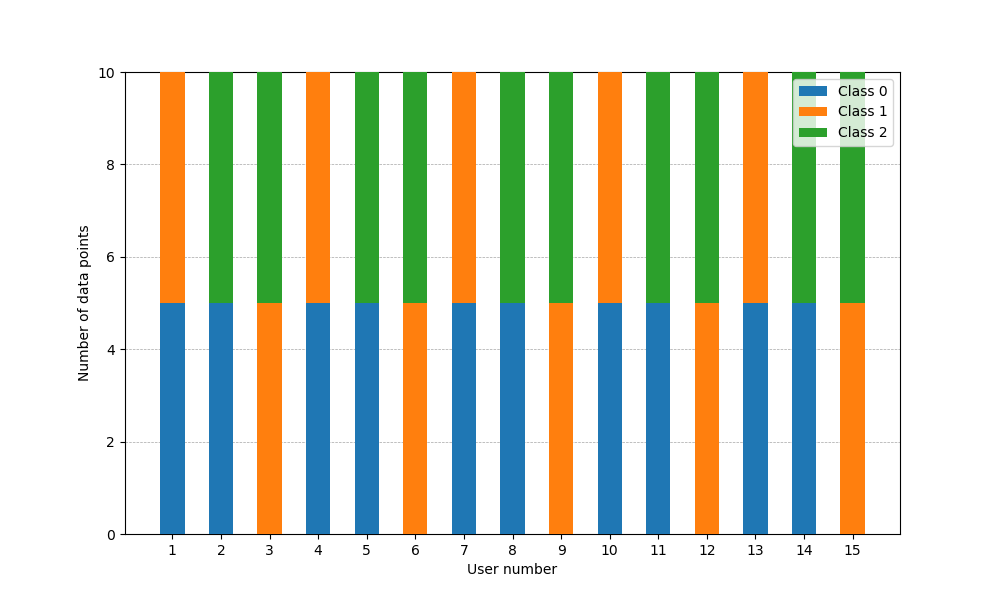}
&
\includegraphics[width=0.43\columnwidth]{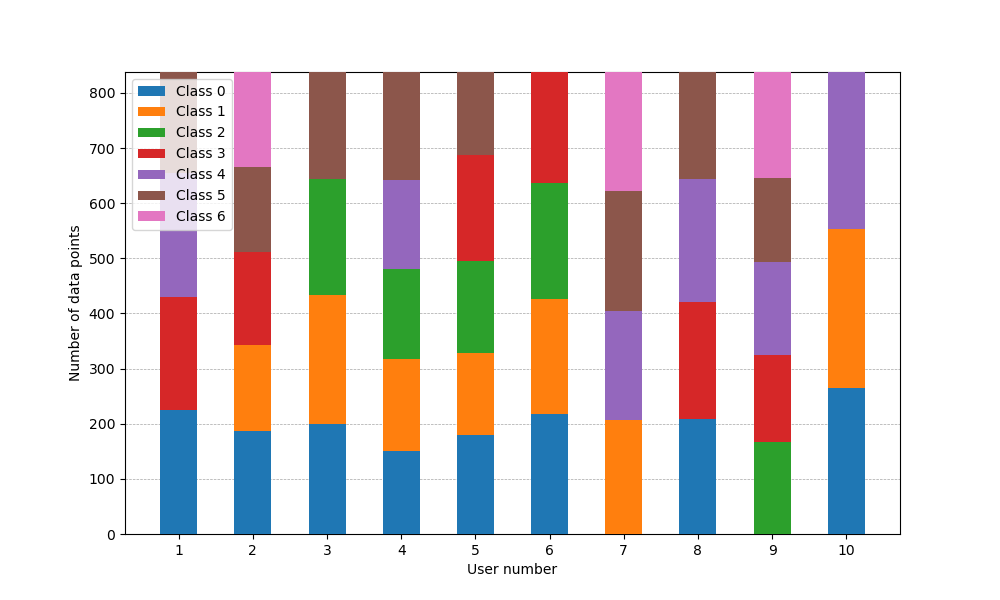}
&
\includegraphics[width=0.43\columnwidth]{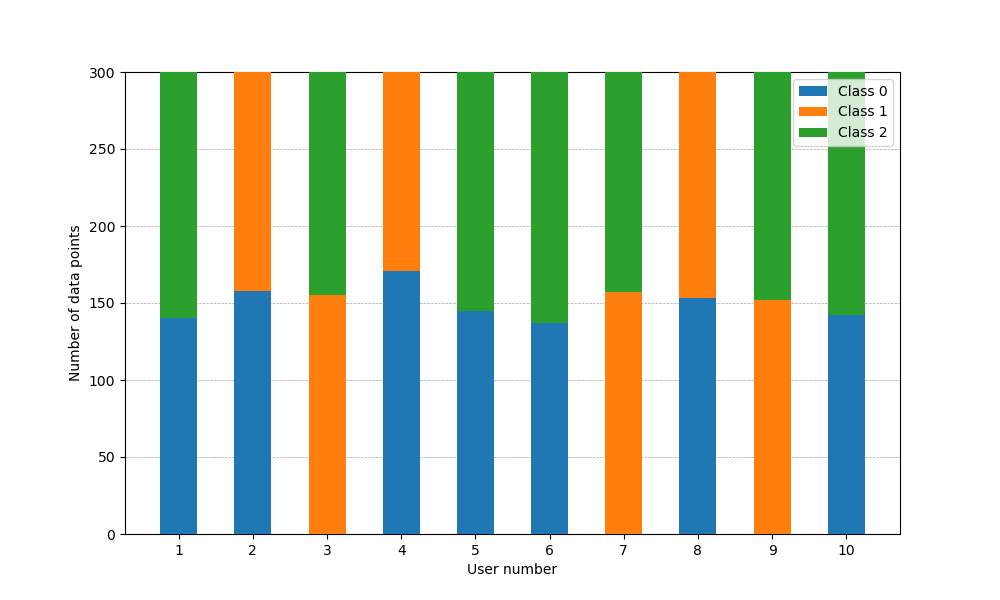}
\end{tabular}
\end{adjustwidth}
\caption{Data distribution across users in the heterogeneous data setup. The $x$ axis shows the number of users, with $y$ axis showing the number of data points per user. The bars show the classes and proportion of samples per class available at each user. Left to right: Iris, MNIST7 and CIFAR3 datasets.}
\label{fig:het_dist}
\end{figure*}

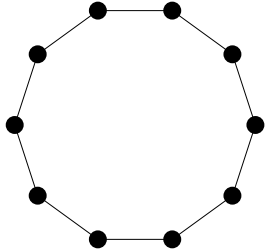
\begin{figure}[ht]
  \centering
  \scalebox{0.8}{
  \begin{tikzpicture}
    \foreach \i in {0,...,9} {
      \draw ({360/10 * \i}:2) node[circle, fill, inner sep=3pt] {} -- ({360/10 * (\i + 1)}:2);
    }
  \end{tikzpicture}
  }
  \caption{Ten users communicating over a ring graph. Unless specified otherwise, these are the default number of users and communication topology used in our experiments.}
  \label{fig:ring-graph}
\end{figure}

\subsection{Effects of $\rho$}\label{subsec:rho}

In this subsection, we test the effects of the penalty parameter $\rho$ on the performance of our algorithms. The first experiment studies the effects of $\rho$ on convergence speed. We use the homogeneous Iris dataset, with warm start initialization (i.e., centers are initialized by randomly choosing a sample from each class). We test the performance of our methods using $K$-means, Huber, Logistic and Fair losses, with $B = 1$. We run our algorithms for $T = 1.000$ iterations, computing the cost $J_\rho$ in each iteration, denoted by $J_{\rho}^t$. To normalize the data, we then subtract the final value, i.e., we plot $J_{\rho}^t - J_{\rho}^T$, as we know that the cost is decreased in each iteration. We call the quantity $J_{\rho}^t - J_{\rho}^T$ \emph{normalized cost}, and use it as the performance metric. The results are presented in Figure \ref{fig:normalized-cost}. We can see that the normalized cost converges slower for larger values of $\rho$, as discussed in Section \ref{sec:main} and as noted by \cite{kar2019clustering}. Note that initially, the methods with larger values of $\rho$ converge quickly and then slow down. This to be expected, as for larger values of $\rho$, DGC-$\mathcal{F}_\rho$ prioritizes the consensus part of cost $J_\rho$ and quickly reaches the consensus space. After that, optimizing the clustering part of the cost $J_\rho$ is slow, as the gradient becomes very small once consensus is reached, and progress slows down. On the other hand, the convergence for lower values of $\rho$ is slower in the initial phase, but becomes faster as the training progresses, as smaller values of $\rho$ provide a good balance between optimizing the consensus and clustering parts of the loss $J_\rho$, i.e., account for both parts equally.  

\begin{figure}[!htp]
\centering
\begin{tabular}{ll}
\includegraphics[width=0.4\columnwidth]{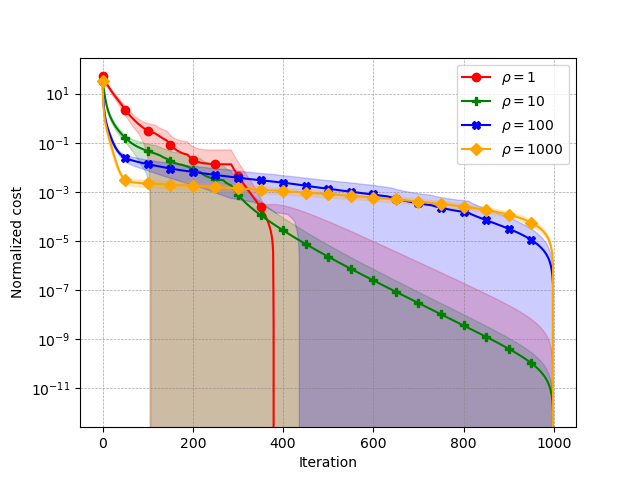}
&
\includegraphics[width=0.4\columnwidth]{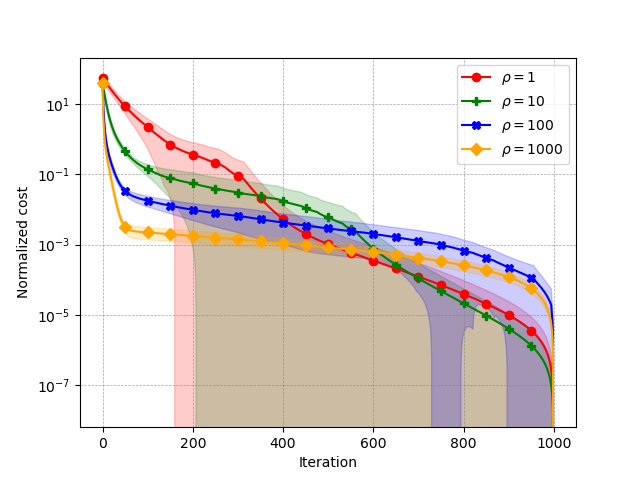}
\\
\includegraphics[width=0.4\columnwidth]{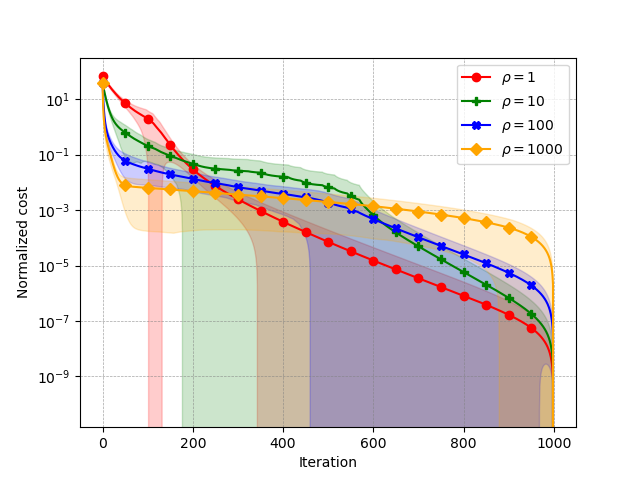}
&
\includegraphics[width=0.4\columnwidth]{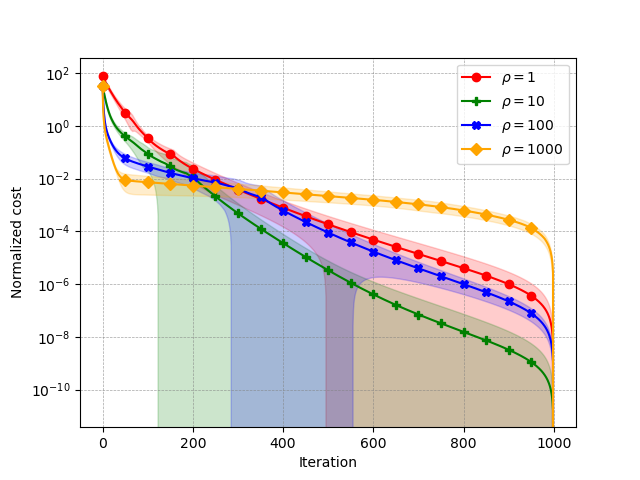}
\end{tabular}
\caption{Behaviour of $J_{\rho}^t - J_\rho^T$ for different $\rho$ and $B = 1$. Left to right and top to bottom: DGC-KM$_\rho$, DGC-HL$_\rho$, DGC-LL$_\rho$ and DGC-FL$_\rho$.}
\label{fig:normalized-cost}
\end{figure}

Next, we study the effects of $\rho$ on accuracy. The setup is the same in the previous experiment and we run the methods for $T = 500$ iterations. The results are presented in Table \ref{tab:rho_acc}. We can see that the accuracy is typically the largest for $\rho = 1$, which can be explained by the fact that for Iris data local datasets seem to be representative of the global data and for $\rho = 1$ the users strike the best balance between finding good clusters and collaborating.

\begin{table}[htp]
\caption{Effect of $\rho$ on accuracy, with $B = 1$.}
\label{tab:rho_acc}
\begin{center}
\begin{small}
\begin{sc}
\begin{tabular}{lcccc}
\toprule
 & $\rho = 1$ & $\rho = 10$ & $\rho = 100$ & $\rho = 1000$ \\
\midrule
DGC-KM$_\rho$ & $91.53 \pm 2.17\%$ & $91.13 \pm 0.85\%$ & $89.73 \pm 0.53\%$ & $91.93 \pm 0.96\%$ \\
DGC-HL$_\rho$ & $91.86 \pm 1.26\%$ & $91.20 \pm 0.78\%$ & $90.00 \pm 0.89\%$ & $91.80 \pm 1.39\%$ \\
DGC-LL$_\rho$ & $90.73 \pm 2.12\%$ & $91.00 \pm 0.45\%$ & $89.00 \pm 0.33\%$ & $90.67 \pm 1.23\%$ \\
\bottomrule
\end{tabular}
\end{sc}
\end{small}
\end{center}
\vskip -0.1in
\end{table}

\subsection{Effects of $B$}\label{subsec:upd}

In this subsection, we test the effects of the number of center updates $B$ on the performance of our algorithms. The first set of experiments, using the same setup as the ones in Section \ref{sec:num}, again aim to verify our theory, by evaluate the cost $J_\rho$ for a fixed $\rho$ and varying values of $B$. The results are presented in Figure \ref{fig:iris-e-cost}. We can see that larger values of $B$ lead to faster decrease in the cost, as predicted in Lemma \ref{lm:decr}. 

\begin{figure}[htp]
\centering
\begin{tabular}{ll}
\includegraphics[width=0.4\columnwidth]{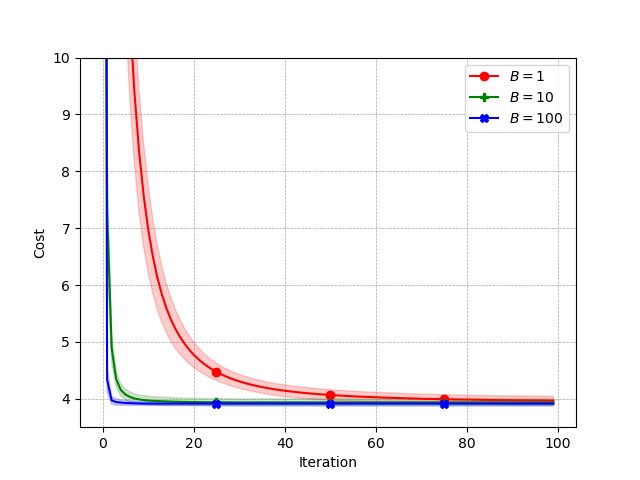}
&
\includegraphics[width=0.4\columnwidth]{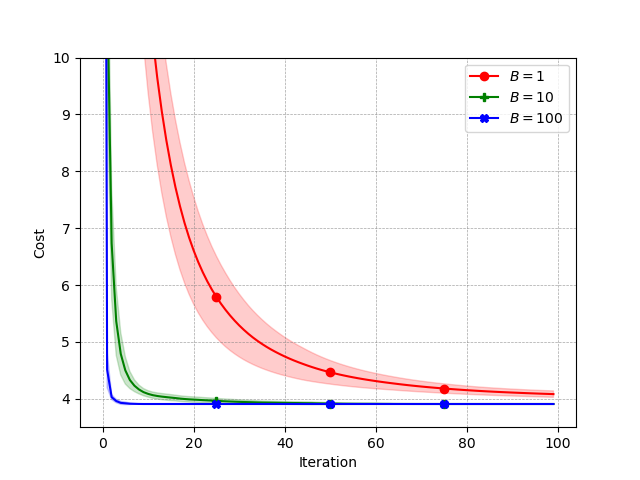}
\\
\includegraphics[width=0.4\columnwidth]{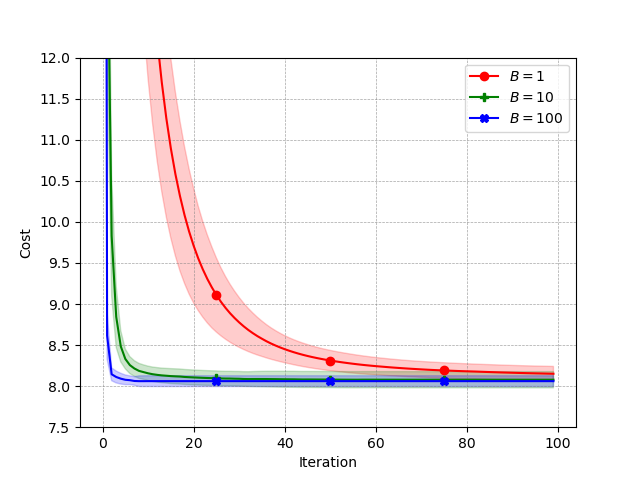}
&
\includegraphics[width=0.4\columnwidth]{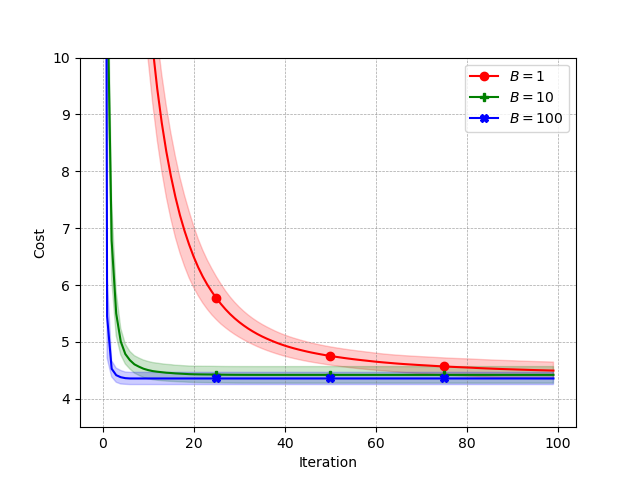}
\end{tabular}
\caption{Behaviour of $J_{\rho}$ for different $B$ and $\rho = 10$. Left to right and top to bottom: DGC-KM$_{10}$, DGC-HL$_{10}$, DGC-LL$_{10}$ and DGC-FL$_{10}$.}
\label{fig:iris-e-cost}
\end{figure}

Next, we study the effects of $B$ on accuracy, wtih $\rho = 10$. The setup is the same as in Subsection \ref{subsec:rho}. The results are presented in Table \ref{tab:e_acc}. We can see that the accuracy is not significantly affected by different values of $B$, with the number of center updates $B$ only affecting convergence speed.  

\begin{table}[htp]
\caption{Effect of $B$ on accuracy, with $\rho = 10$.}
\label{tab:e_acc}
\begin{center}
\begin{small}
\begin{sc}
\begin{tabular}{lccc}
\toprule
 & $B = 1$ & $B = 10$ & $B = 100$ \\
\midrule
DGC-KM$_{10}$ & $90.8 \pm 0.9\%$ & $90.8 \pm 0.9\%$ & $90.6 \pm 0.6\%$ \\
DGC-HL$_{10}$ & $90.6 \pm 0.5\%$ & $90.8 \pm 0.3\%$ & $90.7 \pm 0.2\%$ \\
DGC-LL$_{10}$ & $90.6 \pm 0.6\%$ & $90.7 \pm 0.5\%$ & $90.6 \pm 0.3\%$ \\
\bottomrule
\end{tabular}
\end{sc}
\end{small}
\end{center}
\vskip -0.1in
\end{table}

Finally, as discussed in the main body, e.g., Remarks \ref{rmk:comm1}-\ref{rmk:comm2}, the number of center updates $B$ offers an inherent trade-off between convergence speed and communication/computation cost. To further explore this trade-off and evaluate an optimal choice of $B$ with respect to the total communication cost incurred by our method, we perform an additional experiment on homogeneous Iris data, where we analyze the performance of our methods for different values of $B$ and present the accuracy achieved versus the number of communication rounds. We again consider a simple ring network of $m = 10$ users, with random initialization across users, in the sense that each user chooses $K = 3$ centers uniformly at random from their local data, oblivious to the true underlying cluster structure. We run all the methods for $T = 500$ iterations and perform a total of $10$ different runs. We fix $\rho = 10$ and run our methods with $B \in \{1, 5, 10, 100 \}$ center updates per iteration. Communication cost is computed as the total number of center updates $B$ performed by the method, until the given iteration, i.e., in iteration $t$, our method incurs a total communication cost $Bt$. The results are presented in Figure \ref{fig:iris-e-acc}. The left figure shows accuracy (accounting for label permutation) versus the number of iterations, while the right figure shows accuracy achieved \emph{for a fixed communication budget}. We can see from the first figure that the methods using larger number of center updates $B$ achieve a higher accuracy faster in terms of the number of iterations, with the number of iterations required to reach a certain accuracy increasing as the number of center updates $B$ decreases. This is in line with our discussion in the main body, e.g., Remarks \ref{rmk:comm1}-\ref{rmk:comm2}. However, we can see that the opposite is true from the point of communication cost, in the sense that, given a fixed communication budget, methods that perform less center updates $B$ per iteration in general achieve a higher accuracy for the allocated budget. This is again in line with our discussions and shows the duality of the parameter $B$, in that, if communication cost is not a major concern, larger $B$ should be used to maximize convergence in terms of the number of iterations, while if communication cost is a concern, smaller number of center updates per iteration should be used.
    
    \begin{figure}[htp]
    \centering
    \begin{tabular}{ll}
    \includegraphics[width=0.4\linewidth]{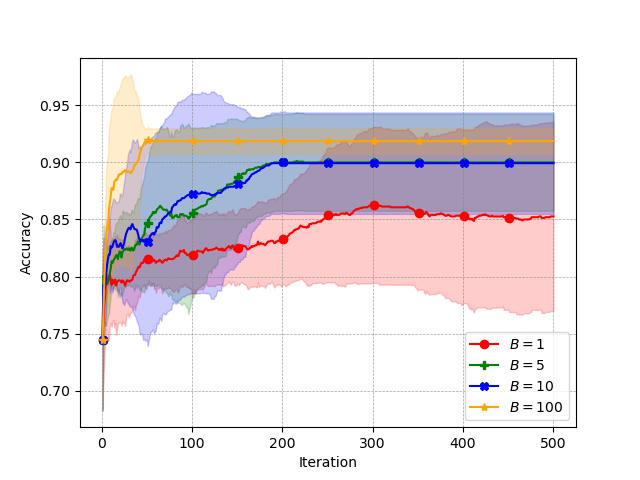}
    &
    \includegraphics[width=0.4\linewidth]{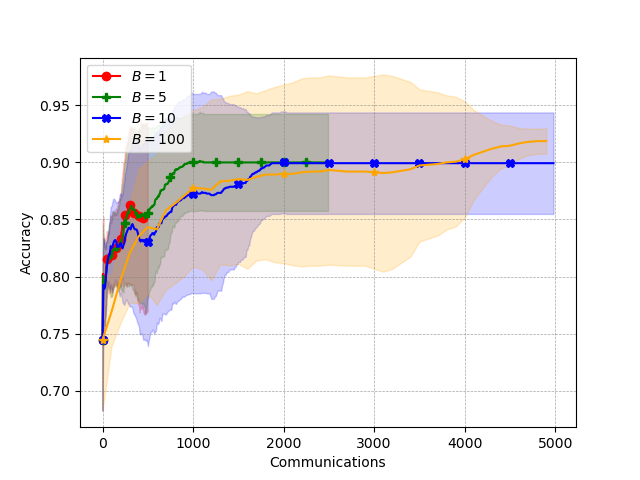}
    \\
    \includegraphics[width=0.4\linewidth]{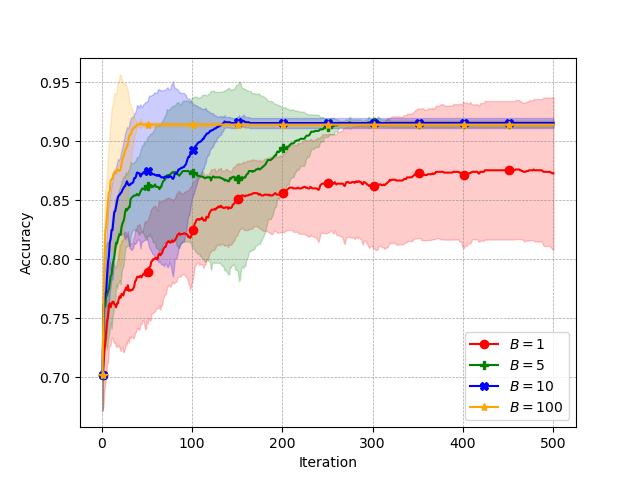}
    &
    \includegraphics[width=0.4\linewidth]{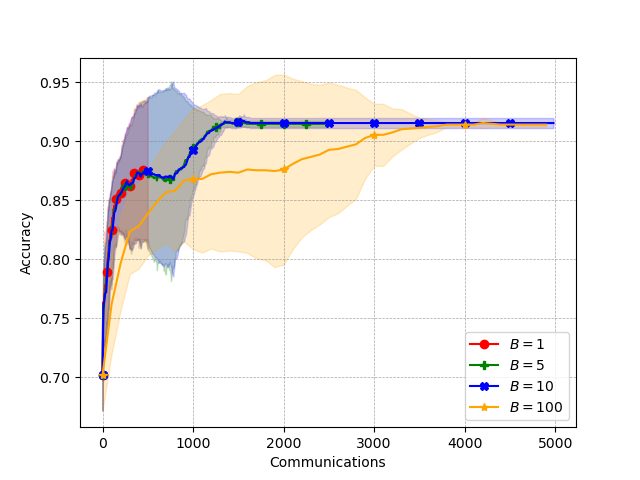}
    \\
    \includegraphics[width=0.4\linewidth]{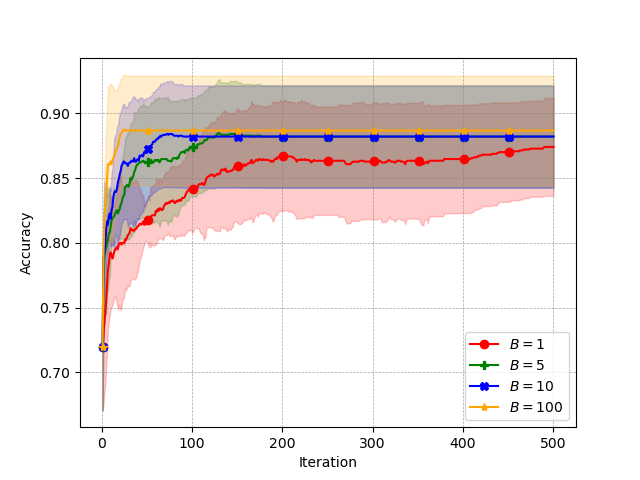}
    &
    \includegraphics[width=0.4\linewidth]{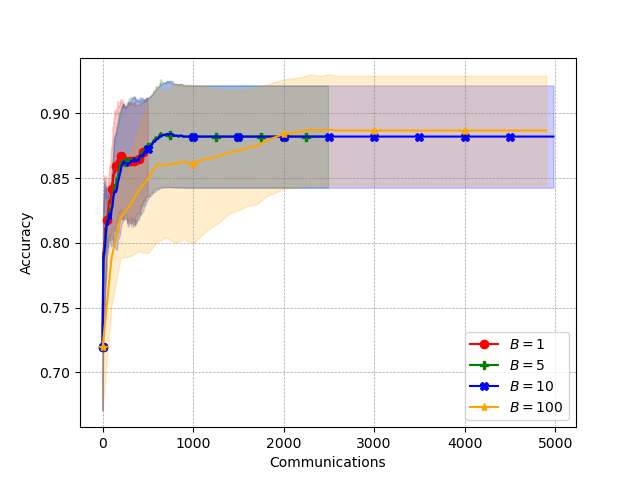}
    \\
    \includegraphics[width=0.4\linewidth]{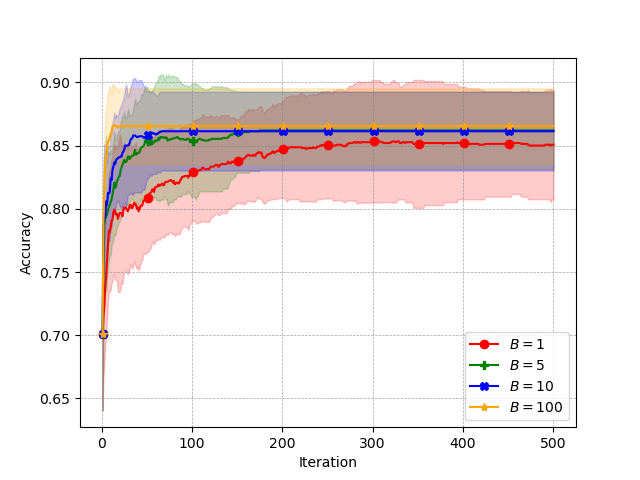}
    &
    \includegraphics[width=0.4\linewidth]{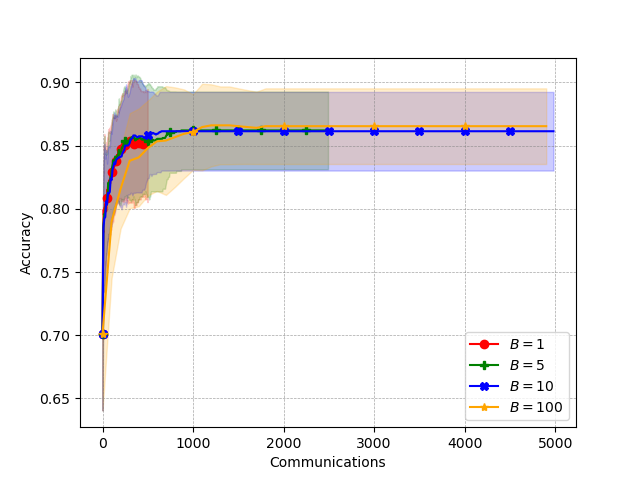}
    \\
    \end{tabular}
    \caption{Accuracy for $\rho = 10$ and different values of $B$. Left to right: accuracy versus number of iterations and accuracy versus number of communications. Top to bottom: performance of DGC-KM, DGC-HL, DGC-LL and DGC-FL.}
    \label{fig:iris-e-acc}
    \end{figure}

\subsection{Effects of $m$}\label{subsec:m}

In this subsection, we test the effects of the number of users $m$ on the performance of our algorithms. To that end, we perform an additional experiment using our DGC-KM method on the Iris dataset, where we vary the number of users from the set $m \in \{10,15,25,30\}$. For each fixed number of users $m$, we split the entire Iris dataset by randomly assigning an equal number of points to each user, without any knowledge of the underlying classes. Communication is done over a ring graph, therefore decreasing the network connectivity as the number of users grows. We set $B = 1$ and fix $\rho = 100$ and the number of iterations to $T = 4.000$, with each user initializing their centers by choosing $K = 3$ local data points uniformly at random. We average the results across five runs, for each fixed number of users. Since the data is split across users randomly, we use the ARI score to measure clustering accuracy. The results are presented in Figure \ref{fig:effect-of-m}. We can clearly see that DGC-KM achieves the same asymptotic accuracy irrespective of the number of users, with the convergence speed decreasing as the number of users increases. This is to be expected, as, recalling the discussion in Remark \ref{rmk:effect-of-m}, asymptotic accuracy depends on the initialization and the dataset itself, whereas poorer network connectivity results in slower convergence of distributed algorithms.

\begin{figure}[htp]
\centering
\includegraphics[width=0.6\textwidth]{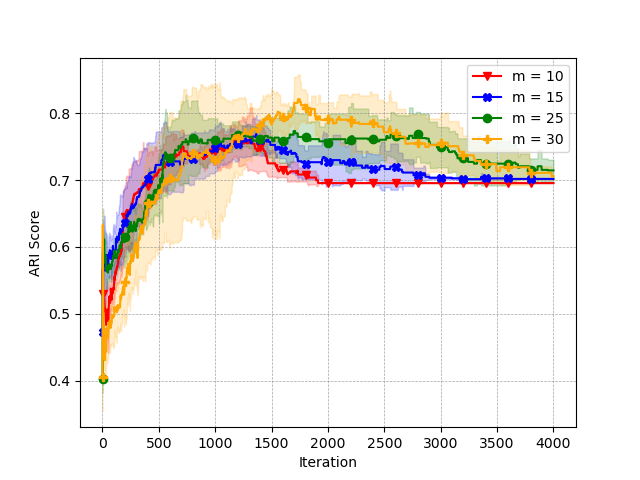}
\caption{ARI score of DGC-KM$_{100}$ for varying number of users.}
\label{fig:effect-of-m}
\end{figure}

\end{document}